\documentclass[preprint,3p,12pt]{elsarticle}
\usepackage[utf8]{inputenc}
\usepackage{times}
\usepackage{tocloft}
\usepackage{graphicx}
\graphicspath{ {images/} }
\usepackage{algorithm}
\usepackage{xcolor}
\usepackage{algpseudocode}
\usepackage{pifont}
\usepackage{pgfplots}
\usepackage{amsmath,amsthm,amssymb}
\usepackage{tikz}
\usetikzlibrary{shapes,arrows,automata,matrix,positioning}
\usepackage[makeroom]{cancel}
\usepackage{caption}
\usepackage{subcaption}
\usepackage{adjustbox,lipsum}
\usepackage{makecell}
\usepackage{float}
\usepackage{amsthm}
\usepackage{thmtools,thm-restate}
\usepackage{hyperref}
\hypersetup{
colorlinks,
citecolor=black,
filecolor=black,
linkcolor=black,
urlcolor=black
}
\newcommand{\quotes}[1]{``#1''}

\newtheorem{theorem}{Theorem}
\newtheorem{lemma}{Lemma}

\newtheorem{corollary}{Corollary}
\newtheorem{definition}{Definition}
\newtheorem{proposition}{Proposition}

\newtheorem{example}{Example}

\newcommand{\VCdim}{\mathrm{VCD}}
\newcommand{\TD}{\mathrm{TD}}
\newcommand{\RTD}{\mathrm{RTD}}

\newcommand{\CPT}{\mathrm{CPT}}

\newcommand{\LB}{\mathrm{LB}}
\newcommand{\VQ}{\mathrm{VQ}}

\title{The Complexity of Learning of Acyclic Conditional Preference Networks}

\author[sa]{Eisa Alanazi\corref{cor1}}
\ead{eaanazi@uqu.edu.sa}
\author[reg]{Malek Mouhoub}
\ead{mouhoub@cs.uregina.ca}
\author[reg]{Sandra Zilles}
\ead{zilles@cs.uregina.ca}

\cortext[cor1]{Corresponding author}

\address[sa]{Department of Computer Science, Umm Al-Qura University, Makkah, Saudi Arabia}
\address[reg]{Department of Computer Science,
University of Regina, Regina, SK, Canada S4S 0A2}

\begin{document}
	\begin{abstract}
		Learning of user preferences, as represented by, for example, Conditional Preference Networks (CP-nets), has become a core issue in AI research. Recent studies investigate learning of CP-nets from randomly chosen examples or from membership and equivalence queries. To assess the optimality of learning algorithms as well as to better understand the combinatorial structure of classes of CP-nets, it is helpful to calculate certain learning-theoretic information complexity parameters. This article focuses on the frequently studied case of learning from so-called swap examples, which express preferences among objects that differ in only one attribute. It presents bounds on or exact values of some well-studied information complexity parameters, namely the VC dimension, the teaching dimension, and the recursive teaching dimension, for classes of acyclic CP-nets. We further provide algorithms that learn tree-structured and general acyclic CP-nets from membership queries. Using our results on complexity parameters, we prove that our algorithms, as well as another query learning algorithm for acyclic CP-nets presented in the literature, are near-optimal.
	\end{abstract}

\maketitle
\section{Introduction}\label{sec:introduction}
Preference learning has become a major branch of AI, with applications in decision support systems in general and in e-commerce in particular~\cite{Fuernkranz2011}. For instance, recommender systems based on collaborative filtering make predictions on a single user's preferences by exploiting information about large groups of users. Another example are intelligent tutoring systems, which learn a student's preferences in order to deliver personalized content to the student.

To design and analyze algorithms for learning preferences of a single user, one needs an abstract model for representing user preferences. Some approaches model preferences quantitatively, thus allowing for expressing the relative magnitude of preferences between object pairs, while others are purely qualitative, expressing partial orders or rankings over objects~\cite{keeney1993decisions,DOMSHLAK20111037}.


Most application domains are of multi-attribute form, meaning that the set of possible alternatives (i.e., objects, or outcomes) is defined on a set of attributes and every alternative corresponds to an assignment of values to the attributes. Such combinatorial domains require compact models to capture the preference information in a structured manner. In recent years, various models have been suggested, such as Generalized Additive Decomposable (GAI-net) utility functions~\cite{AllenSG17}, Lexicographic Preference Trees~\cite{Booth:2010:LCL:1860967.1861021}, and Conditional Preference Networks (CP-nets)~\cite{1-DBLP:journals/jair/BoutilierBDHP04}. 

CP-nets provide a compact qualitative preference representation for multi-attribute domains where the preference of one attribute may depend on the values of other attributes. The study of their learnability \cite{Koriche2010685,DBLP:conf/ijcai/DimopoulosMA09,Lang:2009:CLS:1661445.1661580,10.1109/TKDE.2012.231,www,AllenSG17} is an ongoing topic in research on preference elicitation. 

For example, Koriche and Zanuttini \cite{Koriche2010685} investigated query learning of $k$-bounded acyclic CP-nets (i.e., with a bound $k$ on the number of attributes on which the preferences for any attribute may depend) and provided efficient algorithms using both membership and equivalence queries, cf.\ \cite{Angluin:1988:QCL:639961.639995}. CP-nets have also been studied in models of passive learning from examples, both for batch learning \cite{DBLP:conf/ijcai/DimopoulosMA09,Lang:2009:CLS:1661445.1661580,10.1109/TKDE.2012.231,AllenSG17} and for online learning \cite{www,LaberniaZMYA17}. 

The focus of our work is on the design of methods for learning CP-nets though interaction with the user, and on an analysis of the complexity of such learning problems. In particular, we study the model of learning from membership queries, in which users are asked for information on their preference between two objects. To the best of our knowledge, algorithms for learning CP-nets from membership queries only have not been studied in the literature yet. We argue below (in Section~\ref{sec:perfect}) why algorithms using membership queries alone are of importance to research on preference learning. In a nutshell, membership queries seem to be more easily deployable in preference learning than equivalence queries and, from a theoretical point of view, it is interesting to see how powerful they are in comparison to equivalence queries. The latter alone are known to be insufficient for efficient learning of acyclic CP-nets~\cite{Koriche2010685}. Therefore, one major part of this article deals with learning CP-nets from membership queries only. Note that the importance of learning from queries in general, and from membership queries in particular, is widely recognized in the context of preference elicitation~\cite{BlumJSZ04,BoutilierRV09,LahaieP04,ZinkevichBS03}.


In every formal model of learning, a fundamental question in assessing learning algorithms is how many queries or examples would be needed by the best possible learning algorithm in the given model. For several models, lower bounds can be derived from the Vapnik Chervonenkis dimension ($\VCdim$) \cite{VC71}. This central parameter is one of several that, in addition to yielding bounds on the performance of learning algorithms, provide deep insights into the combinatorial structure of the studied concept class. Such insights can in turn help to design new learning algorithms. 

A classical result states that $\log_2(4/3)d$ is a lower bound on the number of equivalence and membership queries required for learning a concept class whose $\VCdim$ equals $d$~\cite{AuerL99}. Likewise, it is known that a parameter called the teaching dimension~\cite{GK95} is a lower bound on the number of membership queries required for learning~\cite{queriesRevisited}. Therefore, another major part of this article deals with calculating exact values or non-trivial bounds on a number of learning-theoretic complexity parameters, such as the $\VCdim$ and the teaching dimension. All these complexity parameters are calculated under the assumption that information about user preferences is provided for so-called swaps, exclusively. A swap is a pair of objects that differ in the value of only a single attribute. Learning CP-nets over swap examples is an often studied scenario~\cite{Koriche2010685,DBLP:conf/ijcai/DimopoulosMA09,labernia2016query,LaberniaYMA18,LaberniaZMYA17}, which we adopt here for various reasons detailed in Section~\ref{sec:representation}. 

Our main contributions are the following:

(a) We provide the first study that exactly calculates the $\VCdim$ for the class of unbounded acyclic CP-nets, and give a lower bound for any bound $k$. So far, the only existing studies present a lower bound \cite{Koriche2010685}, which we prove incorrect for large values of $k$, and asymptotic complexities \cite{qqq}. The latter show that $\VCdim\in\Theta(2^n)$ for $k=n-1$ and $\VCdim\in\tilde{\Theta}(n2^k)$ when $k\in o(n)$, in agreement with our result that $\VCdim=2^n-1$ for $k=n-1$, and is at least $\VCdim\ge 2^k-1+(n-k)2^k$ for general values of $k$. It should be noted that both previous studies assume that CP-nets can be incomplete, i.e., for some attributes, preference relations may not be fully specified. In our study, we first investigate the (not uncommon) assumption that CP-nets are complete, but then we extend each of our results to the more general case that includes incomplete CP-nets, as well. Further, some of our results are more general than existing ones in that they cover also the case of CP-nets with multi-valued attributes (as opposed to binary attributes.)

(b) We further provide exact values (or, in some cases, non-trivial bounds) for two other important information complexity parameters, namely the teaching dimension \cite{GK95}, and the recursive teaching dimension \cite{ZLHZ11}.

(c) Appendix\ref{sec:struct} gives an in-depth study of structural properties of the class of all complete acyclic CP-nets that are of importance to learning-theoretic studies.

(d) We present a new algorithm that learns tree-structured CP-nets (i.e., the case $k=1$) from membership queries only and use our results on the teaching dimension to show that our algorithm is close to optimal. We further extend our algorithm to deal with the general case of $k$-bounded acyclic CP-nets with bound $k\ge 1$.

 (e) In most real-world scenarios, one would expect some degree of noise in the responses to membership queries, or that sometimes no response at all is obtained. To address this issue, we demonstrate how, under certain assumptions on the noise and the missing responses, our algorithm for learning tree CP-nets can be adapted to handle incomplete or incorrect answers to membership queries.

(f) We re-assess the degree of optimality of Koriche and Zanuttini's algorithm for learning bounded acyclic CP-nets, using our result on the $\VCdim$. 

This article extends a previous conference paper~\cite{AMZ16}. Theorem 4 in this conference paper included an incorrect claim about the so-called self-directed learning complexity of classes of acyclic CP-nets; the incorrect statement has been removed in this extended version.

\section{Related Work}\label{sec:related}

This section sets the present paper into the context of the existing literature.

\subsection{Preference Elicitation with Membership Queries}\label{ssec:pe}

Preference elicitation is the interactive process of gathering information about the preferences of a user of a system, mostly through queries, and is applied, e.g., in recommender systems and combinatorial auctions. The goal of preference elicitation varies from learning a full preference model \cite{ChajewskaKO01,ErculianiDTP18} to learning enough information to make a (near-)optimal recommendation to the user \cite{BlumJSZ04,BoutilierRV09,DeryKRS16}. Learning a full preference model is of practical interest, be it for situations when the user's most preferred items are not available or for the ease of adapting user models when assuming that user preferences change over time. To reduce the user's burden, one major objective in preference elicitation is to keep the number of queries small. This has motivated learning-theoretic studies on the efficient use of queries in preference elicitation  \cite{BlumJSZ04,BoutilierRV09,LahaieP04,ZinkevichBS03}.


One popular type of query used in preference elicitation is the so-called value query, which requests the user to assign a numerical value to an item. Since such queries are in essence equivalent to membership queries~\cite{LahaieP04,BlumJSZ04}, the notion of learning from membership queries is well-studied in preference elicitation. For example, Boutilier at al.~\cite{BoutilierRV09} studied preference elicitation in the presence of user-defined features. They cast the problem of learning defined features as a concept learning problem, which they solved using membership queries. Zinkevich et al.~\cite{ZinkevichBS03} focused on learning preference functions that correspond to read-once formulas over certain kinds of gates and used a classical algorithm for learning read-once formulas~\cite{AngluinHK93} in order to elicit user preferences via value/membership queries alone. Finally, Lahaie and Parkes \cite{LahaieP04} showed that any exact learning algorithm with membership and equivalence queries can be converted to a preference elicitation algorithm with value and demand queries. 

Our work assumes that preferences are represented compactly via CP-nets and our goal is to recover the full preference relation exactly. Since the structure of the network is not known in advance, this problem is non-trivial. We need to devise algorithms that learn both the structure and preference function for every variable in the network. Our work provides efficient (and provably close to optimal) algorithms to recover preferences exactly via membership queries alone and is thus in line with some of the approaches discussed above~\cite{BoutilierRV09,ZinkevichBS03}. In order to prove that our algorithms are close to optimal, we make use of a learning-theoretic parameter called the teaching dimension \cite{GK95}, which we calculate for various classes of CP-nets in Sections \ref{sec:complete} and \ref{sec:incomplete}.

\subsection{Learning CP-Nets}


The problem of learning CP-nets has recently gained a substantial amount of attention \cite{DBLP:conf/ijcai/DimopoulosMA09,DBLP:conf/ijcai/KoricheZ09,Lang:2009:CLS:1661445.1661580,Koriche2010685,qqq,10.1109/TKDE.2012.231,www,Liu20137,DBLP:conf/ijcai/MichaelP13,DBLP:conf/stairs/BigotMZ14,AllenSG17}. 

Both in active and in passive learning, a sub-problem to be solved by many natural learning algorithms is the so-called \emph{consistency problem}. This decision problem is defined as follows. A problem instance consists of a CP-net $N$ and a set $S$ of user preferences between objects, in the form of ``object $o$ is preferred over object $o'$'' or ``object $o$ is not preferred over object $o'$''. The question to be answered is whether or not $N$ is consistent with $S$, i.e., whether the partial order $\succ$ over objects that is induced by $N$ satisfies $o\succ o'$ if $S$ states that $o$ is preferred over $o'$, and satisfies $o\not\succ o'$ if $S$ states that $o$ is not preferred over $o'$. The consistency problem was shown to be NP-hard even if $N$ is restricted to be an acyclic $k$-bounded CP-net for some fixed $k\geq 2$ and even when, for any object pair $(o,o')$ under consideration, the outcomes $o$ and $o'$ differ in the values of at most two attributes \cite{DBLP:conf/ijcai/DimopoulosMA09}. Based on this result, Dimopoulos et al.~\cite{DBLP:conf/ijcai/DimopoulosMA09} showed that complete acyclic CP-nets with bounded indegree are not efficiently PAC-learnable, i.e., learnable in polynomial time in the PAC model. The authors, however, then showed that such CP-nets are efficiently PAC-learnable from examples that are drawn exclusively from the set of so-called transparent entailments. Specifically, this implied that complete acyclic $k$-bounded CP-nets are efficiently PAC-learnable from swap examples. Michael and Papageorgiou \cite{DBLP:conf/ijcai/MichaelP13} then provided a comprehensive experimental view on the performance of the algorithm proposed in \cite{DBLP:conf/ijcai/DimopoulosMA09}. Their work also proposed an efficient method for checking whether a given entailment is transparent or not. These studies focus on learning approximations of target CP-nets passively and from randomly chosen data. By comparison, all algorithms we propose below learn target CP-nets exactly and they actively pose queries in order to collect training data, following Angluin's model of learning from membership queries~\cite{Angluin:1988:QCL:639961.639995}. 

Lang and Mengin \cite{Lang:2009:CLS:1661445.1661580} considered the complexity of learning binary separable CP-nets, in various learning settings.\footnote{A CP-net is separable if it is $0$-bounded, i.e., the preferences over the domain of any attribute are not conditioned on the values of other attributes.} The literature also includes results on learning CP-nets from noisy examples, namely, via statistical hypothesis testing \cite{Liu20137}, using evolutionary algorithms and metaheuristics~\cite{HaqqaniL17,AllenSG17}, or by learning the induced graph directly, which takes time exponential in the number of attributes~\cite{10.1109/TKDE.2012.231}. 
These results cannot be compared to the ones presented in the present paper, as (i) they focus on approximating instead of exactly learning the target CP-net, and (ii) the noise models on which they build differ substantially from the settings we consider. In our first setting, there is no noise in the data whatsoever. The second setting we study is one in which the membership oracle may corrupt a certain number of query responses, but there is no randomness to the process. Instead, one analyzes learning under an adversarial assumption on the oracle's choice of which answers to corrupt, and then investigates whether exact learning is still possible~\cite{Angluin1997,DBLP:journals/tcs/BennetB07,DBLP:journals/jcss/BishtBK08}. To the best of our knowledge, this setting has not been studied in the context of CP-nets so far.

As for active learning, Guerin et al.~\cite{www} proposed a heuristic online algorithm that is not limited to swap comparisons. The algorithm assumes the user to provide explicit answers of the form ``object $o$ is preferred over object $o'$'', ``object $o'$ is preferred over object $o$'', or ``neither of the two objects is preferred over the other'' to any query $(o,o')$. Labernia et al.~\cite{LaberniaZMYA17} proposed another online learning algorithm based on swap observations where the latter can be noisy. It is assumed that the target CP-net represents the global preference for a group of users and the noise is due to variations of a user's preference compared to the global one. The authors formally proved that their algorithm produces a close approximation to the target CP-net and analyzed the algorithm empirically under random noise. Again, as in all the related literature discussed above, the most striking difference to our setting is that these works focus on approximating the target CP-net rather than learning it exactly.

To the best of our knowledge, the only studies of learning CP-nets in Angluin's query model, where the target concept is identified exactly, are one by Koriche and Zanuttini \cite{Koriche2010685} and one by Labernia et al.~\cite{labernia2016query}. Koriche and Zanuttini assumed perfect oracles and investigated the problem of learning complete and incomplete bounded CP-nets from membership and equivalence queries over the swap instance space. They showed that complete acyclic CP-nets are not learnable from equivalence queries alone but are \emph{attribute-efficiently} learnable from membership and equivalence queries. Attribute-efficiency means that the number of queries required is upper-bounded by a function that is polynomial in the size of the input, but only logarithmic in the number of attributes. In the case of tree CP-nets, their results hold true even when the equivalence queries may return non-swap examples. The setting considered in their work is more general than ours and exhibits the power of membership queries when it comes to learning CP-nets. Labernia et al.~\cite{labernia2016query} investigated the problem of learning an average CP-net from multiple users using equivalence queries alone. However, neither study addresses the problem of learning complete acyclic CP-nets from membership queries alone, whether corrupted or uncorrupted. 
Given that learning from membership queries plays an important role in preference elicitation (see Section~\ref{ssec:pe}), our algorithms thus address a gap in the literature. We provide a detailed comparison of our methods to those by Koriche and Zanuttini in Section~\ref{sec:perfect}. In a nutshell, our membership query algorithm improves on theirs in that it does not require equivalence queries, but has the downside of not being attribute-efficient. The latter is not an artefact of our algorithm---we argue in Section~\ref{sec:perfect} why attribute-efficient learning of CP-nets with membership queries alone is not possible.

\subsection{Complexity Parameters in Computational Learning Theory}

The Vapnik Chervonenkis Dimension, also called VC dimension~\cite{VC71}, is one of the best studied complexity parameters in the computational learning theory literature. Upper and/or lower sample complexity bounds that are linear in the VC dimension are known for various popular models of concept learning, namely for PAC learning, which is a basic model of learning from randomly chosen examples~\cite{Hanneke16}, for exact learning from equivalence and membership queries, which is a model of learning from active queries~\cite{AuerL99}, and, in some special cases even for learning from teachers~\cite{DFSZ14,SimonZ15}.

Because of these bounds, knowledge of the VC dimension value of a concept class $\mathcal{C}$ can help in assessing learning algorithms for $\mathcal{C}$. For example, if the number of queries consumed by an algorithm learning $\mathcal{C}$ exceeds a known lower bound on the query complexity by a constant factor, we know that the algorithm is within a constant factor of optimal. For this reason, the VC dimension of classes of CP-nets have been studied in the literature. Koriche and Zanuttini \cite{Koriche2010685}, for the purpose of analyzing their algorithms for learning from equivalence and membership queries, established a lower bound on the VC dimension of the class of complete and incomplete $k$-bounded binary CP-nets. Chevaleyre et al.~\cite{qqq} gave asymptotic estimates on the VC dimension of classes of CP-nets. They showed that the VC dimension of acyclic binary CP-nets is $\Theta(2^n)$ for arbitrary CP-nets and $\tilde{\Theta}(n2^k)$ for $k$-bounded CP-nets. Here $n$ is the number of attributes in a CP-net. The results by Chevaleyre et al.\ are in agreement with our results, stating that $\VCdim$ is $2^n-1$ for arbitrary acyclic CP-nets and at least $(n-k)2^k+2^k-1$ for $k$-bounded ones. 

Our work improves on both of these contributions. Firstly, we correct a mistake in the lower bound published by Koriche and Zanuttini. Secondly, compared to asymptotic studies by Chevaleyre et al., we calculate exact values or explicit lower bounds on the VC dimension. Thirdly, we calculate the VC dimension also for the case that the attributes in a CP-net have more than two values.

To the best of our knowledge, there are no other studies that address learning-theoretic complexity parameters of CP-nets. The present paper computes two more parameters, namely the teaching dimension~\cite{GK95} and the recursive teaching dimension~\cite{ZLHZ11}, both of which refer to the complexity of machine teaching. Recently, models of machine teaching have received increased attention in the machine learning community~\cite{Zhu15,ZhuSZR18}, since they try to capture the idea of helpfully selected training data as would be expected in many human-centric applications. In our study, teaching complexity parameters are of relevance for two reasons.

First, the teaching dimension is a lower bound on the number of membership queries required for learning~\cite{GK95}, und thus a tool for evaluating the efficiency of our learning algorithms relative to the theoretic optimum.

Second, due to the increased interest in machine teaching, the machine learning community is looking for bounds on the efficiency of teaching, in terms of other well-studied parameters. The recursive teaching dimension is the first teaching complexity parameter that was shown to be closely related to the VC dimension. It is known to be at most quadratic in the VC dimension~\cite{HuWLW17}, and under certain structural properties even equal to the VC dimension~\cite{DFSZ14}. However, it remains open whether or not it is upper-bounded by a function linear in the VC dimension~\cite{SimonZ15}. With the class of all unbounded acyclic CP-nets, we provide the first example of an ``interesting'' concept class for which the VC dimension and the recursive teaching dimension are equal, provably without satisfying any of the known structural properties that would imply such equality. Thus, our study of the recursive teaching dimension of classes of CP-nets may be of help to ongoing learning-theoretic studies of teaching complexity in general.

\section{Background}
\label{sec:background}

This section introduces the terminology and notation used subsequently, and motivates the formal settings studied in the rest of the paper.

\subsection{Conditional Preference Networks (CP-nets)}

We largely follow the notation introduced by Boutilier et al.~\cite{1-DBLP:journals/jair/BoutilierBDHP04} in their seminal work on CP-nets; the reader is referred to Table~\ref{table:notation} for a list of the most important notation used throughout our manuscript.

\begin{table}
\centering
\begin{small}
\begin{tabular}{|l|l|}
\hline
notation&meaning\\\hline\hline
$n$&number of variables\\\hline
$m$ &size of the domain of each variable\\\hline
$k$&upper bound on the number of parents of a variable in a CP-net\\\hline
$V$&set of $n$ distinct variables, $V=\{v_1,v_2,\dots,v_n\}$\\\hline
$v_i$&variable in $V$, for $1\le i\le n$\\\hline
$D_{v_i}$&domain $\{v^i_1,v^i_2,\dots,v^i_m\}$ of variable $v_i\in V$\\\hline
$\mathcal{O}_{U}$ for $U\subseteq V$&set of vectors (outcomes over $U$) that assign each $v_i\in U$ a value in $D_{v_i}$\\\hline
$\mathcal{O}$&set of all outcomes over the full variable set $V$; equal to $\mathcal{O}_V$\\\hline
$o$&outcome over the full variable set $V$, i.e., an element of $\mathcal{O}$\\\hline
$o\succ o'$&outcome $o\in\mathcal{O}$ is strictly preferred over outcome $o'\in\mathcal{O}$\\\hline
$o[U]$&projection of outcome $o\in\mathcal{O}$ onto a set $U\subseteq V$; $o[v_i]$ is short for $o[\{v_i\}]$\\\hline
$x (=(o,o'))$&swap pair of outcomes\\\hline
$V(x)$&swapped variable of $x$\\\hline
$x.1$ and $x.2$&$o$ and $o'$ respectively, where $x=(o,o')$ is a swap\\\hline
$x[\Gamma]$&projection of $x.1$ (and also of $x.2$) onto a set $\Gamma\subseteq V\setminus\{V(x)\}$\\\hline
$Pa(v_i)$&set of the parent variables of $v_i$; note that $Pa(v_i)\subseteq V\backslash\{v_i\}$\\\hline
$\succ^{v_i}_{u}$&conditional preference relation of ${v_i}$ in the context of $u$, where $u\in \mathcal{O}_{Pa(v_i)}$\\\hline
$\CPT(v_i)$&conditional preference table of $v_i$\\\hline
$size(\CPT(v_i))$&size (number of preference statements) of $\CPT(v_i)$; note $size(\CPT(v_i)) \le m^{|Pa(v_i)|}$\\\hline
$E$&set of edges in a CP-net, where $(v_i,v_j)\in E$ iff $v_i\in Pa(v_j)$\\\hline
$\mathcal{C}$&a concept class\\\hline
$c$&a concept in a concept class\\\hline
$\mathcal{X}$&instance space over which a concept class is defined\\\hline
$\mathcal{X}_{swap}$&instance space of swap examples (without redundancies)\\\hline
$\overline{\mathcal{X}}_{swap}$&instance space of swap examples (with redundancies)\\\hline
$c(x)$&label that concept $c$ assigns to instance $x$\\\hline
$\VCdim(\mathcal{C})$&VC dimension of concept class $\mathcal{C}$\\\hline
$\TD(\mathcal{C})$&teaching dimension of concept class $\mathcal{C}$\\\hline
$\TD(c,\mathcal{C})$&teaching dimension of concept $c$ with respect to concept class $\mathcal{C}$\\\hline
$\RTD(\mathcal{C})$&recursive teaching dimension of concept class $\mathcal{C}$\\\hline
$\mathcal{C}^k_{ac}$&class of all complete acyclic $k$-bounded CP-nets, over $\mathcal{X}_{swap}$\\\hline
$\overline{\mathcal{C}}^k_{ac}$&class of all complete or incomplete acyclic $k$-bounded CP-nets, over $\overline{\mathcal{X}}_{swap}$\\\hline
$e_{max}$&$(n-k)k+\binom{k}{2}\ (\le nk)$; maximum number of edges in a CP-net in $\mathcal{C}^k_{ac}$ or $\overline{\mathcal{C}}^k_{ac}$\\\hline
$\mathcal{M}_k$&$(n-k)m^k+\frac{m^k-1}{m-1}$; maximum number of statements in a CP-net in $\mathcal{C}^k_{ac}$ or $\overline{\mathcal{C}}^k_{ac}$\\\hline
$\mathcal{U}_k$&smallest possible size of an $(m,n-1,k)$-universal set\\\hline
LIM&strategy to combat a limited oracle\\\hline
MAL&strategy to combat a malicious oracle\\\hline
$F^1(x)$&set of all swap instances differing from $x$ in exactly one non-swapped variable\\\hline
\end{tabular}
\end{small}
\caption{Summary of notation.}\label{table:notation}
\end{table}

Let $V=\{v_1,v_2,\dots,v_n\}$ be a set of attributes or variables. Each variable $v_i\in V$ has a set of possible values (its domain) $D_{v_i}=\{v^i_1,v^i_2,\dots,v^i_m\}$. We assume that every domain $D_{v_i}$ is of a fixed size $m\geq 2$, independent of $i$. An assignment $x$ to a set of variables $U\subseteq V$ is a mapping for every variable $v_i\in U$ to a value from $D_{v_i}$. We denote the set of all assignments of $U\subseteq V$ by $\mathcal{O}_{U}$ and remove the subscript when $U=V$. A preference is an irreflexive, transitive binary relation $\succ$. For any $o,o'\in \mathcal{O}$, we write $o\succ o'$ (resp. $o\nsucc o'$) to denote the fact that $o$ is strictly preferred (resp. not preferred) to $o'$, where $o$ and $o'$ are incomparable w.r.t.~$\succ$ if both $o\nsucc o'$ and $o'\nsucc o$ holds. We use $o[U]$ to denote the projection of $o$ onto $U\subset V$ and write $o[v_i]$ instead of $o[\{v_i\}]$.

The CP-net model captures complex qualitative preference statements in a graphical way. Informally, a CP-net is a set of statements of the form $\gamma:v^i_{\sigma(1)}\succ\ldots\succ v^i_{\sigma(m)}$ which states that the preference over $v_i$ with $D_{v_i}=\{v^i_1,v^i_2,\dots,v^i_m\}$ is conditioned upon the assignment of $\Gamma\subseteq V\backslash\{v_i\}$, where $\sigma$ is some permutation over $\{1,\ldots,m\}$. In particular, when $\Gamma$ has the value $\gamma\in \mathcal{O}_\Gamma$ and $s<t$, $v^i_{\sigma(s)}$ is preferred to $v^i_{\sigma(t)}$ as a value of $v_i$ \emph{ceteris paribus} (all other things being equal). That is, for any two outcomes $o,o'\in \mathcal{O}$ where $o[v_i]=v^i_{\sigma(s)}$ and $o'[v_i]=v^i_{\sigma(t)}$ the preference holds when i) $o[\Gamma]=o'[\Gamma]=\gamma$ and ii) $o[Z]=o'[Z]$ for $Z=V\backslash (\Gamma\cup\{v_i\})$. In such case, we say $o$ is preferred to $o'$ ceteris paribus. Clearly, there could be exponentially many pairs of outcomes ($o$,$o'$) that are affected by one such statement. 


CP-nets provide a compact representation of preferences over $\mathcal{O}$ by providing such statements for every variable. For every $v_i\in V$, the decision maker\footnote{This can be any entity in charge of constructing the preference network, i.e., a computer agent, a person, a group of people, etc.} chooses a set $Pa(v_i)\subseteq V\backslash\{v_i\}$ of parent variables that influence the preference order of ${v_i}$. For any $\gamma\in \mathcal{O}_{Pa(v_i)}$, the decision maker may choose to specify a total order $\succ^{v_i}_{\gamma}$ over $D_{v_i}$. We refer to $\succ^{v_i}_{\gamma}$ as the conditional preference statement of ${v_i}$ in the context of $\gamma$. A Conditional Preference Table for $v_i$, $\CPT(v_i)$, is a set of conditional preference statements $\{\succ^{v_i}_{\gamma_1},\ldots,\succ^{v_i}_{\gamma_z}\}$.

\begin{definition}[CP-net \cite{1-DBLP:journals/jair/BoutilierBDHP04}]
Given, $V$, $Pa(v)$, and $\CPT(v)$ for $v\in V$, a CP-net is a directed graph $(V,E)$, where, for any $v_i,v_j\in V$, $(v_i,v_j)\in E$ iff $v_i\in Pa(v_j)$. 
\label{def:cpnets}
\end{definition}

A CP-net is acyclic if it contains no cycles. We call a CP-net $k$-bounded, for some $k\le n-1$, if each vertex has indegree at most $k$, i.e., each variable has a parent set of size at most $k$. CP-nets that are $0$-bounded are also called separable; those that are $1$-bounded are called tree CP-nets. When speaking about the class of ``unbounded'' acyclic CP-nets, we refer to the case when no upper bound is given on the indegree of nodes in a CP-net, other than the trivial bound $k=n-1$. 

\begin{definition}\label{def:complete}
$\CPT(v_i)$ is said to be complete, if, for every element $\gamma\in\mathcal{O}_{Pa(v_i)}$, the preference relation $\succ^{v_i}_\gamma$ is defined, i.e., $\CPT(v_i)$ contains a statement that imposes a total order on $D_{v_i}$ for every context $\gamma$ over the parent variables. By contrast, $\CPT(v_i)$ is incomplete, if there exists some $\gamma\in\mathcal{O}_{Pa(v_i)}$ for which the preference relation $\succ^{v_i}_\gamma$ is empty. Analogously, a CP-net $N$ is said to be complete if every $\CPT$ it poses is complete; otherwise it is incomplete.
\end{definition}

Note that we do not allow strictly partial orders as preference statements; a preference relation in a $\CPT$ must either be empty or impose a total order. In the case of binary CP-nets, which is the focus of the majority of the literature on CP-nets, this restriction is irrelevant, since every order on a domain of two elements is either empty or total. In the non-binary case though, the requirement that every $\CPT$ statement be either empty or a total order is a proper restriction. 

It would be possible to also study non-binary CP-nets that are incomplete in the sense that some $\CPT$ statements impose proper partial orders, but this extension is not discussed below.

Lastly, we assume CP-nets are defined in their minimal form, i.e., there is no \emph{dummy} parent in any $\CPT$ that actually does not affect the preference relation.


\begin{example}
Figure \ref{fig:example1network} shows a complete acyclic CP-net over $V=\{A,B,C\}$ with $D_A=\{a,\bar{a}\}$, $D_B=\{b,\bar{b}\}$, $D_C=\{c,\bar{c}\}$. Each variable is annotated with its $\CPT$. For variable $A$, the user prefers $a$ to $\bar{a}$ unconditionally. For $C$, the preference depends on the values of $A$ and $B$, i.e., $Pa(C)=\{A,B\}$. For instance, in the context of $a\bar{b}$, $\bar{c}$ is preferred over $c$. Removing any of the four statements in $\CPT(C)$ would result in an incomplete CP-net.
\label{example1}
\end{example}
Two outcomes $o,\hat{o}\in \mathcal{O}$ are swap outcomes (`swaps' for short) if they differ in the value of exactly one variable $v_i$; then $v_i$ is called the swapped variable \cite{1-DBLP:journals/jair/BoutilierBDHP04}. 

The size of a preference table for a variable $v_i$, denoted by $size(\CPT(v_i))$, is the number of preference statements it holds which is $m^{|Pa(v_i)|}$ if $\CPT(v_i)$ is complete. The size of a CP-net $N$ is defined as the sum of its tables' sizes.\footnote{It might seem more adequate to define the size of a $\CPT$ to be $(m-1)$ times the number of its preference statements, as each preference statement consists of $m-1$ pairwise preferences. In the binary case, i.e., when $m=2$, this makes no difference. As this technical detail does not affect our results, we ignore it and define the size of a $\CPT$ and of a CP-net simply by the overall number of its statements.}

\begin{example}
In Figure \ref{fig:example1}, $abc,\bar{a}bc$ are swaps over the swapped variable $A$. The size of the CP-net is $1+2+4=7$.
\end{example}

We will frequently use the notation $\mathcal{M}_k=\max\{size(N)\mid N\mbox{ is a }k\mbox{-bounded acyclic CP-net}\}$, which refers to the maximum number of statements in any $k$-bounded acyclic CP-net over $n$ variables, each of domain size $m$. Note that a CP-net has to be complete in order to attain this maximum size. It can be verified that $\mathcal{M}_k=(n-k)m^k+\frac{m^k-1}{m-1}$.

\begin{lemma}\label{lem:Mk}
The maximum possible size $\mathcal{M}_k$ of a $k$-bounded acyclic CP-net over $n$ variables of domain size $m$ is given by $\mathcal{M}_k=(n-k)m^k+\frac{m^k-1}{m-1}$.
\end{lemma}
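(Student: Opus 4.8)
My plan is to reduce the statement to a clean combinatorial optimization over indegree sequences of acyclic digraphs. Since the maximum size is attained only by a complete CP-net, and a complete $\CPT(v_i)$ has size exactly $m^{|Pa(v_i)|}$, the size of any complete $k$-bounded acyclic CP-net is
\begin{equation*}
size(N)=\sum_{i=1}^{n} size(\CPT(v_i))=\sum_{i=1}^{n} m^{|Pa(v_i)|}.
\end{equation*}
This quantity depends only on the multiset of indegrees $|Pa(v_i)|$ of the underlying graph $(V,E)$, so the task becomes: maximize $\sum_i m^{d_i}$ over all indegree sequences $(d_1,\dots,d_n)$ realizable by an acyclic digraph on $n$ nodes in which every indegree is at most $k$.

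The key observation, and the step that encodes the only real constraint, is that acyclicity forbids giving all $n$ variables a full complement of $k$ parents. To make this precise I would fix a topological order of $(V,E)$ and index the variables $v_{(1)},\dots,v_{(n)}$ accordingly. Every parent of $v_{(j)}$ must precede it in this order, so $|Pa(v_{(j)})|\le j-1$; combined with the $k$-bound this yields $|Pa(v_{(j)})|\le \min(j-1,k)$. Because $x\mapsto m^{x}$ is strictly increasing for $m\ge 2$, each summand obeys $m^{|Pa(v_{(j)})|}\le m^{\min(j-1,k)}$, and hence
\begin{equation*}
size(N)\le \sum_{j=1}^{n} m^{\min(j-1,k)}.
\end{equation*}
This upper bound is tight: ordering the variables as $v_1,\dots,v_n$ and choosing, for each $j$, an arbitrary set $Pa(v_j)$ of $\min(j-1,k)$ variables from $\{v_1,\dots,v_{j-1}\}$ produces a digraph whose edges all go from lower to higher index (hence acyclic), which is $k$-bounded, and which can be completed to a CP-net attaining $\min(j-1,k)$ parents at each node.

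It then remains to evaluate the sum in closed form. Splitting at the index where the minimum switches, the first $k+1$ terms ($j=1,\dots,k+1$) give the geometric sum $\sum_{i=0}^{k} m^{i}=\frac{m^{k+1}-1}{m-1}$, while the remaining $n-k-1$ terms each equal $m^{k}$, contributing $(n-k-1)m^{k}$. Adding these and using the identity $m^{k}+\frac{m^{k}-1}{m-1}=\frac{m^{k+1}-1}{m-1}$ rewrites the total as $(n-k)m^{k}+\frac{m^{k}-1}{m-1}$, which is exactly $\mathcal{M}_k$. I expect the only genuine obstacle to be the tightness/realizability argument together with the observation that incomplete CP-nets cannot exceed complete ones (each incomplete $\CPT$ has strictly fewer statements), so that restricting attention to complete CP-nets in the first display loses nothing; the rest is the routine topological-ordering bound and a short geometric-series computation.
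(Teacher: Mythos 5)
Your proof is correct, and it takes a noticeably different logical route from the paper's, even though both hinge on a topological sort. The paper argues by exchange/augmentation on an \emph{optimal} net: it claims any maximum-size $k$-bounded acyclic CP-net must have exactly one variable of indegree $r$ for each $r\in\{0,\dots,k-1\}$ and $n-k$ variables of indegree $k$ (if an early variable in the topological order had deficient indegree, one could add a parent and grow the corresponding $\CPT$ by a factor of $m$), and then counts statements for that specific profile. You instead prove a universal inequality valid for \emph{every} net --- $size(N)\le\sum_{j=1}^{n} m^{\min(j-1,k)}$, from the constraint $|Pa(v_{(j)})|\le\min(j-1,k)$ and monotonicity of $x\mapsto m^x$ --- and then exhibit a witness attaining it; your geometric-series evaluation $(n-k-1)m^k+\frac{m^{k+1}-1}{m-1}=(n-k)m^k+\frac{m^k-1}{m-1}$ checks out, as does the reduction to complete CP-nets. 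Your route is shorter and arguably more robust: it avoids the paper's stronger structural characterization of maximizers and the implicit assumption in the exchange step that an added edge can always be made a genuine (non-dummy) parent. Both proofs share one small realizability point you gloss over with ``can be completed'': under the paper's minimal-form convention, the constructed complete $\CPT$s must make every listed parent actually affect the preference order, which is easy to arrange for $m\ge 2$ (e.g., reverse the order over $D_{v_i}$ according to the parity of the sum of the parents' value indices). What the paper's approach buys in exchange for its extra work is the exact indegree profile of all maximum-size nets, a fact it reuses implicitly when discussing maximal concepts later; what yours buys is a cleaner bound-plus-witness argument whose correctness is easier to verify.
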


\begin{proof}
We first make the following claim: any $k$-bounded acyclic CP-net of largest possible size has (i) exactly $1$ variable of indegree $r$, for any $r\in\{0,\ldots,k-1\}$ and (ii)~exactly $n-k$ variables of indegree $k$.

For $k=0$, i.e., for separable CP-nets, there is no $r\in\{0,\ldots,k-1\}$, so the claim states the existence of exactly $n$ vertices of indegree $0$, which is obviously correct. Consider any $k$-bounded acyclic CP-net $N$ of largest possible size, where $k\ge 1$. Since $N$ is acyclic, it has a topological sort. W.l.o.g., suppose $(v_1,\ldots,v_n)$ is the sequence of variables in $N$ as they occur in a topological sort. Clearly, $v_1$ must have indegree $0$. If $v_2$ were also of indegree $0$, then $N$ would not be of maximal size since one could add $v_1$ as a parent of $v_2$ without violating the indegree bound $k$. The resulting CP-net would be of larger size than $N$, since the size of $\CPT(v_2)$ would grow by a factor of $m$ without changing the sizes of other CPTs. Hence $v_2$ has indegree $1$ in $N$. With the same argument, one can prove that $v_i$ has indegree $i-1$ in $N$, for $1\le i\le k+1$. For the variables $v_{k+2},\ldots,v_n$, one can apply the same argument but has to cap their indegrees at $k$ because $N$ is $k$-bounded. Hence $v_{k+1},\ldots,v_n$ all have indegree $k$. This establishes the claim.

It remains to count the maximal number of statements in a CP-net $N$ of this specific structure. The maximal number of statements for a given CP-net graph is obviously obtained when the CP-net is complete, i.e., when the CPT for any variable $v$ has $m^{|Pa(v)|}$ rules. Summing up, we obtain $\sum_{i=0}^{k-1}m^i=\frac{m^k-1}{m-1}$ statements for the first $k$ variables in the topological sort, plus $(n-k)m^k$ statements for the remaining $n-k$ variables.
\end{proof}

From this lemma, we also know that the maximum possible number of edges in a $k$-bounded acyclic CP-net is $(n-k)k+\sum_{i=0}^{k-1}i=(n-k)k+\binom{k}{2}$. We will use the notation $e_{max}$ to refer to this quantity. 

\begin{definition}\label{def:emax}
For given $n\ge 1$ and $k\in\{0,\ldots,n-1\}$, let $e_{max}=(n-k)k+\binom{k}{2}$ denote the maximum possible number of edges in a $k$-bounded acyclic CP-net over $n$ variables.
\end{definition}

Note that $e_{max}\le nk$.

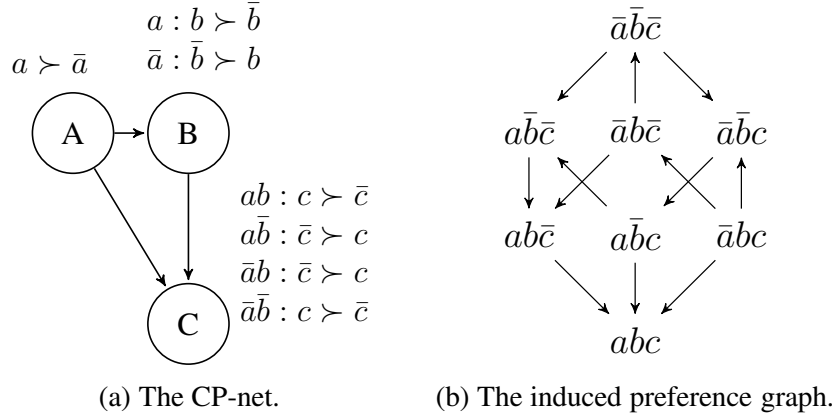
\begin{figure}
\centering
\begin{subfigure}[b]{0.35\textwidth}
\centering
\resizebox{.9\textwidth}{!}{
\begin{tikzpicture}[->,>=stealth',shorten >=1pt,auto,node distance=1.5cm,semithick,scale=1]
\node[state,] (A){A};
\node[state,] (B)[right of=A]{B};
\node[state] (C) at (1.5,-2.5){C};

\node (bCPT) at(1.7,1.2){\makecell[l]{$a:b\succ \bar{b}$\\$\bar{a}:\bar{b}\succ b$}};
\node (aCPT) at(-0.3,.9){$a\succ \bar{a}$};
\node (cCPT) at(3.0,-1.6){\makecell[l]{$ ab: c\succ \bar{c}$\\$a\bar{b}: \bar{c}\succ c$\\
		$\bar{a}b: \bar{c}\succ c$\\ $\bar{a}\bar{b}: c\succ \bar{c}$\\}};

\path (A) edge[->] (C);
\path (A) edge[->] (B);
\path (B) edge[->] (C);
\end{tikzpicture}
}
\caption{The CP-net.}
\label{fig:example1network}
\end{subfigure}
\begin{subfigure}[b]{.35\textwidth}
\centering
\resizebox{.7\textwidth}{!}{
\begin{tikzpicture}[->,>=stealth',auto,node distance=1.2cm]
\node (2) {$a\bar{b}\bar{c}$};
\node (3) [right of=2]{$\bar{a}b\bar{c}$};
\node (4) [right of=3] {$\bar{a}\bar{b}c$};
\node (5) [below of=2] {$ab\bar{c}$};
\node (6) [right of=5]{$a\bar{b}c$};
\node (7) [right of=6]{$\bar{a}bc$};
\node (8) [below of=6]{$abc$};
\node (1) [above of=3]{$\bar{a}\bar{b}\bar{c}$};

\path (1) edge (2) edge (4)
(2) edge (5) 
(3) edge (5) edge (1)
(4) edge (6) 
(5) edge (8)
(6) edge (2) edge (8)
(7) edge (3) edge (4) edge (8);

\end{tikzpicture}
}
\caption{The induced preference graph.}
\label{fig:graph}
\end{subfigure}
\caption{An acyclic CP-net (cf.~Def. \ref{def:cpnets}) and its induced preference graph (cf.~Def. \ref{def:inducedGraph}).}
\label{fig:example1}
\end{figure}


The semantics of CP-nets is described in terms of improving flips. Let $\gamma\in \mathcal{O}_{Pa(v_i)}$ be an assignment of the parents for a variable $v_i\in V$. Let $\succ^{v_i}_\gamma=v^i_1\succ \ldots\succ v^i_m$ be the preference order of $v_i$ in the context of $\gamma$. Then, all else being equal, going from $v^i_j$ to $v^i_k$ is an improving flip over $v_i$ whenever $k<j\leq m$. 

\begin{example}
In Figure \ref{fig:example1network}, $(a\bar{b}c,abc)$ is an improving flip with respect to the variable $B$. 
\end{example}

For complete CP-nets, the improving flip notion makes every pair $(o,\hat{o})$ of swap outcomes comparable, i.e., either $o\succ \hat{o}$ or $\hat{o}\succ o$ holds \cite{1-DBLP:journals/jair/BoutilierBDHP04}. The question \quotes{is $o\succ \hat{o}$?} is then a special case of a so-called dominance query and can be answered directly from the preference table of the swapped variable. Let $v_i$ be the swapped variable of a swap $(o,\hat{o})$. Let $\gamma$ be the context of $Pa(v_i)$ in both $o$ and $\hat{o}$. Then, $o\succ \hat{o}$ iff $o[v_i]\succ^{v_i}_\gamma \hat{o}[v_i]$. 
A general dominance query is of the form: given two outcomes $o,\hat{o}\in \mathcal{O}$, is $o\succ \hat{o}$? The answer is yes, iff $o$ is preferred to $\hat{o}$, i.e., there is a sequence $(\lambda_1,\ldots,\lambda_n)$ of improving flips from $\hat{o}$ to $o$, where $\hat{o}=\lambda_1$, $o=\lambda_n$, and $(\lambda_i,\lambda_{i+1})$ is an improving flip for all $i\in\{1,\dots,n-1\}$ \cite{1-DBLP:journals/jair/BoutilierBDHP04}. 

\begin{example}
In Figure \ref{fig:graph}, $abc\succ \bar{a}\bar{b}c$, as witnessed by the sequence $\bar{a}\bar{b}c\rightarrow a\bar{b}c\rightarrow abc$ of improving flips.
\end{example}



\begin{definition}[Induced Preference Graph~\cite{1-DBLP:journals/jair/BoutilierBDHP04}]
The induced preference graph of a CP-net $N$ is a directed graph $G$ where each vertex represents an outcome $o\in \mathcal{O}$. An edge from $\hat{o}$ to ${o}$ exists iff $(o,\hat{o})\in\mathcal{O}\times \mathcal{O}$ is a swap w.r.t.\ some $v_i\in V$ and ${o}[v_i]$ precedes $\hat{o}[v_i]$ in $\succ^{v_i}_{o[Pa(v_i)]}$.
\label{def:inducedGraph}
\end{definition}
 Therefore, a CP-net $N$ defines a partial order $\succ$ over $\mathcal{O}$ that is given by the transitive closure of its induced preference graph. If $o\succ \hat{o}$ we say $N$ entails $(o,\hat{o})$. $N$ is consistent if there is no $o\in \mathcal{O}$ with $o\succ o$, i.e., if its induced preference graph is acyclic. Acyclic CP-nets are guaranteed to be consistent while such guarantee does not exist for cyclic CP-nets; the consistency of the latter depends on the actual values of the $\CPT$s \cite{1-DBLP:journals/jair/BoutilierBDHP04}. Lastly, the complexity of finding the best outcome in an acyclic CP-net has been shown to be linear \cite{1-DBLP:journals/jair/BoutilierBDHP04} while the complexity of answering dominance queries depends on the structure of CP-nets: PSPACE-complete for arbitrary (cyclic and acyclic) consistent CP-nets \cite{3-DBLP:journals/jair/GoldsmithLTW08} and linear in case of trees \cite{DBLP:conf/uai/BigotZFM13}. 

\begin{example}
Figure \ref{fig:cyclicConsistent} shows an example of a cyclic CP-net that is consistent while Figure \ref{fig:cyclicInconsistent} shows an inconsistent one. Note that both share the same $\CPT$s except for $\CPT(C)$. The dotted edges in the induced preference graph of Figure \ref{fig:cyclicInconsistent} represent a cycle.
\end{example}

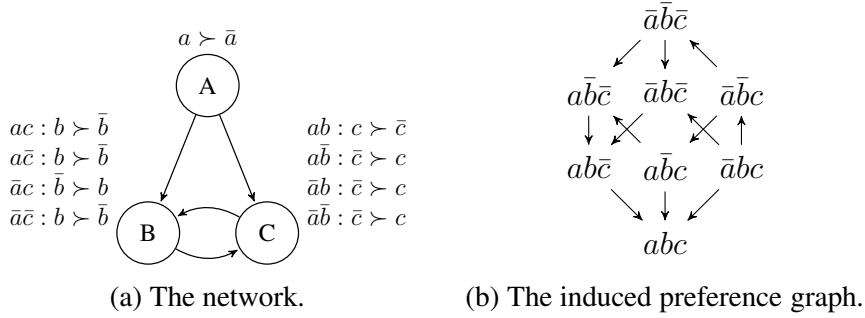
\begin{figure}
\centering
\begin{subfigure}[b]{.34\textwidth}
\centering
\resizebox{\textwidth}{!}{
\begin{tikzpicture}[->,>=stealth',shorten >=1pt,auto,node distance=1.5cm,semithick,scale=1]
\node[state,] (A){A};
\node[state,] (B)at (-1,-2.5){B};
\node[state] (C) at (1,-2.5){C};

\node (aCPT) at(0,.8){$a\succ \bar{a}$};
\node (bCPT) at(-2.5,-1.5){\makecell[l]{$a c: b\succ \bar{b}$\\
		$a \bar{c}: b\succ \bar{b}$\\$\bar{a} c: \bar{b}\succ b$\\$\bar{a} \bar{c}: b\succ \bar{b}$}};
\node (cCPT) at(2.5,-1.5){\makecell[l]{$a b: c\succ \bar{c}$\\
		$a \bar{b}: \bar{c}\succ c$\\$\bar{a} b: \bar{c}\succ c$\\$\bar{a} \bar{b}: \bar{c}\succ c$}};

\path (A) edge[->] (C)
(A) edge[->] (B)
(B) edge[bend right,->] (C)
(C) edge[bend right,->] (B);
\end{tikzpicture}
}
\caption{The network.}
\label{fig:ccnetwork}
\end{subfigure}\quad
\begin{subfigure}[b]{.34\textwidth}
\centering
\resizebox{.55\textwidth}{!}{
\begin{tikzpicture}
[->,>=stealth',auto,node distance=1cm]
\node (2) {$a\bar{b}\bar{c}$};
\node (3) [right of=2]{$\bar{a}b\bar{c}$};
\node (4) [right of=3] {$\bar{a}\bar{b}c$};
\node (5) [below of=2] {$ab\bar{c}$};
\node (6) [right of=5]{$a\bar{b}c$};
\node (7) [right of=6]{$\bar{a}bc$};
\node (8) [below of=6]{$abc$};
\node (1) [above of=3]{$\bar{a}\bar{b}\bar{c}$};

\path (1) edge (2)  edge (3) 
(2) edge (5)
(3) edge (5) 
(4) edge (1) edge (6) 
(5) edge (8)
(6) edge (2) edge (8)
(7) edge (3) edge (4) edge (8);
\end{tikzpicture}
}
\caption{The induced preference graph.}
\label{fig:cyclicConsistentGraph}
\end{subfigure}
\caption{An example of a \emph{consistent} cyclic CP-net.}
\label{fig:cyclicConsistent}
\end{figure}


\begin{figure}
\centering
\begin{subfigure}[b]{.34\textwidth}
\centering
\resizebox{\textwidth}{!}{
\begin{tikzpicture}[->,>=stealth',shorten >=1pt,auto,node distance=1.5cm,semithick,scale=1]
\node[state,] (A){A};
\node[state,] (B)at (-1,-2.5){B};
\node[state] (C) at (1,-2.5){C};

\node (aCPT) at(0,.8){$a\succ \bar{a}$};
\node (bCPT) at(-2.5,-1.5){\makecell[l]{$a c: b\succ \bar{b}$\\
		$a \bar{c}: b\succ \bar{b}$\\$\bar{a} c: \bar{b}\succ b$\\$\bar{a} \bar{c}: b\succ \bar{b}$}};
\node (cCPT) at(2.3,-1.5){\makecell[l]{$b: c\succ \bar{c}$\\
		$\bar{b}: \bar{c}\succ c$}};

\path 
(A) edge[->] (B)
(B) edge[bend right,->] (C)
(C) edge[bend right,->] (B);
\end{tikzpicture}
}
\caption{The network.}
\end{subfigure}\quad
\begin{subfigure}[b]{.34\textwidth}
\centering
\resizebox{.55\textwidth}{!}{
\begin{tikzpicture}
[->,>=stealth',auto,node distance=1cm]
\node (2) {$a\bar{b}\bar{c}$};
\node (3) [right of=2]{$\bar{a}b\bar{c}$};
\node (4) [right of=3] {$\bar{a}\bar{b}c$};
\node (5) [below of=2] {$ab\bar{c}$};
\node (6) [right of=5]{$a\bar{b}c$};
\node (7) [right of=6]{$\bar{a}bc$};
\node (8) [below of=6]{$abc$};
\node (1) [above of=3]{$\bar{a}\bar{b}\bar{c}$};

\path (1) edge (2)  edge[blue!50!black,dotted] (3) 
(2) edge (5)
(3) edge (5) edge[blue!50!black,dotted] (7)
(4) edge[blue!50!black,dotted] (1) edge (6) 
(5) edge (8)
(6) edge (2) edge (8)
(7)  edge[blue!50!black,dotted] (4) edge (8);
\end{tikzpicture}
}
\caption{The induced preference graph.}
\label{fig:cyclicInconsistentGraph}
\end{subfigure}
\caption{An example of an \emph{inconsistent} cyclic CP-net.}
\label{fig:cyclicInconsistent}
\end{figure}
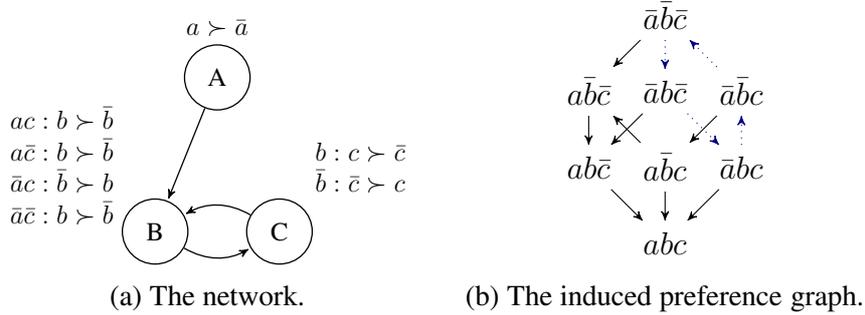

\subsection{Concept Learning}

The first part of our study is concerned with determining---for the case of acyclic CP-nets---the values of information complexity parameters that are typically studied in computational learning theory. By information complexity, we mean the complexity in terms of the amount of information a learning algorithm needs to identify a CP-net. Examples of such complexity notions will be introduced below.
 
A specific complexity notion corresponds to a specific formal model of machine learning. Each such learning model assumes that there is an information source that supplies the learning algorithm with information about a hidden target concept $c^*$. The latter is a member of a concept class, which is simply the class of potential target concepts, and, in the context of this paper, also the class of hypotheses that the learning algorithm can formulate in the attempt to identify the target concept $c^*$. 

Formally, one fixes a finite set $\mathcal{X}$, called instance space, which contains all possible instances (i.e., elements of) an underlying domain. A concept $c$ is then defined as a mapping from $\mathcal{X}$ to $\{0,1\}$. Equivalently, $c$ can be seen as the set $c=\{x\in \mathcal{X} \mid c(x)=1\}$, i.e., a subset of the instance space.  A \emph{concept class} $\mathcal{C}$ is a set of concepts. Within the scope of our study, the information source (sometimes called oracle), supplies the learning algorithm in some way or another with a set of \emph{labeled examples}\/ for the target concept $c^*\in C$. A labeled example for $c^*$ is a pair $(x,b)\in \mathcal{X}\times \{0,1\}$ where  $x\in \mathcal{X}$ and $b=c(x)$. Under the set interpretation of concepts, this means that $b=1$ if and only if the instance $x$ belongs to the concept $c^*$. A concept $c$ is consistent with a set $S\subseteq X\times\{0,1\}$ of labeled examples, if and only if $c(x)=b$ for all $(x,b)\in S$, i.e., if every element of $S$ is an example for $c$.

In practice, a concept is usually \emph{encoded} by a representation $\sigma(c)$ defined based on a representation class $\mathcal{R}$ \cite{Kearns:1994:ICL:200548}. Thus, one usually has some fixed representation class $\mathcal{R}$ in mind, with a one-to-one correspondence between the concept class $C$ and its representation class $R$. We will assume in what follows that the representation class is chosen in a way that minimizes the worst case size of the representation of any concept in $C$. Generally, there may be various interpretations of the term ``size;'' since we will focus on learning CP-nets, we use CP-nets as representations for concepts, and the size of a representation is simply the size of the corresponding CP-net as defined above.  

At the onset of a learning process, both the oracle and the learning algorithm (often called \emph{learner}\/ for short) agree on the representation class $\mathcal{R}$ (and thus also on the concept class $C$), but only the oracle knows the target concept $c^*$. After some period of communication with the oracle, the learner is required to identify the target concept $c^*$ either exactly or approximately.

Many learning models have been proposed to deal with different learning settings \cite{Kearns:1994:ICL:200548,Angluin:1988:QCL:639961.639995,Littlestone88,V84}. These models typically differ in the constraints they impose on the oracle and the learning goal. One also distinguishes between learners that actively query the oracle for specific information content and learners that passively receive a set of examples chosen solely by the information source. One of the best known passive learning models is the \emph{Probably Approximately Correct } (PAC) model \cite{V84}. The PAC model is concerned with finding, with high probability, a close approximation to the target concept $c^*$ from randomly chosen examples. The examples are assumed to be sampled independently from an unknown distribution. On the other end of the spectrum, a model that requires exact identification of $c^*$ is Angluin's model for learning from queries \cite{Angluin:1988:QCL:639961.639995}. In this model, the learner actively poses queries of a certain type to the oracle. 

In this paper, we consider specifically two types of queries introduced by Angluin~\cite{Angluin:1988:QCL:639961.639995}, namely membership queries and equivalence queries. A membership query is specified by an element $x\in\mathcal{X}$ of the instance space, and it represents the question whether or not $c^*$ contains $x$. The oracle supplies the learner with the correct answer, i.e., it provides the label $c^*(x)$ in response to the membership query for $x$. In an equivalence query, the learner specifies a hypothesis $c$. If $c=c^*$, the learning process is completed as the learner has then identified the target concept. If $c\ne c^*$, the learner is provided with a labeled example $(x,c^*(x))$ that witnesses $c\ne c^*$. That means, $c^*(x)\ne c(x)$. Note that $x$ in this case can be any element in the symmetric difference of the sets associated with $c$ and $c^*$. 

A class $\mathcal{C}\subseteq 2^\mathcal{X}$ over some instance space $\mathcal{X}$ is learnable from membership and/or equivalence queries via a representation class $\mathcal{R}$ for $\mathcal{C}$, if there is an algorithm $\mathcal{A}$ such that for every concept $c^*\in \mathcal{C}$, $\mathcal{A}$ asks polynomially many adaptive membership and/or equivalence queries and then outputs a hypothesis $h$ that is equivalent to $c^*$. By adaptivity, we here mean that learning proceeds in rounds; in every round the learner asks a single query and receives an answer from the oracle before deciding on its subsequent query. The number of queries to be polynomial means that it is upper-bounded by a polynomial in $size(c^*)$ where $size(c^*)$ is the size of the minimal representation of $c^*$ w.r.t.~$\mathcal{R}$.

The above definition is concerned only with the information or query complexity, i.e., the number of queries required to exactly identify any target concept. Moreover, $\mathcal{C}$ is said to be efficiently learnable from membership and/or equivalence queries if there exists an algorithm $\mathcal{A}$ that exactly learns $\mathcal{C}$, in the above sense, and runs in time polynomial in $size(c^*)$. Every one of the query strategies we describe in Section \ref{sec:perfect} gives an obvious polynomial time algorithm in this regard, and thus we will not explicitly mention run-time efficiency of learning algorithms henceforth.

The combinatorial structure of a concept class $\mathcal{C}$ has implications on the complexity of learning $\mathcal{C}$, in particular on the sample complexity (sometimes called information complexity), which refers to the number of labeled examples the learner needs in order to identify any target concept in the class under the constraints of a given learning model. One of the most important complexity parameters studied in machine learning is the Vapnik-Chervonenkis dimension (VCD). In what follows, let $\mathcal{C}$ be a concept class over the (finite) instance space $\mathcal{X}$.

\begin{definition}\cite{VC71} A subset $Y\subseteq\mathcal{X}$ is shattered by $\mathcal{C}$ if the projection of $\mathcal{C}$ onto $Y$ has $2^{|Y|}$ concepts. The VC dimension of $\mathcal{C}$, denoted by $\VCdim(\mathcal{C})$, is the size of the largest subset of $\mathcal{X}$ that is shattered by $\mathcal{C}$.
\end{definition}

For example, if $\mathcal{X}$ contains 5 elements and $\mathcal{C}^5_3$ is the class of all subsets of $X$ that have size at most 3, then the VC dimension of  $\mathcal{C}^5_3$ is 3. Clearly, no subset $Y$ of $\mathcal{X}$ of size 4 can be shattered by $\mathcal{C}^5_3$, since no concept in $\mathcal{C}^5_3$ would contains all 4 elements of $Y$. That means, one obtains only 15, not the full 16 possible concepts  over $Y$ when projecting $\mathcal{C}^5_3$ onto $Y$. However, any subset $Y'\subset X$ of size 3 is indeed shattered by $\mathcal{C}^5_3$, as every subset of $Y'$ is also a concept in $\mathcal{C}^5_3$.

The number of randomly chosen examples needed to identify concepts from $\mathcal{C}$ in the PAC-learning model is linear in $\VCdim(\mathcal{C})$~\cite{BEHW89,Hanneke16}. By contrast to learning from random examples, in teaching models, the learner is provided with well-chosen labeled examples.

\begin{definition}\cite{GK95,SM91} A teaching set for a concept $c^*\in\mathcal{C}$ with respect to $\mathcal{C}$ is a set $S=\{(x_1,\ell_1),\ldots,(x_z,\ell_z)\}$ of labeled examples such that $c^*$ is the only concept $c\in\mathcal{C}$ that satisfies $c(x_i)=\ell_i$ for all $i\in\{1,\ldots,z\}$. The teaching dimension of $c$ with respect to $\mathcal{C}$, denoted by $\TD(c,\mathcal{C})$, is the size of the smallest teaching set for $c$ with respect to $\mathcal{C}$. The teaching dimension of $\mathcal{C}$, denoted by $\TD(\mathcal{C})$, is given by $\TD(\mathcal{C})=\max\{\TD(c,\mathcal{C})\mid c\in\mathcal{C}\}$.
\end{definition}

Consider again the class $\mathcal{C}^5_3$ of all subsets of size at most 3 over a 5-element instance space. Any concept $c$ containing 3 instances has a teaching set of size 3 in this class: the three positively labeled examples referring to the elements contained in $c$ uniquely determine $c$. However, concepts with fewer than 3 elements do not have teaching sets smaller than 5, since any set consisting of 2 positive and 2 negative examples agrees with at least two different concepts in $\mathcal{C}^5_3$, and so does every set of 1 positive and 3 negative examples and every set of 4 negative examples.

$\TD_{min}(\mathcal{C})=\min\{\TD(c,\mathcal{C})\mid c\in\mathcal{C}\}$ denotes the smallest TD of any $c\in\mathcal{C}$. In the class $\mathcal{C}^5_3$, the value for $\TD$ is 5, while the value for $\TD_{min}$ is 3.

A well-studied variation of teaching is called recursive teaching. Its complexity parameter, the recursive teaching dimension, is defined by recursively removing from $\mathcal{C}$ all the concepts with the smallest TD and then taking the maximum over the smallest TDs encountered in that process. For the corresponding definition of teachers, see~\cite{ZLHZ11}.

\begin{definition}\cite{ZLHZ11} Let $\mathcal{C}_0=\mathcal{C}$ and, for all $i$ such that $\mathcal{C}_i\ne\emptyset$, define $\mathcal{C}_{i+1}=\mathcal{C}_i\setminus\{c\in\mathcal{C}_i\mid\TD(c,\mathcal{C}_i)=\TD_{min}(\mathcal{C}_i)\}$. The recursive teaching dimension of $\mathcal{C}$, denoted by $\RTD(\mathcal{C})$, is defined by $\RTD(\mathcal{C})=\max\{\TD_{min}(\mathcal{C}_i)\mid i\ge 0\}$.
\label{def:RTD}
\end{definition}

As an example, consider $\mathcal{C}'=\{c_1,c_2,\dots,c_t\}$ to be the class of singletons defined over the instance space $\mathcal{X}=\{x_1,x_2,\dots,x_t\}$ where $c_i=\{x_i\}$ and let $\mathcal{C}=\mathcal{C}'\cup\{c_0\}$, where $c_0$ is the empty concept, i.e.,  $c_0(x)=0$ for all $x\in \mathcal{X}$. Table \ref{table:singletonsClass} displays this class along with $\TD(c,\mathcal{C})$ for every $c\in \mathcal{C}$. Since distinguishing the concept $c_0$ from all other concepts in $\mathcal{C}$ requires $t$ labeled examples, one obtains $\TD(\mathcal{C})=t$. However, $\RTD(\mathcal{C})=1$ as witnessed by $\mathcal{C}_0=\mathcal{C}'$ (each concept in $\mathcal{C}'$ can be taught with a single example) and $\mathcal{C}_1=\{c_0\}$ (the remaining concept $c_0$ has a teaching dimension of $0$ with respect to the class containing only $c_0$). Note that also $\VCdim(\mathcal{C})=1$, since there is no set of two examples that is shattered by $\mathcal{C}$.

Similarly, one can verify that $\RTD(\mathcal{C}^5_3)=3$.

\begin{table}
	\centering
	\caption{The class $\mathcal{C}$ of all singletons and the empty concept over a set of $t$ instances, along with the teaching dimension value of each individual concept.}
	\begin{tabular}{|c||c|c|c|c|c|c||c|}
		\hline
	$\mathcal{C}$& $x_1$ & $x_2$ & $x_3$ & $x_4$ &$\dots$ & $x_t$ & $\TD$\\\hline
	\hline
	$c_0$ 			  &		0	 &		0	 &    0     &   0 &   $\dots$      &    0   & $t$ \\
	\hline
	$c_1$ 			  &		1	 &		0	 &    0     &   0 &$\dots$      &    0     & 1\\
	\hline
	$c_2$ 			  &		0	 &		1	 &    0     &   0 &$\dots$      &    0     & 1\\
	\hline
	$c_3$ 			  &		0	 &		0	 &    1     &   0 &$\dots$      &    0     & 1\\
	\hline
	$\vdots$ 	     &		$\vdots$	 &		$\vdots$	 &   $\vdots$& $\vdots$    &      $\vdots$      &    $\vdots$     & $\vdots$ \\
	\hline
	$c_t$ 			  &		0	 &		0	 &    0     &      0& $\dots$      &    1    & 1 \\
	\hline
	\end{tabular}
	\label{table:singletonsClass}
\end{table}

As opposed to the TD, the RTD exhibits interesting relationships to the VCD. For example, if $\mathcal{C}$ is a maximum class, i.e., its size $|\mathcal{C}|$ meets Sauer's upper bound $\binom{|\mathcal{X}|}{0}+\binom{|\mathcal{X}|}{1}+\ldots+\binom{|\mathcal{X}|}{\VCdim(\mathcal{C})}$ \cite{Sau72}, and in addition $\mathcal{C}$ can be ``corner-peeled''\footnote{Corner peeling is a sample compression procedure introduced by Rubinstein and Rubinstein~\cite{RR12}; the actual algorithm or its purpose are not of immediate relevance to our paper.}, then $\mathcal{C}$  fulfills $\RTD(\mathcal{C})=\VCdim(\mathcal{C})$~\cite{DFSZ14}. The same equality holds if $\mathcal{C}$ is intersection-closed or has VCD 1~\cite{DFSZ14}. In general, the RTD is upper-bounded by a function quadratic in the VCD~\cite{HuWLW17}.

\section{Representing CP-Nets as Concepts}
\label{sec:representation}

Assuming a user's preferences are captured in a target CP-net $N^*$, an interesting learning problem is to identify $N^*$ from a set of observations representing the user's preferences, i.e., labeled examples, of the form $o\succ o'$ or $o \nsucc o'$ where $\succ$ is the relation induced by $N^*$~\cite{DBLP:conf/ijcai/DimopoulosMA09,Koriche2010685}. In order to study the complexity of learning CP-nets, we model a class of CP-nets as a concept class over a fixed instance space. 

The first issue to address is how to define the instance space. A natural approach would be to consider any pair $(o,o')$ of outcomes as an instance. Such instance would be contained in the concept corresponding to $N^*$ if and only if $N^*$ entails $o\succ o'$. In our study however, we restrict the instance space to the set of all swaps.

On the one hand, note that our results, due to the restriction to swaps, do not apply to scenarios in which preferences are elicited over arbitrary outcome pairs, as is likely the case in many real-world applications. 

On the other hand, for various reasons, the restriction to swaps is still of both theoretical and practical interest and therefore justified. First, CP-net semantics are completely determined by the preference relation over swaps, so that no information on the preference order is lost by restricting to swaps. In particular, the set of all swaps is the most compact instance space for representing the class of all CP-nets or the class of all acyclic CP-nets. Second, many studies in the literature address learning CP-nets from information on swaps, see~\cite{Koriche2010685,DBLP:conf/ijcai/DimopoulosMA09,labernia2016query,LaberniaYMA18,LaberniaZMYA17}, so that our results can be compared to existing ones on the swap instance space. Third, in the learning models that we consider (teaching and learning from membership queries) learning becomes harder when restricting the instance space. For example, a learning algorithm may potentially succeed faster when it is allowed to enquire about preferences over any pair of outcomes rather than just swaps. Since our study restricts the information presented to the learner to preferences over swaps, our complexity results thus serve as upper bounds on the complexity of learning in more relaxed settings. Fourth, for the design of learning methods, it is often desirable that the learner can check whether its hypothesis (in our case a CP-net) is consistent with the labeled examples obtained. This can be done in linear time for swap examples but is NP-hard for non-swap examples \cite{1-DBLP:journals/jair/BoutilierBDHP04}. Fifth, there are potential application scenarios in which the information presented to a learner may be in the form of preferences over swaps. This is due to the intuition that in many cases preferences over swaps would be much easier to elicit than preferences over two arbitrary outcomes. For example, a user may be overwhelmed with the question whether to prefer one laptop over another if each of them has a nice feature that the other does not have. It is likely easier for the user to express a preference over two laptops that are identical except in a single feature. 

One may argue that the VC dimension should be computed over arbitrary instances rather than just swap instances, since it captures how difficult a concept class is to learn when the choice of instances is out of the learner's (or teacher's) control. However, for the following two reasons, computing the VC dimension over swap instances is of importance to our study:
\begin{itemize}
\item We use our calculations on the VC dimension in order to assess the optimality of one of Koriche and Zanuttini's algorithms \cite{Koriche2010685} for learning acyclic CP-nets with nodes of bounded indegree from equivalence and membership queries. It is well-known that $\log_2(4/3)\VCdim(\mathcal{C})$ is a lower bound on the number of membership and equivalence queries required for learning a concept class $\mathcal{C}$~\cite{AuerL99}. Since Koriche and Zanuttini's algorithm that we assess is designed over the swap instance space, an optimality assessment using the VC dimension necessarily requires that the VC dimension be computed over the swap instance space as well. 
\item Although the VC dimension is best known for characterizing the sample complexity of learning from randomly chosen examples, namely in the model of PAC learning, existing results exhibit a broader scope of applicability of the VC dimension. Recently it was shown that the number of examples needed for learning from benevolent teachers can be upper-bounded by a function quadratic in the VC dimension~\cite{HuWLW17}. When studying the number of swap examples required for teaching CP-nets, thus again also the VC dimension over swap examples becomes interesting.
\end{itemize}

We therefore consider the set $\{(o,o')\in\mathcal{O}\times\mathcal{O}\mid (o,o')$ is a swap$\}$ as an instance space. The size of this instance space is $nm^n(m-1)$: every variable has $m^{n-1}$ different assignments of the other variables and fixing each assignment of these we have $m(m-1)$ instances. For complete acyclic CP-nets, however, half of these instances are redundant as if $c((o,o'))=0$ then we know for certain that $c((o',o))=1$, and vice versa. By contrast, in the case of incomplete CP-nets, $c((o,o'))=0$ does not necessarily mean $c((o',o))=1$ as there could be no relation between the two outcomes, i.e., $o$ and $o'$ are incomparable, corresponding to both $c((o,o'))=0$ and $c((o',o))=0$. 

Consequently, the choice of instance space in our study will be as follows:
\begin{itemize}
\item Whenever we study classes of CP-nets that contain incomplete CP-nets, we use the instance space $\overline{\mathcal{X}}_{swap}=\{(o,o')\in\mathcal{O}\times\mathcal{O}\mid (o,o')$ is a swap$\}$.
\item Whenever we study classes of only complete CP-nets, we use an instance space $\mathcal{X}_{swap}\subset \overline{\mathcal{X}}_{swap}$ that includes, for any two swap outcomes $o,o'$, exactly one of the two pairs $(o,o'),(o',o)$.\footnote{All our results are independent on the mechanism choosing which of two pairs $(o,o'),(o',o)$ to include in $\mathcal{X}_{swap}$. We assume that the selection is prescribed in some arbitrary fashion.} Note that $|\mathcal{X}_{swap}|=nm^{n-1}\binom{m}{2}=\frac{m^nn(m-1)}{2}$.  When we say that a learning algorithm, either passively or through active queries, is given information on the label of a swap $(o,o')$ under the target concept, we implicitly refer to information on either $(o,o')$ or $(o',o)$, depending on which of the two is actually contained in $\mathcal{X}_{swap}$.
\end{itemize}

We sometimes refer to $\mathcal{X}_{swap}$ as the set of all swaps without ``redundancies'', since for the case of complete CP-nets half the instances in $\overline{\mathcal{X}}_{swap}$ are redundant. Of course, for incomplete CP-nets they are not redundant.


For $x=(o,o')\in \overline{\mathcal{X}}_{swap}$, let $V(x)$ denote the swapped variable of $x$. We refer to the first and second outcomes of an example $x$ as $x.1$ and $x.2$, respectively. We use $x[\Gamma]$ to denote the assignments (in both $x.1$ and $x.2$) of $\Gamma\subseteq V\backslash\{V(x)\}$. Note that $x[\Gamma]$ is guaranteed to be the same in $x.1$ and $x.2$, otherwise $x$ will not form a swap instance.

Now if $N$ is any CP-net and induces the (partial) order $\succ$ over outcomes, then $N$ corresponds to a concept $c_N$ over $\overline{\mathcal{X}}_{swap}$ (over $\mathcal{X}_{swap}$, respectively), where $c_N$ is defined as follows, for any $x\in \overline{\mathcal{X}}_{swap}$ (any $x\in \mathcal{X}_{swap}$, respectively.)

\begin{equation*}
c(x)=
\begin{cases}
1 & \text{if}\ x.1\succ x.2 \\
0 & \text{otherwise}
\end{cases}
\end{equation*}

In such case, we say that $c$ is represented by $N$. Since no two distinct CP-nets induce exactly the same set of swap entailments, a concept over the instance space $\overline{\mathcal{X}}_{swap}$ cannot be represented by more than one CP-net, and a concept over the instance space $\mathcal{X}_{swap}$ cannot be represented by more than one complete CP-net. Therefore, in the context of a specific instance space, we identify a CP-net $N$ with the concept $c_N$ it represents and use these two notions interchangeably. 

Consequently, we say that a concept $c$ contains a swap pair $x$ iff the CP-net representing $c$ entails $(x.1,x.2)$. By $size(c)$, we refer to the size of the CP-net that represents $c$. 

Table \ref{tab:swapInstanceExample} shows two concepts $c_1$ and $c_2$ that correspond to the complete CP-nets shown in Figures \ref{fig:example1} and \ref{fig:cyclicConsistent}, respectively, along with one choice of $\mathcal{X}_{swap}$. It is important to restate the fact that $c(x)$ is actually a dominance relation between $x.1$ and $x.2$, i.e., $c(x)$ is mapped to $1$ (resp. to $0$) if $x.1\succ x.2$ (resp. $x.2\succ x.1$) holds. Thus, we sometimes talk about the value of $c(x)$ in terms of the relation between $x.1$ and $x.2$ ($x.1\succ x.2$ or $x.2\succ x.1$). 

\begin{table}

\begin{adjustbox}{max width=\textwidth}
\begin{tabular}{ l || c | c | c | c || c | c | c | c || c | c | c | c |}
$\mathcal{X}_{swap}$&
$(abc,\bar{a}bc)$ &
$(ab\bar{c},\bar{a}b\bar{c})$ &
$(a\bar{b}c,\bar{a}\bar{b}c)$ & 
$(a\bar{b}\bar{c},\bar{a}\bar{b}\bar{c})$ &

$(abc,a\bar{b}c)$ &
$(ab\bar{c},a\bar{b}\bar{c})$ &
$(\bar{a}bc,\bar{a}\bar{b}c)$ &
$(\bar{a}b\bar{c},\bar{a}\bar{b}\bar{c})$ &

$(abc,ab\bar{c})$ & 
$(a\bar{b}c,a\bar{b}\bar{c})$ & 
$(\bar{a}bc,\bar{a}b\bar{c})$ & 
$(\bar{a}\bar{b}c,\bar{a}\bar{b}\bar{c})$\\
\hline
$c_1$ & 1& 1 &	1 &	1 &	1 &	1 &	0 &	0 &	1	& 0 & 0 & 1 \\
\hline 
$c_2$ & 1& 1 &	1 &	1 &	1 &	1 &	0 &	1 &	1	& 0 & 0 & 0 \\
\hline 
\end{tabular}
\end{adjustbox}
\caption{ The concepts $c_1$ and $c_2$ represent the CP-nets in Figures \ref{fig:example1} and \ref{fig:cyclicConsistent}, respectively, over $\mathcal{X}_{swap}$.}
\label{tab:swapInstanceExample}
\end{table}

In the remainder of this article, we fix $n\ge 1$, $m\ge 2$, $k\in\{0,\ldots,n-1\}$, and consider the following two concept classes:

\begin{itemize}
\item The class $\mathcal{C}_{ac}^k$ of all \emph{complete\/} acyclic $k$-bounded CP-nets over $n$ variables of domain size $m$. This class is represented over the instance space $\mathcal{X}_{swap}$.
\item The class $\overline{\mathcal{C}}_{ac}^k$ of \emph{all complete and all incomplete\/} acyclic $k$-bounded CP-nets over $n$ variables of domain size $m$. This class is represented over the instance space $\overline{\mathcal{X}}_{swap}$.
\end{itemize}

\section{The Complexity of Learning Complete Acyclic CP-Nets}\label{sec:complete}

In this section, we will study the information complexity parameters introduced above, for the class $\mathcal{C}^k_{ac}$ of all complete acyclic $k$-bounded CP-nets over $n$ variables of domain size $m$. In Section~\ref{sec:incomplete}, we will extend our results on the VC dimension and the teaching dimension also to the class $\overline{\mathcal{C}}^k_{ac}$ of all \emph{complete and incomplete}\/  acyclic $k$-bounded CP-nets. It turns out though that studying the complete case first is easier. 

Table~\ref{resultsSummary} summarizes our complexity results for complete acyclic $k$-bounded CP-nets. The two extreme cases are unbounded acyclic CP-nets ($k=n-1$) and separable CP-nets ($k=0$). 

To define the value $\mathcal{U}_k$ used in this table, we will first introduce the notion of $(z,k)$-universal set, which is typically used in combinatorics, cf.~\cite{adaptiveLearning,Jukna:2010:ECA:1965203}.

\begin{definition}\label{def:universal2}
Let $S\subseteq\{0,1\}^z$ be a set of binary vectors of length $z$ and let $k\le z$. The set $S$ is called $(z,k)$-universal if, for every set $Z=\{i_1,\ldots,i_k\}\subseteq\{1,\ldots z\}$ with $|Z|=k$, the projection
\[\{(s_{i_1},\ldots,s_{i_k})\mid (s_1,\ldots,s_z)\in S\}\]
of $S$ to the components in $Z$ is of size $2^k$, i.e., it contains all binary vectors of length $k$.
\end{definition}

In other words, a $(z,k)$-universal set $S$ is a set of concepts over an instance space $\mathcal{X}$ of size $z$, such that every subset of $\mathcal{X}$ of size $k$ is shattered by $S$. 

\begin{example}
Consider $z=10$ and $k=3$, and let $S$ be the set of all binary vectors of length 10 that contain no more than 3 components equal to 1. This set is $(10,3)$-universal, since projecting it to any set of three components yields all eight binary vectors of length 3. It is not $(10,4)$-universal since projections onto four components do not exhibit the binary vector consisting of four 1s.
\end{example}

The role of $(z,k)$-universal sets in our study will become evident later on when we investigate the teaching dimension of classes of CP-nets. Here we present a generalization of $(z,k)$-universal sets to the non-binary case in order to define the quantity $\mathcal{U}_k$ used in Table~\ref{resultsSummary}.

\begin{definition}\label{def:universal}
Let $S\subseteq\{1,\ldots, m\}^z$ be a set of vectors of length $z$ with components in $\{1,\ldots,m\}$. Let $k\le z$. The set $S$ is called $(m,z,k)$-universal if, for every set $Z=\{i_1,\ldots,i_k\}\subseteq\{1,\ldots z\}$ with $|Z|=k$, the projection
\[\{(s_{i_1},\ldots,s_{i_k})\mid (s_1,\ldots,s_z)\in S\}\]
of $S$ to the components in $Z$ is of size $m^k$, i.e., it contains all vectors of length $k$ with components in $\{1,\ldots,m\}$.
\end{definition}

Thus the classical $(z,k)$-universal sets correspond to $(2,z,k)$-universal sets in our notation. To date, no formula is known that expresses the exact size of a \emph{smallest possible\/} $(z,k)$-universal set (or of a smallest possible $(m,z,k)$-universal set, respectively) in dependence of $z$ and $k$ (in dependence of $m$, $z$, and $k$, respectively,) though some useful bounds on this quantity have been established~\cite{Jukna:2010:ECA:1965203}. The quantity is of importance for our results on the teaching dimension.

\begin{definition}\label{def:Uk}
Let $n\ge 1$, $m\ge 2$, $k\le n-1$. When studying CP-nets over $n$ variables, each of which has a domain of size $m$, we denote by $\mathcal{U}_k$ the smallest possible size of an $(m,n-1,k)$-universal set.
\end{definition}

The following simple observations on universal sets will be useful for our studies.

\begin{lemma}\label{lem:universal} The smallest sizes for $(m,n-1,k)$-universal sets, for $k\in\{0,1,n\}$ are as follows.
\begin{enumerate}
\item $\mathcal{U}_0=1$.
\item $\mathcal{U}_1=m$.
\item $\mathcal{U}_{n-1}=m^{n-1}$.
\end{enumerate}
\end{lemma}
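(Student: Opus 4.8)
The plan is to verify each of the three cases directly from Definition~\ref{def:universal}, since $\mathcal{U}_k$ is the smallest size of an $(m,n-1,k)$-universal set, i.e.\ a set $S\subseteq\{1,\ldots,m\}^{n-1}$ whose projection onto every $k$-element index set contains all $m^k$ possible vectors.

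For case (1), $k=0$: the only index set of size $0$ is the empty set, and the universality condition on the empty projection is vacuous (the projection of any nonempty $S$ onto no components is the single empty vector, matching $m^0=1$). Hence any singleton $S$ suffices, and $S$ cannot be empty, so $\mathcal{U}_0=1$.

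For case (2), $k=1$: universality requires that for each single coordinate $i\in\{1,\ldots,n-1\}$, the values appearing in that coordinate across $S$ exhaust all of $\{1,\ldots,m\}$. This immediately forces $|S|\ge m$, since a single coordinate alone takes $m$ distinct values. For the matching upper bound, I would exhibit an explicit $S$ of size $m$: take the $m$ constant vectors $s^{(j)}=(j,j,\ldots,j)$ for $j\in\{1,\ldots,m\}$. Every coordinate then ranges over all of $\{1,\ldots,m\}$, so $S$ is $(m,n-1,1)$-universal and $\mathcal{U}_1=m$.

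For case (3), $k=n-1$: here the only index set of size $n-1$ is the full set $\{1,\ldots,n-1\}$, so universality demands that $S$ itself (as the trivial projection onto all coordinates) equal the entire space $\{1,\ldots,m\}^{n-1}$. Thus $S=\{1,\ldots,m\}^{n-1}$ is forced, giving $\mathcal{U}_{n-1}=m^{n-1}$. The only mild subtlety to flag is the degenerate boundary behavior when $n-1<k$, but since all three cases assume $k\le n-1$ this does not arise; the genuinely easy structure here means there is no real obstacle—each bound follows from directly reading off Definition~\ref{def:universal}, with the constant-vector construction being the one place requiring an explicit witness.
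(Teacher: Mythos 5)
Your proof is correct and follows essentially the same route as the paper's: the $k=0$ and $k=n-1$ cases are read off directly from Definition~\ref{def:universal}, and for $k=1$ you use exactly the paper's witness, the $m$ constant vectors $(j,\ldots,j)$, together with the obvious lower bound that a single coordinate must exhaust all $m$ values. Your treatment is slightly more explicit about why $S$ must be nonempty in the $k=0$ case, but there is no substantive difference from the paper's argument.
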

\begin{proof}
The equality $\mathcal{U}_0=1$ is immediate by definition. $\mathcal{U}_1=m$ is obvious since, independently of $n$, the $m$ distinct vectors $(1,\ldots,1)$, $(2,\ldots,2)$, \dots, $(m,\ldots,m)$ form an $(m,n-1,1)$-universal set, and no smaller $(m,n-1,1)$-universal set can exist. Finally, to realize all possible assignments to the full universe of size $n-1$, exactly $m^{n-1}$ vectors are required, yielding $\mathcal{U}_{n-1}=m^{n-1}$.
\end{proof}

One observation from Table~\ref{resultsSummary} is that $\VCdim$ equals $\RTD$ for all values of $m$ in $\mathcal{C}_{ac}^{n-1}$. Further, if $m=2$ (the best-studied case in the literature), then $\TD(\mathcal{C}_{ac}^{n-1})$ equals the instance space size $n2^{n-1}$. A close inspection of the case $m=2$, as discussed in Appendix\ref{sec:struct}, shows that $\mathcal{X}_{swap}$ has only $n$ instances that are relevant for $\mathcal{C}_{ac}^0$, and $\mathcal{C}_{ac}^0$ corresponds to the class of all concepts over these $n$ instances. Thus the values of $\VCdim$, $\TD$, and $\RTD$ 
are trivially equal to $n$ in this special case. 
The remainder of this section is dedicated to proving the statements from Table~\ref{resultsSummary}. 

\begin{table*}
\centering
\caption{Summary of complexity results for classes of complete CP-nets. $\mathcal{M}_k=(n-k)m^k+\frac{m^k-1}{m-1}$, as detailed in Lemma~\ref{lem:Mk}; $e_{max}=(n-k)k+\binom{k}{2}\le nk$; the value $\mathcal{U}_k$ is defined in Definition~\ref{def:Uk}.}
\begin{footnotesize}
\begin{tabular}{|c|c|c|c|}
\hline class & VCD & TD & RTD \\
\hline $\mathcal{C}_{ac}^k$& $\geq (m-1)\mathcal{M}_k$& $n(m-1)\mathcal{U}_k\le \TD \le e_{max}+n(m-1)\mathcal{U}_k$ & $(m-1)\mathcal{M}_k$\\
\hline $\mathcal{C}_{ac}^{n-1}$& $m^n-1$& $n(m-1)m^{n-1}$ & $m^{n}-1$\\
\hline $\mathcal{C}_{ac}^{0}$&$(m-1)n$ &$(m-1)n$ &$(m-1)n$\\
\hline
\end{tabular}
\end{footnotesize}
\label{resultsSummary}
\end{table*}

\subsection{VC Dimension}

We begin by studying the VC dimension, with the following main result.




\begin{restatable}{theorem}{thmVCDac}
For fixed $n\ge 1$, $m\ge 2$, and any given $k\le n-1$, the following statements hold for the VC dimension of the class $\mathcal{C}_{ac}^k$ of all complete acyclic $k$-bounded CP-nets over $n$ variables with $m$ values each, over the instance space $\mathcal{X}_{swap}$.
\begin{enumerate}
\item $\VCdim(\mathcal{C}_{ac}^{n-1})=m^n-1$.
\item $\VCdim(\mathcal{C}_{ac}^0)=(m-1)n$.
\item $\VCdim(\mathcal{C}_{ac}^k)\geq(m-1)\mathcal{M}_k = (m-1)(n-k)m^k+m^k-1$. 
\end{enumerate}
\label{thm:VCDac}
\end{restatable}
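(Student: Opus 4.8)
The plan is to prove part (3) first, because its shattering construction drives the lower bounds in all three parts. Indeed, substituting $k=n-1$ into $(m-1)\mathcal{M}_k$ gives $(m-1)\mathcal{M}_{n-1}=(m-1)m^{n-1}+(m^{n-1}-1)=m^n-1$, and substituting $k=0$ gives $(m-1)\mathcal{M}_0=(m-1)n$, using Lemma~\ref{lem:Mk}. Hence the lower bounds in parts (1) and (2) follow immediately from part (3), and it remains only to supply the matching \emph{upper} bounds $m^n-1$ and $(m-1)n$.

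For part (3) I would fix one $k$-bounded acyclic graph $G^*$ attaining the maximum of Lemma~\ref{lem:Mk}, so that $\sum_i m^{|Pa(v_i)|}=\mathcal{M}_k$. The key structural fact is that, for a complete CP-net, the label of a swap $x$ with swapped variable $v_i$ and parent context $\gamma=x[Pa(v_i)]$ is decided \emph{solely} by the single statement $\succ^{v_i}_\gamma$. I would therefore build the candidate shattered set $Y$ as follows: for each statement, i.e.\ each pair $(v_i,\gamma)$ with $\gamma\in\mathcal{O}_{Pa(v_i)}$, and each of the $m-1$ consecutive pairs $(v^i_j,v^i_{j+1})$ in a fixed reference order on $D_{v_i}$, include the swap flipping $v_i$ between these two values, with $Pa(v_i)$ set to $\gamma$ and all remaining variables set to a fixed default (which, by swap semantics, does not affect the label). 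These instances are pairwise distinct and number $\sum_i(m-1)\,m^{|Pa(v_i)|}=(m-1)\mathcal{M}_k$. To show $Y$ is shattered, given any labeling $L\colon Y\to\{0,1\}$ I would define $N_L$ on $G^*$ by choosing, independently for every statement $(v_i,\gamma)$, the order $\succ^{v_i}_\gamma$ to be a linear extension of the orientation $L$ prescribes on the $m-1$ consecutive pairs. Such an extension always exists because those pairs form a path on the $m$ domain values, so any orientation of them is acyclic and admits a topological sort. Since $G^*$ is acyclic and $k$-bounded and every CPT is total, $N_L\in\mathcal{C}_{ac}^k$, and since each swap in $Y$ is read off exactly one statement, $N_L$ realizes $L$.

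For the upper bounds I would use a \emph{forest argument} on the Hamming graph $H$ whose vertices are the $m^n$ outcomes and whose edges are the swaps. If $Y\subseteq\mathcal{X}_{swap}$ is shattered, then $Y$ as an edge set can contain no cycle $o_1-\cdots-o_\ell-o_1$: the labeling orienting such a cycle consistently would force $o_1\succ\cdots\succ o_\ell\succ o_1$, hence $o_1\succ o_1$ by transitivity, contradicting the consistency of acyclic CP-nets. Thus $Y$ is a forest, so $|Y|\le m^n-1$, settling part (1). For part (2), complete separable CP-nets are exactly $n$-tuples of independent total orders, and a swap's label depends only on its swapped variable's order; the concept class is therefore a product over the $n$ variables and its $\VCdim$ is the sum of the factor dimensions. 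Each factor is the class of total orders on $D_{v_i}$, whose $\VCdim$ is $m-1$ by the same forest argument applied to the $m$ domain values, giving $\VCdim(\mathcal{C}_{ac}^0)\le(m-1)n$ and hence equality.

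The hard part will be the shattering step of part (3), namely verifying that the $(m-1)\mathcal{M}_k$ chosen swaps can be labeled \emph{fully independently}. This rests on the two facts highlighted above: a swap's label is a function of a single statement, so distinct statements never interfere; and within one statement the $m-1$ consecutive-pair constraints are always jointly realizable by some total order. Once these are established, the remaining work (the instance count and the two arithmetic identities for $k=n-1$ and $k=0$) is routine.
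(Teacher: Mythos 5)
Your proof is correct, and at its core it follows the same route as the paper, so the useful comparison is in the packaging. For the lower bound in part (3), the paper first proves a decomposition lemma writing $\mathcal{C}_{ac}^k$ as a union over topological orders of products of per-statement classes $\mathcal{C}^\Gamma_{\succ^{v_i}_\gamma}$, shows each factor has VC dimension $m-1$ (the lower-bound direction there is exactly your path of $m-1$ consecutive-value swaps), and then sums via the product rule; your explicit shattered set on a fixed maximum-size DAG $G^*$, with a linear extension of the prescribed path orientation chosen independently for each statement, is precisely this argument made concrete, and it is valid. The only cosmetic wrinkle is that when a labeling induces the same order in every context of some variable, your $N_L$ has dummy parents and is not in minimal form as the paper requires; one should pass to its minimal form, which lies in $\mathcal{C}_{ac}^k$ and represents the same concept, so the shattering is unaffected (the paper's product formalism quietly glosses the same point). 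Your upper bound for $k=n-1$ is exactly the paper's lemma adapted from Booth et al.: the paper shows that any set of $m^n$ swaps, viewed as edges on the $m^n$ outcomes, contains an undirected cycle whose consistently oriented labeling is unrealizable by a consistent CP-net; your formulation that every shattered set is a forest, hence of size at most $m^n-1$, is the same argument stated marginally more sharply. The one genuine divergence is the $k=0$ upper bound: the paper argues by pigeonhole that any set of more than $(m-1)n$ swaps has some variable swapped in at least $m$ instances, which cannot be shattered when all parent sets are empty, whereas you observe that $\mathcal{C}_{ac}^0$ is an exact product over the $n$ variables and sum the factor dimensions, each bounded by $m-1$ via the same cycle argument on the $m$ domain values; both are sound, yours being slightly more structural and the paper's more elementary (in both treatments one should note that two swaps over the same value pair of the same variable have functionally dependent labels, which the forest/multigraph view absorbs). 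Finally, your arithmetic reductions $(m-1)\mathcal{M}_{n-1}=m^n-1$ and $(m-1)\mathcal{M}_0=(m-1)n$ are correct, so parts (1) and (2) do reduce to part (3) plus the two matching upper bounds, just as in the paper.
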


The proof of Theorem~\ref{thm:VCDac} relies on decomposing $\mathcal{C}_{ac}^k$ as a product of concept classes over subsets of $\mathcal{X}_{swap}$.\footnote{This kind of decomposition is a standard technique; the reader may look at \cite[Lemma 16]{DFSZ14} for an example of its usage in the computational learning theory literature.}

\begin{definition}
	Let $\mathcal{C}_i\subseteq 2^{\mathcal{X}_i}$ and $\mathcal{C}_j\subseteq 2^{\mathcal{X}_j}$ be concept classes with $\mathcal{X}_i\cap \mathcal{X}_j=\emptyset$. The concept class $\mathcal{C}_{i}\times \mathcal{C}_j\subseteq 2^{\mathcal{X}_i\cup \mathcal{X}_j}$ is defined by $\mathcal{C}_{i}\times \mathcal{C}_j=\{c_i\cup c_j\mid c_i\in \mathcal{C}_i \mbox{ and } c_j\in \mathcal{C}_j\}$. For concept classes $\mathcal{C}_1$, \dots, $\mathcal{C}_r$, we define $\prod_{i=1}^r \mathcal{C}_i = \mathcal{C}_1 \times \cdots \times \mathcal{C}_r = (\cdots((\mathcal{C}_1\times
	\mathcal{C}_2)\times \mathcal{C}_3)\times\cdots\times \mathcal{C}_r)$.
\end{definition}

It is a well-known obvious fact that $\VCdim(\prod\limits_{i=1}^{t}\mathcal{C}_i)=\sum\limits_{i=1}^{t}\VCdim(\mathcal{C}_i)$, see, e.g., \cite[Lemma 16]{DFSZ14}.

For any $v_i\in V$ and any $\Gamma\subseteq V\setminus\{v_i\}$, we define $\mathcal{C}^\Gamma_{\CPT(v_i)}$ to be the concept class consisting of all preference relations corresponding to some $\CPT(v_i)$ where $Pa(v_i)=\Gamma$ and $|\Gamma|\leq k$; here the instance space $\mathcal{X}$ is the set of all swap pairs $x\in\mathcal{X}_{swap}$ with $V(x)=v_i$. Now, if we fix the context of $v_i$ by fixing an assignment $\gamma\in \mathcal{O}_\Gamma$ of all variables in $\Gamma$, we obtain a concept class $\mathcal{C}^\Gamma_{\succ^{v_i}_\gamma}$, which corresponds to the set of all preference statements concerning the variable $v_i$ conditioned on the context $\gamma$. Its instance space is the set of all swaps $x$ with $V(x)=v_i$ and $x[\Gamma]=\gamma$. 

Recall that $V=\{v_1,\ldots,v_n\}$. By $S_n$ we denote the class of all permutations of $\{1,\ldots,n\}$. Using this notation, we first establish two lemmas before proving Theorem~\ref{thm:VCDac}.

\begin{lemma}\label{lem:decompositionac}
	Under the same premises as in Theorem~\ref{thm:VCDac}, we obtain 
	\[\mathcal{C}_{ac}^k=\bigcup\limits_{\sigma\in S_n}\prod\limits_{i=1}^{n}\bigcup\limits_{\Gamma\subseteq \{v_{\sigma(1)},\ldots,v_{\sigma(i-1)}\}, |\Gamma|\le k}\prod\limits_{\gamma\in\mathcal{O}_\Gamma}\mathcal{C}^\Gamma_{\succ^{v_{\sigma(i)}}_\gamma}\,.\]
\end{lemma}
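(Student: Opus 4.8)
The plan is to read the right-hand side as a literal combinatorial specification of what a complete acyclic $k$-bounded CP-net is, and then verify the two inclusions. The guiding observation is that such a net is completely determined by three independent choices: (i) a topological order of the variables, which the outer union over $\sigma\in S_n$ ranges over; (ii) for each variable $v_{\sigma(i)}$, a parent set $\Gamma$ drawn from its predecessors $\{v_{\sigma(1)},\ldots,v_{\sigma(i-1)}\}$ with $|\Gamma|\le k$, which the inner union over $\Gamma$ ranges over; and (iii) for each variable and each context $\gamma\in\mathcal{O}_\Gamma$, a total order on $D_{v_{\sigma(i)}}$, which the innermost product over $\gamma$ of the classes $\mathcal{C}^\Gamma_{\succ^{v_{\sigma(i)}}_\gamma}$ ranges over. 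Since each concept in $\mathcal{C}^k_{ac}$ is identified with a unique complete CP-net over $\mathcal{X}_{swap}$, it suffices to match these choices with the structure of the expression.

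First I would check that every product occurring in the expression is well-defined, i.e.\ that its factors have pairwise disjoint instance spaces. For fixed $\sigma$ and $i$, as $\gamma$ ranges over $\mathcal{O}_\Gamma$ the sets $\{x\in\mathcal{X}_{swap}\mid V(x)=v_{\sigma(i)},\ x[\Gamma]=\gamma\}$ partition the swaps over $v_{\sigma(i)}$, so $\prod_{\gamma}\mathcal{C}^\Gamma_{\succ^{v_{\sigma(i)}}_\gamma}$ is a class over exactly the swaps whose swapped variable is $v_{\sigma(i)}$; this instance space is independent of $\Gamma$, so the inner union over $\Gamma$ is a genuine union of classes over a common instance space, and for distinct $i$ these instance spaces are disjoint, making the outer product over $i$ well-defined and covering all of $\mathcal{X}_{swap}$.

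Next I would prove the inclusion $\subseteq$. Given $N\in\mathcal{C}^k_{ac}$, acyclicity yields a topological order; choosing the permutation $\sigma$ realizing it guarantees $Pa(v_{\sigma(i)})\subseteq\{v_{\sigma(1)},\ldots,v_{\sigma(i-1)}\}$ and $|Pa(v_{\sigma(i)})|\le k$. Taking $\Gamma=Pa(v_{\sigma(i)})$, the restriction of $c_N$ to swaps over $v_{\sigma(i)}$ is exactly one of the complete tables enumerated by $\prod_{\gamma}\mathcal{C}^\Gamma_{\succ^{v_{\sigma(i)}}_\gamma}$, and since $c_N$ is the union of these restrictions over $i$, it lies in the product for this $\sigma$. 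For the reverse inclusion $\supseteq$, any concept $c$ on the right is, for some $\sigma$, a union over $i$ of a complete table for $v_{\sigma(i)}$ with some parent set $\Gamma_i\subseteq\{v_{\sigma(1)},\ldots,v_{\sigma(i-1)}\}$, $|\Gamma_i|\le k$; the CP-net assembled from these tables has every edge pointing backward along $\sigma$, hence is acyclic, and has indegree at most $k$, hence lies in $\mathcal{C}^k_{ac}$.

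The one point requiring care is the minimality convention: the product $\prod_{\gamma}\mathcal{C}^\Gamma_{\succ^{v_{\sigma(i)}}_\gamma}$ also enumerates tables in which some variable of $\Gamma$ is a dummy parent, so $\Gamma_i$ is only an upper bound on the true (minimal) parent set. I expect this to be the main thing to argue cleanly: dropping dummy parents only shrinks a parent set, so it preserves both the backward-edge (acyclicity) property and the bound $|Pa(\cdot)|\le k$, and it leaves the induced swap labeling---and hence the concept---unchanged. Consequently the minimal CP-net associated with $c$ still belongs to $\mathcal{C}^k_{ac}$, and no spurious concepts are introduced on the right-hand side. The overlaps between different $\Gamma$ in the inner union are harmless, since it is a union rather than a product.
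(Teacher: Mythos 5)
Your proof is correct and takes essentially the same route as the paper's: both inclusions are established via a topological sort of the acyclic structure together with the identity $\mathcal{C}^\Gamma_{\CPT(v)}=\prod_{\gamma\in\mathcal{O}_\Gamma}\mathcal{C}^\Gamma_{\succ^{v}_\gamma}$, with parent sets drawn from the predecessors under $\sigma$. Your explicit handling of dummy parents in the $\supseteq$ direction (and the well-definedness check of the products) carefully fills in a step the paper compresses into ``similarly, one can argue,'' and it is exactly the right argument: minimizing only shrinks parent sets, preserving acyclicity, the bound $k$, and the induced concept.
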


\begin{proof}
	By definition, for $v\in V$ and $\Gamma\subseteq V\setminus\{v\}$, the class $\mathcal{C}^\Gamma_{\CPT(v)}$ equals $\prod_{\gamma\in\mathcal{O}_\Gamma}\mathcal{C}^\Gamma_{\succ^{v}_\gamma}$.\footnote{Any concept representing a preference table for $v$ with $Pa(v)=\Gamma$ corresponds to a union of concepts each of which represents a preference statement over $D_{v}$ conditioned on some context $\gamma\in\mathcal{O}_\Gamma$.}
	
	Any concept corresponds to choosing a set $\Gamma_v$ of parent variables of size at most $k$ for each variable $v$, which means $\mathcal{C}_{ac}^k\subseteq\prod_{i=1}^{n}\bigcup_{\Gamma\subseteq V\setminus\{v_i\}, |\Gamma|\le k}\mathcal{C}^\Gamma_{\CPT(v_i)}$. By acyclicity, $v_j\in Pa(v_i)$ implies $v_i\notin Pa(v_j)$, so that for each concept $c\in\mathcal{C}_{ac}^k$ some $\sigma\in S_n$ fulfills $$c\in\prod_{i=1}^{n}\bigcup_{\Gamma\subseteq \{v_{\sigma(1)},\ldots,v_{\sigma(i-1)}\}, |\Gamma|\le k}\mathcal{C}^\Gamma_{\CPT(v_{\sigma(i)})}\,.$$
	
	\noindent Thus, $\mathcal{C}_{ac}^k\subseteq\bigcup_{\sigma\in S_n}\prod_{i=1}^{n}\bigcup_{\Gamma\subseteq \{v_{\sigma(1)},\ldots,v_{\sigma(i-1)}\}, |\Gamma|\le k}\prod_{\gamma\in\mathcal{O}_\Gamma}\mathcal{C}^\Gamma_{\succ^{v_{\sigma(i)}}_\gamma}$. 
	
	Similarly, one can argue that every concept in the class on the right hand side represents an acyclic CP-net with parent sets of size at most $k$. With $\mathcal{C}^\Gamma_{\CPT(v_i)}=\prod_{\gamma\in\mathcal{O}_\Gamma}\mathcal{C}^\Gamma_{\succ^{v_i}_\gamma}$, the statement of the lemma follows.
\end{proof}

\begin{lemma}\label{lem:VCDfixedcontext}
	Using the notation introduced before Lemma~\ref{lem:decompositionac}, we obtain $\VCdim(\mathcal{C}^\Gamma_{\succ^{v_i}_\gamma}) =m-1$ for any $v_i\in V$, $\Gamma\subseteq V\setminus\{v_i\}$, and $\gamma\in\mathcal{O}_\Gamma$.
\end{lemma}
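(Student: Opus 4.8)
The plan is to reduce the statement to a purely combinatorial fact about total orders on the $m$-element domain $D_{v_i}$, and then prove the matching lower and upper bounds of $m-1$ separately. First I would make the reduction explicit. Every concept in $\mathcal{C}^\Gamma_{\succ^{v_i}_\gamma}$ is induced by a single conditional preference statement, i.e., by a total order $\succ$ on $D_{v_i}$, so there are exactly $m!$ such concepts. Its instance space consists of all swaps $x$ with $V(x)=v_i$ and $x[\Gamma]=\gamma$, and by the swap semantics the label $c(x)$ depends only on the two $v_i$-values compared by $x$ and on their order within $x$, not on the assignment to the remaining variables $Z=V\setminus(\Gamma\cup\{v_i\})$. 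I would therefore associate to each instance $x$ the unordered pair $\{x.1[v_i],x.2[v_i]\}$ of domain values it compares, viewing instances as oriented edges on the vertex set $D_{v_i}$ of size $m$. This turns shattering into a question about which oriented comparison patterns are realizable by a total order on $m$ elements.

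For the lower bound I would exhibit the $m-1$ instances comparing the consecutive values $\{v^i_j,v^i_{j+1}\}$, $1\le j\le m-1$, taking the representatives contained in $\mathcal{X}_{swap}$ for some fixed context over $Z$. A target labeling prescribes, for each $j$, whether $v^i_j\succ v^i_{j+1}$ or $v^i_{j+1}\succ v^i_j$; these constraints are exactly an orientation of the path $v^i_1 - v^i_2 - \cdots - v^i_m$. Since a path is a tree, every orientation is acyclic and hence extends to a total order (a topological sort). As the labeling ranges over all of $\{0,1\}^{m-1}$ we realize all $2^{m-1}$ patterns, so the set is shattered and $\VCdim(\mathcal{C}^\Gamma_{\succ^{v_i}_\gamma})\ge m-1$.

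For the upper bound I would show that no set $Y$ of $m$ instances can be shattered. In any shattered set the instances must correspond to \emph{distinct} unordered value pairs: two instances sharing a value pair always receive identical labels (if equally oriented) or always opposite labels (if oppositely oriented), and in neither case can they be labeled independently. Hence a shattered set of size $m$ would give $m$ distinct edges of a simple graph on the $m$ vertices $D_{v_i}$, which necessarily contains a cycle $v^i_{a_1},\ldots,v^i_{a_t},v^i_{a_1}$, because a forest on $m$ vertices has at most $m-1$ edges. Restricting to the instances along this cycle and considering the labeling that forces $v^i_{a_1}\succ v^i_{a_2}\succ\cdots\succ v^i_{a_t}\succ v^i_{a_1}$, no total order can realize such a cyclic chain of strict preferences, so this subset of $Y$ is already not shattered, and therefore neither is $Y$. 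Thus every shattered set has at most $m-1$ instances, giving $\VCdim(\mathcal{C}^\Gamma_{\succ^{v_i}_\gamma})\le m-1$.

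I expect the main subtlety — rather than a genuine obstacle — to be the bookkeeping around the instance space: the free context $Z$ means that many distinct instances share the same value pair, and $\mathcal{X}_{swap}$ fixes one orientation per unordered pair. Both are handled by the single observation that labels depend only on the oriented value pair and that shattering is invariant under flipping an instance's orientation. The real mathematical content is the acyclicity dichotomy driving both directions: orientations of a forest always extend to a total order (lower bound), whereas any cycle forces an unrealizable cyclic preference (upper bound).
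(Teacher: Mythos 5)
Your proposal is correct and takes essentially the same route as the paper's proof: the lower bound via the $m-1$ swaps on consecutive value pairs $(v^i_1,v^i_2),\dots,(v^i_{m-1},v^i_m)$, and the upper bound by mapping $m$ swaps to edges of a graph on the $m$ vertices of $D_{v_i}$, extracting a cycle, and orienting it into a cyclic preference that no total order realizes. Your explicit handling of instances sharing the same unordered value pair and the topological-sort argument for extending a path orientation merely spell out details the paper leaves implicit.
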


\begin{proof}
	Let $v_i\in V$, $\Gamma\subseteq V\setminus\{v_i\}$, and $\gamma\in\mathcal{O}_\Gamma$. By definition, $\mathcal{C}^\Gamma_{\succ^{v_i}_\gamma}$ is the class of all total orders over the domain $D_{v_i}$ of $v_i$. We show that $\mathcal{C}^\Gamma_{\succ^{v_i}_\gamma}$ shatters some set of size $m-1$, but no set of size $m$. 
	
	To show $\VCdim(\mathcal{C}^\Gamma_{\succ^{v_i}_\gamma}) \ge m-1$, choose any set of $m-1$ swaps over $\Gamma\cup\{v_i\}$ with fixed context $\gamma$, in which the pairs of swapped values in $v_i$ are $(v^i_1,v^i_2)$,\dots, $(v^i_{m-1},v^i_m)$. 
	
	Fix any set $S\subseteq\mathcal{X}_{swap}$ of $m$ swaps over $\Gamma\cup\{v_i\}$ with fixed context $\gamma$. To show that $S$ is not shattered, consider the undirected graph $G$ with vertex set $D_{v_i}$ in which an edge between $v^i_r$ and $v^i_s$ exists iff $S$ contains a swap pair flipping $v^i_r$ to $v^i_s$ or vice versa. $G$ has $m$ vertices and $m$ edges and thus contains a cycle. The directed versions of $G$ correspond to the labelings of $S$; therefore some labeling $\ell$ of $S$ corresponds to a cyclic directed version of $G$, which does not induce a total order over $D_{v_i}$. Hence the labeling $\ell$ is not realized by $\mathcal{C}^\Gamma_{\succ^{v_i}_\gamma}$, so that $S$ is not shattered by $\mathcal{C}^\Gamma_{\succ^{v_i}_\gamma}$.
\end{proof}

These technical observations will help to establish a lower bound on the VC dimension of classes of bounded acyclic CP-nets. Our upper bound relies on the following straightforward generalization of an observation made by Booth et al.~\cite{Booth:2010:LCL:1860967.1861021} (their Proposition 3), which states that any concept class corresponding to a set of transitive and irreflexive relations (such as a class of acyclic CP-nets) over $\{0,1\}^n$ has a VC dimension no larger than $2^n-1$. We generalize this statement to relations over $\{0,\ldots,m\}^n$ and formulate it in CP-net terminology. Note that our statement applies to any class of consistent complete CP-nets, whether acyclic or not.

\begin{lemma}[based on \cite{Booth:2010:LCL:1860967.1861021}]\label{lem:booth}
Let $n\ge 1$, $m\ge 2$. Let $X\subseteq\mathcal{X}_{swap}$ with $|X|=m^n$ be any set of $m^n$ swaps over $n$ variables of domain size $m$. Let $\mathcal{C}$ be any class of consistent complete CP-nets over the same domain $\mathcal{X}_{swap}$. Then $\mathcal{C}$ does not shatter $X$. In particular, $\VCdim(\mathcal{C})\le m^n-1$.
\end{lemma}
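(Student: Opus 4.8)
The plan is to reduce the claim to an elementary fact about cycles in the ``swap graph'' together with the defining property of a consistent CP-net, namely that the relation $\succ$ it induces is a strict partial order and hence contains no directed cycle.

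First I would view $X$ as the edge set of an undirected graph $G$ whose vertices are the $m^n$ outcomes in $\mathcal{O}$: each swap $x\in X$ is an edge joining $x.1$ and $x.2$, the two outcomes it flips. Since $\mathcal{X}_{swap}$ stores each unordered swap pair only once, distinct swaps in $X$ yield distinct edges, so $G$ has exactly $m^n$ vertices and $m^n$ edges. A graph with at least as many edges as vertices is not a forest, so $G$ contains a cycle $C=(u_0,u_1,\dots,u_{\ell-1},u_0)$ in which consecutive outcomes are swap-neighbours and every edge of $C$ corresponds to a swap lying in $X$.

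Next I would produce a single labeling of $X$ that no concept in $\mathcal{C}$ realizes; exhibiting one such labeling already shows that $X$ is not shattered. For a consistent CP-net the concept value on a swap $x$ records whether $x.1\succ x.2$. Each edge $\{u_j,u_{j+1}\}$ of $C$ (indices taken modulo $\ell$) appears in $\mathcal{X}_{swap}$ as one definite ordered pair, so I can choose its label so as to force the preference to run ``forward'' around the cycle, i.e.\ to force $u_j\succ u_{j+1}$ (label $1$ if the stored pair is $(u_j,u_{j+1})$, label $0$ otherwise); on the swaps outside $C$ the labels may be chosen arbitrarily. Any $c\in\mathcal{C}$ agreeing with this labeling would have its CP-net entail $u_0\succ u_1\succ\dots\succ u_{\ell-1}\succ u_0$, whence $u_0\succ u_0$ by transitivity, contradicting the irreflexivity guaranteed by consistency. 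Thus the labeling is unrealizable, $X$ is not shattered, and since $X$ was an arbitrary set of $m^n$ swaps we conclude $\VCdim(\mathcal{C})\le m^n-1$.

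The delicate step, and the one I would treat carefully, is the forcing in the previous paragraph: it requires that the chosen label really pins down the orientation $u_j\succ u_{j+1}$ of each cycle edge. This is exactly where consistency (comparability of swaps) enters—when the stored pair is $(u_{j+1},u_j)$, a label of $0$ must entail $u_j\succ u_{j+1}$ rather than merely ``$u_{j+1}\not\succ u_j$''. For complete CP-nets this is immediate, since every swap is comparable and $c(x)=0$ means $x.2\succ x.1$; this is the regime in which the bound $m^n-1$ is subsequently applied to $\mathcal{C}_{ac}^{n-1}$. Finally, the whole argument is insensitive to the domain size, as it uses only the swap graph on $\mathcal{O}$ and the cardinality $|\mathcal{O}|=m^n$, so it covers multi-valued attributes exactly as in the binary case treated by Booth et al.
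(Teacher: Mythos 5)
Your proof is correct and takes essentially the same route as the paper's: view the $m^n$ swaps in $X$ as $m^n$ undirected edges on the $m^n$ outcomes, extract a cycle (edges $\ge$ vertices), and pick a labeling that orients that cycle into a directed preference cycle, contradicting the acyclicity of the induced preference graph of any consistent CP-net realizing it. The comparability caveat you flag is real, but it is precisely the step the paper also relies on implicitly—its proof identifies the label $0$ of a swap $(o,o')$ with the directed edge for $o'\succ o$ in the induced preference graph, which presumes every swap is comparable, as is the case for complete CP-nets.
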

\begin{proof}
Suppose $\mathcal{C}$ shatters $X$. Then each labeling of $X$ is consistent with some CP-net in $\mathcal{C}$. 

Observe first that the induced preference graph of any CP-net in $\mathcal{C}$ has exactly $m^n$ vertices. Each labeling of a swap corresponds to a directed edge in the induced preference graph of a CP-net.\footnote{For example, in Figure~\ref{fig:cyclicConsistent}, the edge from $a\bar{b}\bar{c}$ to $ab\bar{c}$ corresponds to the labeling $0$ for the swap $(a\bar{b}\bar{c},ab\bar{c})$, and, likewise, to the labeling $1$ for the swap $(ab\bar{c},a\bar{b}\bar{c})$.} Therefore, each labeling of all elements in $X$ corresponds to a set of $m^n$ directed edges in the induced preference graph. When ignoring the direction of the edges, the edge sets resulting from these labelings are all identical, and we obtain a set of $m^n$ undirected edges over a set of exactly $m^n$ vertices. Consequently, these edges form at least one cycle. Obviously, there exists a labeling $\ell$ of the elements in $X$ that will turn this undirected cycle into a directed cycle. Since $\mathcal{C}$ shatters $X$, it contains a concept $c$ that is consistent with the labeling $\ell$, i.e., whose induced preference graph contains the directed edges corresponding to the labeling $\ell$. In particular, this concept $c$ has a cycle in its induced preference graph. This contradicts the fact that $c$ is a consistent CP-net.

Therefore, $\mathcal{C}$ does not shatter $X$. Since $X$ was chosen arbitrarily, $\mathcal{C}$ shatters no set of size $m^n$, so that $\VCdim(\mathcal{C})\le m^n-1$.
\end{proof}

We are now ready to give a proof of Theorem~\ref{thm:VCDac}, which we restate here for convenience.

\thmVCDac*

\begin{proof}	Lemma~\ref{lem:decompositionac} states that 
	$$
	\mathcal{C}_{ac}^k=\bigcup\limits_{\sigma\in S_n}\prod\limits_{i=1}^{n}\bigcup\limits_{\Gamma\subseteq \{v_{\sigma(1)},\ldots,v_{\sigma(i-1)}\}, |\Gamma|\le k}\prod\limits_{\gamma\in\mathcal{O}_\Gamma}\mathcal{C}^\Gamma_{\succ^{v_{\sigma(i)}}_\gamma}\,, $$

 which yields the bound
  $$\VCdim(\mathcal{C}_{ac}^k)\ge\max\limits_{\sigma\in S_n}\sum\limits_{i=1}^{n}\max\limits_{\Gamma\subseteq \{v_{\sigma(1)},\ldots,v_{\sigma(i-1)}\},|\Gamma|\le k}\sum\limits_{\gamma\in\mathcal{O}_\Gamma}\VCdim(\mathcal{C}^\Gamma_{\succ^{v_{\sigma(i)}}_\gamma})\,.$$ 
	By Lemma~\ref{lem:VCDfixedcontext}, we have $\VCdim(\mathcal{C}^\Gamma_{\succ^{v_{\sigma(i)}}_\gamma}) =m-1$, independent of $\Gamma$ and $\gamma$, so that one obtains, for any $\sigma\in S_n$,
	\begin{eqnarray*}
		\VCdim(\mathcal{C}_{ac}^k)&\ge& (m-1)\sum_{i=1}^n\max_{\Gamma\subseteq\{v_{\sigma(1)},\ldots,v_{\sigma(i-1)}\}, |\Gamma|\le k}|\mathcal{O}_\Gamma|\\
		&=&(m-1)\sum_{i=1}^n\max_{\Gamma\subseteq\{v_{\sigma(1)},\ldots,v_{\sigma(i-1)}\},|\Gamma|\le k}m^{|\Gamma|}\\
		&=&(m-1)\mathcal{M}_k\,.
	\end{eqnarray*}
	It remains to verify $\VCdim(\mathcal{C}_{ac}^k)\le (m-1)\mathcal{M}_k$ for $k\in\{0,n-1\}$. 
	
	For $k=0$, we have $\mathcal{M}_k=n$, so let us consider any set $Y$ of size greater than $(m-1)n$ and argue why $Y$ cannot be shattered by $\mathcal{C}_{ac}^0$. Clearly, there exists a variable $v_i$ that is swapped in at least $m$ instances in $Y$. In order to shatter these $\ge m$ instances with the same swapped variable, a concept class of CP-nets would need to contain CP-nets in which some variables have non-empty parent sets, which is not the case for $\mathcal{C}_{ac}^0$. Thus $Y$ is not shattered, i.e., $\VCdim(\mathcal{C}_{ac}^k)\le (m-1)\mathcal{M}_k$.
	
	For $k=n-1$, the upper bound $\VCdim(\mathcal{C}_{ac}^k)\le m^n-1 = (m-1)\mathcal{M}_{n-1}$ follows immediately from Lemma~\ref{lem:booth}.
	\end{proof}

\subsection{Recursive Teaching Dimension}

For studying teaching complexity, it is useful to identify concepts that are ``easy to teach.'' To this end, we use the notion of subsumption~\cite{Koriche2010685}: given CP-nets $N,N'$, we say $N$ subsumes $N'$ if for all $v_i\in V$ the following holds: If $y_1\succ y_2$ is specified in $\CPT(v_i)$ in $N'$ for some context $\gamma'$, then $y_1\succ y_2$ is specified in $\CPT(v_i)$ in $N$ for some context containing $\gamma'$. If in addition $N\ne N'$, we say that $N$ strictly subsumes $N'$.

Now let $\mathcal{C}\subseteq\mathcal{C}_{ac}^k$. A concept $c\in \mathcal{C}$ is maximal in $\mathcal{C}$ if no $c'\in\mathcal{C}$ strictly subsumes $c$. 
The size of maximal concepts in $\mathcal{C}_{ac}^k$ equals $\mathcal{M}_k$, by definition.
\begin{figure}
\begin{adjustbox}{max width=\textwidth}
\begin{tikzpicture}[thick,every node/.style={scale=0.7}]
\begin{scope}
\node[state](A){A};
\node[state](B)[below right=of A]{B};
\node[state](C)[below left=of A]{C};
\node (aCPT)[above=of A,yshift=-1.3cm]{$a\succ\bar{a}$};
\node(bCPT)[above=of B,yshift=-1cm]{\makecell[l]{$a:b\succ \bar{b}$\\$\bar{a}:\bar{b}\succ b$}};
\node(cCPT)[above=of C,xshift=-1cm,yshift=-.7cm]{\makecell[l]{$\bar{a}\bar{b}\qquad :\bar{c}\succ c$\\otherwise: $c\succ \bar{c}$}};
\node(text)[below of=A,yshift=-2.5cm]{$N_1$};
\path (A) edge[->] (B) edge[->] (C)
(B) edge[->] (C);
\end{scope}

\begin{scope}[xshift=5cm]
\node[state](A){A};
\node[state](B)[below right=of A]{B};
\node[state](C)[below left=of A]{C};
\node (aCPT)[above=of A,yshift=-1.3cm]{$a\succ\bar{a}$};
\node(bCPT)[above=of B,yshift=-1cm]{\makecell[l]{$a:b\succ \bar{b}$\\$\bar{a}:\bar{b}\succ b$}};
\node(cCPT)[above=of C,yshift=-.6cm]{\makecell[l]{$a:c\succ \bar{c}$\\ $\bar{a}:\bar{c}\succ \bar{c}$}};
\path (A) edge[->] (B) edge[->] (C);
\node(text)[below of=A,yshift=-2.5cm]{$N_2$};
\end{scope}

\begin{scope}[xshift=9cm]
\node[state](A){A};
\node[state](B)[below right=of A]{B};
\node[state](C)[below left=of A,xshift=.6cm]{C};
\node (aCPT)[above=of A,yshift=-1.3cm]{$a\succ\bar{a}$};
\node(bCPT)[above=of B,yshift=-1cm]{\makecell[l]{$a:b\succ \bar{b}$\\$\bar{a}:\bar{b}\succ b$}};
\node(cCPT)[above=of C,yshift=-1.3cm]{$c\succ \bar{c}$};
\node(text)[below of=A,yshift=-2.5cm]{$N_3$};
\path (A) edge[->] (B) ;
\end{scope}
\end{tikzpicture}
\end{adjustbox}
\caption{Three networks each of which is subsumed by the ones to its left.}
\label{fig:subsumeExample}
\end{figure}
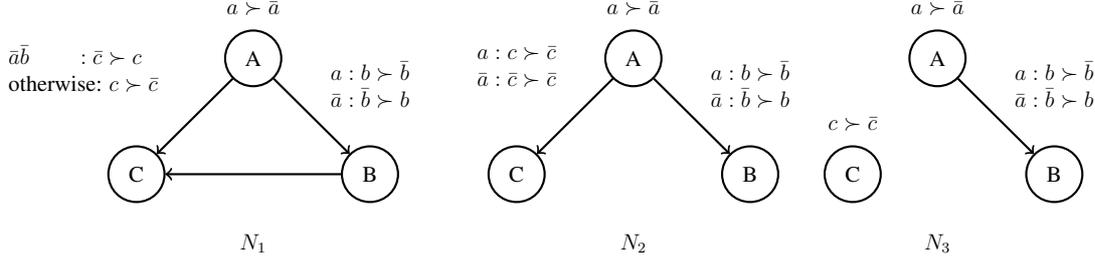

\begin{example}
Let $\mathcal{C}_{ac}^2$ be the class of all unbounded complete acyclic CP-nets over three variables $V=\{A,B,C\}$. Consider the three CP-nets in Figure \ref{fig:subsumeExample}. Clearly, $N_3$ is subsumed by $N_2$ and $N_1$, and $N_2$ is subsumed by $N_1$. Further, $N_1$ is maximal with respect to $\mathcal{C}_{ac}^2$ and thus also maximal with respect to $\overline{\mathcal{C}}_{ac}^2$. Each of the three CP-nets also strictly subsumes any CP-net resulting from removal of some of its statements. Since subsumption is transitive, the CP-net $N_1$ strictly subsumes any CP-net resulting from $N_2$ after removal of statements.
\end{example}

The following two lemmas formalize the intuition that maximal concepts are ``easy to teach.'' Lemma~\ref{lem:RTDupperBound} gives an upper bound on the size of a smallest teaching set of any maximal concept, while Lemma~\ref{lem:nonmaximal} implies that a maximal concept is never harder to teach than any concept it subsumes. 

\begin{lemma}\label{lem:RTDupperBound}
Let $n\ge 1$, $m\ge 2$, and $k\le n-1$. Let $\mathcal{C}\subseteq\mathcal{C}_{ac}^k$ be a subclass of the class of all complete acyclic $k$-bounded CP-nets over $n$ variables of domain size $m$.
For any maximal concept $c$ in $\mathcal{C}$, we have $\TD(c,\mathcal{C})\leq (m-1)size(c)$, i.e., $(m-1)size(c)$ swap examples suffice to distinguish $c$ from any other concept in $\mathcal{C}$.
\end{lemma}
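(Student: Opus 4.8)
The plan is to construct an explicit teaching set of size $(m-1)\,size(c)$ for any maximal concept $c$ and argue that it uniquely identifies $c$ within $\mathcal{C}$. Since $c$ is maximal, the CP-net $N$ representing it has a fixed graph structure, and for every variable $v_i$ and every context $\gamma\in\mathcal{O}_{Pa(v_i)}$, the table $\CPT(v_i)$ specifies a total order $\succ^{v_i}_\gamma$ on $D_{v_i}$. A total order on $m$ elements is determined by $m-1$ covering (consecutive) pairs, so for each such statement I would include in the teaching set the $m-1$ swap examples that encode the consecutive comparisons of $\succ^{v_i}_\gamma$: for each adjacent pair $y_s \succ y_{s+1}$ in the order, the swap over $v_i$ with context $\gamma$ (and arbitrary but fixed values on the remaining variables) flipping $y_s$ and $y_{s+1}$, labeled to record $y_s\succ y_{s+1}$. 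Summing over all statements gives exactly $(m-1)\sum_i size(\CPT(v_i)) = (m-1)\,size(c)$ examples.

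First I would verify that these examples are consistent with $c$, which is immediate since each labeled swap simply reads off the preference that $N$ entails for that swapped variable in that context (recall from the background that for a swap, $o\succ \hat o$ iff $o[v_i]\succ^{v_i}_\gamma \hat o[v_i]$). Next I would argue uniqueness: suppose $c'\in\mathcal{C}$ is consistent with the entire set $S$. For each variable $v_i$ and context $\gamma$ appearing in $N$, the $m-1$ examples recording consecutive comparisons of $\succ^{v_i}_\gamma$ force $c'$ to entail the same $m-1$ covering relations; since $c'$ represents a complete CP-net, the statement of $c'$ for $v_i$ in context $\gamma$ must be a total order extending these $m-1$ covering pairs, and the only total order consistent with a prescribed full chain of consecutive comparisons is $\succ^{v_i}_\gamma$ itself. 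Thus $c'$ agrees with $c$ on every specified conditional statement.

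The subtle point, and the place where maximality does the real work, is ruling out a $c'$ whose underlying graph differs from that of $N$ — in particular a $c'$ with \emph{more} parents or differently chosen parents that nonetheless happens to match all labels in $S$. Here I would use that $c$ is maximal in $\mathcal{C}$: any $c'\in\mathcal{C}\subseteq\mathcal{C}_{ac}^k$ has $size(c')\le size(c)=\mathcal{M}_k^{(c)}$ relative to the same graph family, so $c'$ cannot strictly subsume $c$. I would show that if $c'$ were consistent with $S$ but represented by a different CP-net, then either $c'$ would have to specify a preference on some variable in a context not covered by $S$ (making it incomparable to $c$ under subsumption, but then the labels of $S$ pin down $c$'s full parent structure through the ceteris paribus semantics of the recorded swaps), or $c'$ would subsume $c$, contradicting maximality. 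Concretely, because $c$ is maximal its graph has the maximal-size structure, so no variable can carry additional conditioning without exceeding the size of $c$; the swaps in $S$, by fixing the non-swapped coordinates, certify that the recorded preference does not depend on any further variable, which forces $c'$ to have exactly the parent set of $c$ at each node.

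I expect the main obstacle to be precisely this last step: cleanly showing that the ceteris-paribus labels in $S$ not only fix the preference \emph{values} but also certify the \emph{parent sets}, so that no alternative graph structure in $\mathcal{C}$ can be consistent with $S$. The cleanest route is probably to invoke maximality together with the assumption (stated earlier) that CP-nets are in minimal form with no dummy parents: a $c'$ consistent with $S$ must entail every recorded covering relation in the displayed context, and minimality plus the size bound $size(c')\le size(c)$ then leaves $N$ as the only possibility. I would lean on Lemma~\ref{lem:nonmaximal} (invoked by the authors as a companion result) to handle the comparison with subsumed concepts, reserving the direct argument above for concepts that are incomparable to $c$ under subsumption.
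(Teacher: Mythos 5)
Your construction of the teaching set is exactly the paper's: $m-1$ consecutive-pair swaps per statement, each embedded in a full outcome context extending the statement's context, for a total of $(m-1)\,size(c)$ examples, and your consistency check is fine. The gap is in how you rule out a consistent $c'$ with a different graph. Your claim that ``the swaps in $S$, by fixing the non-swapped coordinates, certify that the recorded preference does not depend on any further variable'' is false: the set $S$ contains only \emph{one} full-context witness per statement of $c$, so a $c'\in\mathcal{C}_{ac}^k$ in which some variable has a strictly larger parent set can agree with every label in $S$ (it matches $c$ on the witnessed extensions and may do anything on the others). Such a $c'$ is precisely a concept that \emph{strictly subsumes} $c$ --- recall that subsumption only demands each statement of $c$ appear in $c'$ under \emph{some} containing context, which is exactly what consistency with $S$ delivers --- and the only thing excluding it from $\mathcal{C}$ is the maximality of $c$. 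This is the paper's entire uniqueness argument: any $c'\in\mathcal{C}$ consistent with $T$ entails all the statement orders of $c$, hence subsumes $c$; since $c$ is maximal in $\mathcal{C}$, $c'$ cannot strictly subsume $c$, so $c'=c$. You gesture at this branch (``or $c'$ would subsume $c$, contradicting maximality'') but then abandon it for an invalid direct certification of parent sets.

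A second, related error is your size-based reasoning: ``any $c'\in\mathcal{C}$ has $size(c')\le size(c)$'' and ``because $c$ is maximal its graph has the maximal-size structure.'' Maximality in this lemma is defined via subsumption, not size, and the lemma is stated for an \emph{arbitrary} subclass $\mathcal{C}\subseteq\mathcal{C}_{ac}^k$ --- this generality is essential, since Theorem~\ref{thm:RTDac} applies the lemma recursively to the classes $\mathcal{C}_i$ left after peeling off easy-to-teach concepts. In such a subclass, a concept maximal under subsumption need not have maximum size, and concepts of larger size that are subsumption-incomparable to $c$ may coexist with it; your size bound simply does not hold there. (For the same reason, leaning on Lemma~\ref{lem:nonmaximal} does not help: it compares teaching dimensions within the full class $\mathcal{C}_{ac}^k$ and says nothing about which concepts in a subclass are consistent with your set $S$.) The fix is to delete the parent-set-certification and size arguments entirely and run the subsumption--maximality argument as the sole uniqueness step; with that replacement your proof coincides with the paper's.
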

\begin{proof}
Every statement in the CP-net $N$ represented by $c$ corresponds to an order of $m$ values for some variable $v_i$ under a fixed context $\gamma$. For every such order $y_1\succ_\gamma^{v_i}\ldots\succ_\gamma^{v_i}y_m$, we include $m-1$ positively labeled swap examples in a set $T$. For $1\le j\le m-1$, the $j$th such example labels a pair $x=(x.1,x.2)$ of swap outcomes with $V(x)=v_i$, the projection of $x$ onto $\{v_i\}$ is $(y_j,y_{j+1})$, and the projection of $x$ onto the remaining variables contains $\gamma$. The set $T$ then has cardinality $(m-1)size(c)$ and is obviously consistent with $N$.

It remains to show that no other CP-net in $\mathcal{C}$ is consistent with $T$. Suppose some CP-net $c'\in\mathcal{C}$ is consistent with $T$. Note that, for each $i$ and each context $\gamma$ occurring in $\CPT(v_i)$ in $c$, the information in $T$ determines a strict total order on $D_{v_i}$. The corresponding preferences are all entailed by $c'$ as well, since $c'$ is consistent with $T$. Therefore, $c'$ subsumes $c$. Since $c$ is a maximal concept, this implies that $c'$ does not strictly subsume $c$. Hence, $c'=c$, i.e., $c$ is the only concept in $\mathcal{C}$ that is consistent with $T$.
\end{proof}


Next, we determine the teaching dimension of maximal concepts in the class $\mathcal{C}^k_{ac}$.

\begin{lemma}\label{lem:maximal} Let $n\ge 1$, $m\ge 2$, and $k\le n-1$. Further, let $c$ be a maximal concept in the class $\mathcal{C}_{ac}^k$ of all complete acyclic $k$-bounded CP-nets over $n$ variables of domain size $m$. Then 
$\TD(c,\mathcal{C}_{ac}^k) = (m-1) \mathcal{M}_k=(m-1)(n-k)m^k+m^k-1$.
\end{lemma}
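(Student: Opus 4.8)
The plan is to prove the stated equality by matching upper and lower bounds, with all the genuine work in the lower bound. The upper bound is essentially free: since $c$ is maximal in $\mathcal{C}_{ac}^k$, its size equals $\mathcal{M}_k$, so Lemma~\ref{lem:RTDupperBound} immediately yields $\TD(c,\mathcal{C}_{ac}^k)\le (m-1)\,size(c)=(m-1)\mathcal{M}_k$. It therefore remains to show $\TD(c,\mathcal{C}_{ac}^k)\ge (m-1)\mathcal{M}_k$, i.e., that every teaching set for $c$ must contain at least $(m-1)\mathcal{M}_k$ examples.

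For the lower bound, let $N$ be the CP-net representing $c$. Maximality forces $N$ to be complete with exactly $\mathcal{M}_k$ preference statements, one total order $\succ^{v_i}_\gamma$ on $D_{v_i}$ for each of the $\mathcal{M}_k$ pairs $(v_i,\gamma)$ with $\gamma\in\mathcal{O}_{Pa(v_i)}$. I would fix such a pair and write its order as $y_1\succ\cdots\succ y_m$ (where $y_1,\ldots,y_m$ depend on $v_i$ and $\gamma$). For each $j\in\{1,\ldots,m-1\}$, I would define a competitor concept $c'_{i,\gamma,j}$ that agrees with $N$ on every CPT and every context, except that in $\CPT(v_i)$ under context $\gamma$ it uses the adjacent transposition $y_1\succ\cdots\succ y_{j-1}\succ y_{j+1}\succ y_j\succ y_{j+2}\succ\cdots\succ y_m$. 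This is again a total order, so $c'_{i,\gamma,j}$ is a complete acyclic $k$-bounded CP-net, hence lies in $\mathcal{C}_{ac}^k$, and it is distinct from $c$. The key observation is that, since a swap label is read off \emph{locally} from the order of the swapped variable in its parent context (and not via the transitive closure), the concepts $c$ and $c'_{i,\gamma,j}$ differ on exactly those swaps $x$ with $V(x)=v_i$, $x[Pa(v_i)]=\gamma$, and $\{x.1[v_i],x.2[v_i]\}=\{y_j,y_{j+1}\}$ --- an adjacent transposition reverses precisely the single pairwise comparison of $y_j$ and $y_{j+1}$ and leaves all other comparisons intact. Consequently, any teaching set $T$ for $c$ must contain at least one instance from this symmetric difference, for each of the $(m-1)\mathcal{M}_k$ triples $(v_i,\gamma,j)$.

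The final step is to see that the witnesses required for distinct triples are pairwise distinct instances, so that they cannot be shared. A distinguishing instance for $(v_i,\gamma,j)$ has swapped variable $v_i$, parent context $\gamma$, and swapped value pair $\{y_j,y_{j+1}\}$; triples differing in $v_i$ force different swapped variables, triples differing in $\gamma$ force different assignments to $Pa(v_i)$, and triples differing only in $j$ force different swapped value pairs. Hence $|T|\ge(m-1)\mathcal{M}_k$, which combined with the upper bound gives the claimed equality. I expect the main obstacle to be justifying rigorously that $c$ and $c'_{i,\gamma,j}$ differ on \emph{exactly} the claimed swap set: this relies on the local determination of swap labels and on the fact that a single adjacent transposition changes only one pairwise order. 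A secondary point worth a sentence is that even if the transposition happened to render a parent of $v_i$ \emph{dummy}, $c'_{i,\gamma,j}$ would simply represent a CP-net with a smaller parent set --- still acyclic, still $k$-bounded, still distinct from $c$ --- so membership in $\mathcal{C}_{ac}^k$ is never in jeopardy.
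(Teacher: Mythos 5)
Your proposal is correct and follows essentially the same route as the paper: the upper bound via Lemma~\ref{lem:RTDupperBound} applied to a maximal concept of size $\mathcal{M}_k$, and the lower bound from the observation that a teaching set needs $m-1$ examples per $\CPT$ statement. Your adjacent-transposition competitors $c'_{i,\gamma,j}$ (including the remark about possibly dummy parents) are simply a careful, rigorous spelling-out of the lower-bound step that the paper's proof asserts in a single sentence, and your disjointness argument for the distinguishing swap sets validly completes it.
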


\begin{proof}
A teaching set for a maximal concept must contain $m-1$ examples for each of the $\mathcal{M}_k$ statements in its $\CPT$s, so as to determine the preferences for each context. This yields $\TD(c,\overline{\mathcal{C}}_{ac}^k)=\TD(c,\mathcal{C}_{ac}^k) \ge (m-1) \mathcal{M}_k$. Lemma~\ref{lem:RTDupperBound} then completes the proof.
\end{proof}

The following lemma then shows that no concept $c'$ in $\mathcal{C}^k_{ac}$ has a smaller teaching dimension than any maximal concept in $\mathcal{C}^k_{ac}$ that subsumes $c'$.

\begin{lemma}\label{lem:nonmaximal} Let $n\ge 1$, $m\ge 2$, and $k\le n-1$, and consider the class $\mathcal{C}_{ac}^k$ of all complete acyclic $k$-bounded CP-nets over $n$ variables of domain size $m$. Then each non-maximal $c'\in\mathcal{C}_{ac}^k$ is strictly subsumed by some $c\in\mathcal{C}_{ac}^k$ such that $\TD(c',\mathcal{C}_{ac}^k)\!\geq\! \TD(c,\mathcal{C}_{ac}^k)$. In particular, if $c$ is any maximal concept in $\mathcal{C}_{ac}^k$, then $\TD(c'',\mathcal{C}_{ac}^k)\!\geq\! \TD(c,\mathcal{C}_{ac}^k)$ for each $c''\in \mathcal{C}_{ac}^k$ subsumed by $c$. 
\label{lem:subsumeTD}
\end{lemma}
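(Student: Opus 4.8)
The plan is to reduce both assertions to a single \emph{core claim}: for every non-maximal $c'\in\mathcal{C}_{ac}^k$ one has $\TD(c',\mathcal{C}_{ac}^k)\ge (m-1)\mathcal{M}_k$. Granting this, both parts follow quickly from Lemma~\ref{lem:maximal}, which states that every maximal concept $c$ satisfies $\TD(c,\mathcal{C}_{ac}^k)=(m-1)\mathcal{M}_k$. For the first part I would take $c$ to be any maximal concept subsuming $c'$ (one exists because subsumption is transitive and $\mathcal{C}_{ac}^k$ is finite, so some subsumer of $c'$ is itself maximal); since $c'$ is non-maximal and $c$ is maximal we have $c\ne c'$, hence $c$ strictly subsumes $c'$, and the core claim gives $\TD(c',\mathcal{C}_{ac}^k)\ge (m-1)\mathcal{M}_k=\TD(c,\mathcal{C}_{ac}^k)$. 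For the ``in particular'' part, let $c$ be maximal and $c''$ be subsumed by $c$: if $c''=c$ the inequality is trivial, and otherwise $c''$ is strictly subsumed by $c$, hence non-maximal, so the core claim again yields $\TD(c'',\mathcal{C}_{ac}^k)\ge (m-1)\mathcal{M}_k=\TD(c,\mathcal{C}_{ac}^k)$.

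To prove the core claim I would show directly that any teaching set $T'$ for $c'$ contains at least $(m-1)\mathcal{M}_k$ distinct swaps, by exhibiting that many pairwise disjoint instance ``regions,'' each of which $T'$ is forced to hit. Fix a topological order $\pi$ of the CP-net $N'$ representing $c'$ and overlay on $\pi$ the maximal indegree structure from the proof of Lemma~\ref{lem:Mk}: assign to the variable in position $p$ a parent set $Pa(v)\supseteq Pa_{N'}(v)$ of size exactly $\min\{k,p-1\}$, drawn from the predecessors of $v$ in $\pi$. This is possible since $N'$ is acyclic and $k$-bounded, and it produces $\sum_v m^{|Pa(v)|}=\mathcal{M}_k$ context-slots $(v,\gamma)$ with $\gamma\in\mathcal{O}_{Pa(v)}$. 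For each slot $(v,\gamma)$ and each $j\in\{1,\ldots,m-1\}$ I would build a competitor $c^*_{v,\gamma,j}\in\mathcal{C}_{ac}^k$ that copies all of $c'$'s orders except that, in the single context $\gamma$, it transposes the $j$-th and $(j+1)$-th values of $v$ in the order $c'$ induces there. Because $Pa(v)\supseteq Pa_{N'}(v)$, the order $c'$ assigns throughout region $\gamma$ is constant, so the transposition is well defined and alters the label of exactly those swaps with swapped variable $v$, context $\gamma$ on $Pa(v)$, and flipped value-pair $\{y_j,y_{j+1}\}$; call this nonempty instance set $S_{v,\gamma,j}$. On $S_{v,\gamma,j}$ the concepts $c^*_{v,\gamma,j}$ and $c'$ disagree, and they agree everywhere else.

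The competitor lies in $\mathcal{C}_{ac}^k$: it is complete (a transposition of a total order is again a total order), it is $k$-bounded and acyclic because all its edges still point backward along $\pi$ (this is precisely where drawing the enlarged parent sets from $\pi$-predecessors is used), and passing to its minimal form only removes dummy parents, leaving the represented concept in the class. Since $c^*_{v,\gamma,j}\ne c'$, any teaching set $T'$ for $c'$ must contain an element of $S_{v,\gamma,j}$. The sets $S_{v,\gamma,j}$ are pairwise disjoint — separated by the swapped variable $v$, by the full assignment $\gamma$ to $Pa(v)$, and within a slot by the value-pair $\{y_j,y_{j+1}\}$ — and there are exactly $(m-1)\mathcal{M}_k$ of them, whence $|T'|\ge (m-1)\mathcal{M}_k$ and the core claim follows.

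The main obstacle I anticipate is verifying that each competitor is a legitimate, distinct member of $\mathcal{C}_{ac}^k$, that is, reconciling all the in-class constraints at once: acyclicity must be preserved (handled by committing to one shared topological order $\pi$ and enlarging parent sets only among $\pi$-predecessors), the bound $k$ must not be exceeded (guaranteed by $|Pa(v)|=\min\{k,p-1\}\le k$), and the minimal-form requirement must be addressed even though a competitor may depend on a newly added parent only through the single flipped context. I would dispatch the last point by noting that it suffices for the \emph{concept} represented by $c^*_{v,\gamma,j}$ to lie in $\mathcal{C}_{ac}^k$ and to differ from $c'$; whether the syntactic net is already minimal is immaterial, since the minimal form of an acyclic $k$-bounded net is again acyclic and $k$-bounded. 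A secondary point to confirm carefully is the nonemptiness and pairwise disjointness of the regions $S_{v,\gamma,j}$, on which the final counting rests.
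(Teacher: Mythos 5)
Your proof is correct, but it takes a genuinely different route from the paper's. The paper argues locally: given a non-maximal $c'$, it adds the maximum possible number of incoming edges to a \emph{single} variable $v$ (possible by non-maximality), obtaining a $c$ that strictly subsumes $c'$ and differs from it only in $\CPT(v)$; it then shows any teaching set $T'$ for $c'$ must already contain $(m-1)z$ examples with swapped variable $v$ (where $z$ is the size of $\CPT(v)$ in $c$), and that replacing exactly those examples converts $T'$ into a teaching set for $c$, giving $\TD(c',\mathcal{C}_{ac}^k)\ge\TD(c,\mathcal{C}_{ac}^k)$; the ``in particular'' clause follows by iterating, with termination guaranteed since each step strictly increases size. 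You instead prove the global bound $\TD(c',\mathcal{C}_{ac}^k)\ge(m-1)\mathcal{M}_k$ for \emph{every} concept via an adversary argument with $(m-1)\mathcal{M}_k$ pairwise disjoint competitor regions built on a maximal overlay of a topological order (mirroring Lemma~\ref{lem:Mk}), and then invoke Lemma~\ref{lem:maximal}. Your route buys two things: it yields the lower bound of Lemma~\ref{lem:TDbound} in one stroke (the paper later re-derives it from this lemma plus Lemma~\ref{lem:maximal}), and it sidesteps the somewhat delicate verification in the paper that the modified $T'$ really teaches $c$; your minimal-form remark is exactly the needed fix, since transposing a single context can indeed render a parent dummy. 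One point to tighten: extracting a \emph{maximal} subsumer of $c'$ needs strict subsumption to be well-founded, i.e., antisymmetric, not just transitive on a finite class; this holds for complete CP-nets in minimal form (mutual subsumption forces, variable by variable, mutually containing contexts, hence equal parent sets and tables), but it deserves a sentence.
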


\begin{proof}
From the graph $G'$ for $c'$, we build a graph $G$ by adding the maximum possible number of edges to a single variable $v$. As $c'$ is not maximal, it is possible to add at least one edge without violating the indegree bound $k$. The CP-nets corresponding to $G$ and $G'$ differ only in $\CPT(v)$. Let $c$ be the concept representing $G$ and $z$ be the size of its $\CPT$ for $v$. A smallest teaching set $T'$ for $c'$ can be modified to a teaching set for $c$ by replacing only those examples that refer to the swapped variable $v$; $(m-1)z$ examples suffice. To distinguish $c'$ from $c$, $T'$ must contain at least $(m-1)z$ examples referring to the swapped variable $v$ ($m-1$ for each context in $\CPT(v)$ in $c$). Hence $\TD(c',\overline{\mathcal{C}}_{ac}^k)\geq$ $\TD(c,\overline{\mathcal{C}}_{ac}^k)$.
\end{proof}

As a consequence, the easiest to teach concepts in $\mathcal{C}_{ac}^k$ are the maximal ones.
Using these lemmas, one can determine the recursive teaching dimension of the concept classes $\mathcal{C}_{ac}^k$, for any $k$.

\begin{theorem}\label{thm:RTDac} Let $n\ge 1$, $m\ge 2$, and $k\le n-1$. For the recursive teaching dimension of the class $\mathcal{C}_{ac}^k$ of all complete acyclic $k$-bounded CP-nets over $n$ variables of domain size $m$, we obtain:
\[\RTD(\mathcal{C}_{ac}^k)=(m-1)\mathcal{M}_k=(m-1)(n-k)m^k+m^k-1\,.\]
As before, here $\mathcal{C}_{ac}^k$ is defined over the instance space $\mathcal{X}_{swap}$.
\end{theorem}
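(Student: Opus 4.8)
The plan is to derive matching upper and lower bounds on $\RTD(\mathcal{C}_{ac}^k)$, both collapsing to $(m-1)\mathcal{M}_k$ via the subsumption machinery already developed in Lemmas~\ref{lem:RTDupperBound}, \ref{lem:maximal}, and \ref{lem:nonmaximal}. Recall from Definition~\ref{def:RTD} that $\RTD(\mathcal{C}_{ac}^k)=\max_i\TD_{min}(\mathcal{C}_i)$ over the recursive sequence $\mathcal{C}_0\supseteq\mathcal{C}_1\supseteq\cdots$. Hence it suffices to bound $\TD_{min}(\mathcal{C}_i)$ uniformly from above across all stages, and to exhibit a single stage whose $\TD_{min}$ is at least $(m-1)\mathcal{M}_k$ for the matching lower bound.

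For the upper bound $\RTD(\mathcal{C}_{ac}^k)\le(m-1)\mathcal{M}_k$, I would argue that \emph{every} stage of the recursion contains a cheaply teachable concept. Since each $\mathcal{C}_i\subseteq\mathcal{C}_{ac}^k$ is finite and subsumption is a partial order, $\mathcal{C}_i$ has at least one concept $c$ that is maximal \emph{within} $\mathcal{C}_i$. Applying Lemma~\ref{lem:RTDupperBound} with the ambient subclass taken to be $\mathcal{C}=\mathcal{C}_i$ gives $\TD(c,\mathcal{C}_i)\le(m-1)size(c)\le(m-1)\mathcal{M}_k$, the last step using $size(c)\le\mathcal{M}_k$. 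Therefore $\TD_{min}(\mathcal{C}_i)\le(m-1)\mathcal{M}_k$ for every $i$, and taking the maximum over $i$ yields the bound. The one thing to verify is that Lemma~\ref{lem:RTDupperBound} is genuinely available for an arbitrary subclass, which it is, since it was stated for all $\mathcal{C}\subseteq\mathcal{C}_{ac}^k$.

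For the lower bound $\RTD(\mathcal{C}_{ac}^k)\ge(m-1)\mathcal{M}_k$, I would prove the stronger claim that $\TD_{min}(\mathcal{C}_0)=(m-1)\mathcal{M}_k$ already at the very first stage; since $\RTD\ge\TD_{min}(\mathcal{C}_0)$, this immediately suffices. The argument is that \emph{no} concept in the full class is teachable with fewer than $(m-1)\mathcal{M}_k$ examples: any $c''\in\mathcal{C}_{ac}^k$ can be enlarged by repeatedly adding parents and completing its tables (the class is finite, so the process terminates) until it is subsumed by some maximal concept $c$ of $\mathcal{C}_{ac}^k$. The second part of Lemma~\ref{lem:nonmaximal} then gives $\TD(c'',\mathcal{C}_{ac}^k)\ge\TD(c,\mathcal{C}_{ac}^k)$, and Lemma~\ref{lem:maximal} gives $\TD(c,\mathcal{C}_{ac}^k)=(m-1)\mathcal{M}_k$. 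Hence every concept has teaching dimension at least $(m-1)\mathcal{M}_k$, with equality attained by the maximal concepts, so $\TD_{min}(\mathcal{C}_{ac}^k)=(m-1)\mathcal{M}_k$.

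Combining the two directions gives $\RTD(\mathcal{C}_{ac}^k)=(m-1)\mathcal{M}_k$, and the closed form $(m-1)(n-k)m^k+m^k-1$ follows by substituting $\mathcal{M}_k=(n-k)m^k+\frac{m^k-1}{m-1}$ from Lemma~\ref{lem:Mk}. Given the lemmas, the remaining work is light; I expect the main point to be conceptual rather than computational, namely that the upper-bound lemma must be invoked \emph{separately within each recursive stage} $\mathcal{C}_i$ (treating $\mathcal{C}_i$ as its own ambient class, where maximality and size may both shrink), while the lower bound is a statement purely about the full class $\mathcal{C}_0$. Conflating these two ambient classes is the natural pitfall to guard against.
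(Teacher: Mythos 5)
Your proposal is correct and follows essentially the same route as the paper: the upper bound via Lemma~\ref{lem:maximal} and repeated application of Lemma~\ref{lem:RTDupperBound} to a concept maximal within each stage $\mathcal{C}_i$, and the lower bound via Lemma~\ref{lem:nonmaximal} combined with Lemma~\ref{lem:maximal} to show $\TD_{min}(\mathcal{C}_{ac}^k)=(m-1)\mathcal{M}_k$ already at stage $\mathcal{C}_0$. Your closing remark about keeping the two ambient classes separate is exactly the point the paper's terse proof leaves implicit, so no gap remains.
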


\begin{proof}
The upper bound on the recursive teaching dimension follows from Lemma~\ref{lem:maximal} and then repeated application of Lemma~\ref{lem:RTDupperBound}. The lower bound follows from Lemma~\ref{lem:nonmaximal} in combination with Lemma~\ref{lem:maximal}---these two lemmas state that the smallest teaching dimension of any concept in $\mathcal{C}_{ac}^k$ is that of any maximal concept, and that this teaching dimension value equals $(m-1)\mathcal{M}_k$.
\end{proof}

\subsubsection{An Example Illustrating Teaching Sets}
For the case of complete CP-nets, we will now illustrate our results on teaching sets through examples. Let us consider the case where $V=\{A,B,C\}$, $D_A=\{a,\bar{a}\}$, $D_B=\{b,\bar{b}\}$, and $D_C=\{c,\bar{c}\}$. We consider the class $\mathcal{C}_{ac}^{n-1}$ of all complete acyclic CP-nets defined over $V$. Figure \ref{fig:subsumeExample} shows three concepts, $N_1$, $N_2$, and $N_3$, from this class.

In order to illustrate Lemma \ref{lem:RTDupperBound}, let us first compute an upper bound on the teaching dimension of the maximal concept $N_1$. The claim is that $\TD(N_1,\mathcal{C}_{ac}^{n-1})$ is less than or equal to the size of $N_1$, which is 7. Consider a set of entailments $\mathcal{E}$ corresponding to the teaching set $T$ as described in the proof of Lemma \ref{lem:RTDupperBound}. One possibility for $\mathcal{E}$ is the set consisting of the entailments $abc\succ \bar{a}bc$, $abc\succ a\bar{b}c$, $\bar{a}\bar{b}\bar{c}\succ\bar{a}b\bar{c}$, $abc\succ ab\bar{c}$, $a\bar{b}c\succ a\bar{b}\bar{c}$, $\bar{a}bc\succ \bar{a}b\bar{c}$, and $\bar{a}\bar{b}\bar{c}\succ \bar{a}\bar{b}c$. $\mathcal{E}$ is obviously consistent with $N_1$. A closer look shows that it is also a teaching set for $N_1$ with respect to $\mathcal{C}_{ac}^{n-1}$. To see why, consider the partition $\mathcal{E}= \mathcal{E}_A\cup\mathcal{E}_B\cup\mathcal{E}_C$, where
\begin{itemize}
\item $\mathcal{E}_A=\{abc\succ \bar{a}bc\}$,
\item $\mathcal{E}_B=\{abc\succ a\bar{b}c, \bar{a}\bar{b}\bar{c}\succ \bar{a}b\bar{c}\}$, and
\item $\mathcal{E}_C=\{abc\succ ab\bar{c}, a\bar{b}c\succ a\bar{b}\bar{c}, \bar{a}bc\succ \bar{a}b\bar{c}, \bar{a}\bar{b}\bar{c}\succ\bar{a}\bar{b}c\}$.
\end{itemize} 
The set $\mathcal{E}_C$ shows that $C$ has at least two parents, which, in our case, have to be $A$ and $B$. As a result, $Pa(C)=\{A,B\}$ and $\CPT(C)$ is defined precisely: it is the $\CPT$ with the maximum possible parent set $\{A,B\}$ and $\mathcal{E}_C$ contains a statement for every context over this parent set. Similarly, $\mathcal{E}_{B}$ shows that there must be at least one parent for $B$. Given the fact that $B\in Pa(C)$ we conclude that $Pa(B)=\{A\}$ as there is no other way to explain $\mathcal{E}_C$ and $\mathcal{E}_B$ together. Therefore, we have identified $\CPT(B)$ precisely. In an acyclic CP-net, thus $\CPT(A)$ must be unconditional (which is consistent with $\mathcal{E}_A$ having one entailment only). Thus, $\mathcal{E}$ is a teaching set for $N_1$ and $\TD(N_1,\mathcal{C}_{ac}^{n-1})\leq size(N_1)=7$.

Now, let us illustrate Lemma \ref{lem:subsumeTD}. For this purpose, consider the teaching dimension of $N_3$ which is not maximal w.r.t.~$\mathcal{C}_{ac}^{n-1}$. According to Lemma \ref{lem:subsumeTD}, there exists a concept $N$ that subsumes $N_3$ and for which $\TD(N_3,\mathcal{C}_{ac}^{n-1})\geq \TD(N,\mathcal{C}_{ac}^{n-1})$. We follow the proof of Lemma~\ref{lem:subsumeTD}. If $G'$ is the graph of $N_3$, we construct a new graph $G$ as follows: $G$ results from $G'$ by selecting the variable $C$ and adding two incoming edges (i.e., the maximum possible number of edges) to the node labeled by this variable. A concept with such graph is $N_1$. Let $\mathcal{E}$ be a set of entailments corresponding to a teaching set for $N_3$ with size equal to $\TD(N_3,\mathcal{C}_{ac}^{n-1})$. We claim that the number of entailments in $\mathcal{E}$ whose swapped variable is $C$ has to be greater than or equal $4$. To see this, consider the CP-net $N$ with graph $G$ where $\CPT(A)$ and $\CPT(B)$ are identical to the corresponding conditional preference tables in $N_3$. Moreover, for $\CPT(C)$ in $N$, for every entailment in $e\in \mathcal{E}$ whose swapped variable is $C$, a statement $u:c\succ \bar{c}$ or $u:\bar{c}\succ c$ is created in agreement with $e$, where $u$ provides the values of $A$ and $B$ in $e$. For instance if the subset of $\mathcal{E}$ referring to swapped variable $C$ is $\{abc\succ ab\bar{c} , \bar{a}bc\succ \bar{a}b\bar{c}\}$, then one possible table for $\CPT(C)$ in $N$ is $\{b:c\succ \bar{c}, \bar{b}:\bar{c}\succ c\}$.\footnote{Alternatively, the $\CPT$ for $C$ in $N_1$ can also be used.}

Note that $\TD(N_3,\mathcal{C}_{ac}^{n-1})\le \TD(\mathcal{C}_{ac}^{n-1})$.  By Table~\ref{resultsSummary}, $\TD(\mathcal{C}_{ac}^{n-1})$ equals $(m-1)nm^{n-1}$ which, for $m=2$ and $n=3$ evaluates to 12. 



In the sum, we discussed why $\TD(N_1,\mathcal{C}_{ac}^{n-1})\le 7$ and $\TD(N_1,\mathcal{C}_{ac}^{n-1})\le \TD(N_3,\mathcal{C}_{ac}^{n-1})\le 12$. It is actually the case that $\TD(N_1,\mathcal{C}_{ac}^{n-1})=7$, $\TD(N_2,\mathcal{C}_{ac}^{n-1})=9$ and $\TD(N_3,\mathcal{C}_{ac}^{n-1})=10$. Figure \ref{fig:teachingSets} shows one possible example of minimum teaching sets for the three networks. In the displayed choice of teaching sets, we selected 10 instances for teaching $N_3$, use nine of them (some with flipped labels) to teach $N_2$ and seven of them (some with flipped labels) to teach $N_1$. While this is not the only choice of smallest possible teaching sets for these concepts, it illustrates that some subset of the instances that are used to teach some concept, with appropriate labeling, can also be used to teach a concept that subsumes it. In Figure~\ref{fig:teachingSets}, we show which instances are removed from the teaching set of $N_3$ by crossing them out. For instance, the entailment $ab\bar{c}\succ \bar{a}b\bar{c}$ is not needed in the teaching set of $N_2$ as the remaining examples provide enough evidence to conclude that $A\in Pa(B)$ and $A\in Pa(C)$; thus one entailment for $\CPT(A)$ suffices for teaching $\CPT(A)$.  


\begin{figure}
\centering
\begin{tikzpicture}
\node(text1)[xshift=-4cm]{$N_1$};
\node(text2)[right=of text1,xshift=1cm]{$N_2$};
\node(text3)[right=of text2,xshift=1cm]{$N_3$};
\node(N1)[above=of text1]{\makecell[c]{
$abc\succ \bar{a}bc$\\
$\xcancel{ab\bar{c}\succ \bar{a}b\bar{c}}$\\
$abc\succ a\bar{b}c$\\ 
$\xcancel{ab\bar{c}\succ a\bar{b}\bar{c}}$\\
$\bar{a}\bar{b}\bar{c}\succ \bar{a}b\bar{c}$\\
$\xcancel{\bar{a}\bar{b}c \succ \bar{a}bc}$\\
$abc\succ ab\bar{c}$\\ 
$a\bar{b}c\succ a\bar{b}\bar{c}$\\
$\bar{a}\bar{b}\bar{c}\succ \bar{a}\bar{b}c$ \\ 
$\bar{a}bc\succ \bar{a}b\bar{c}$
}};
\node(N2)[above=of text2]{\makecell[c]{
	$abc\succ \bar{a}bc$\\
	$\xcancel{ab\bar{c}\succ \bar{a}b\bar{c}}$\\
	$abc\succ a\bar{b}c$\\ 
	$ab\bar{c}\succ a\bar{b}\bar{c}$\\
	$\bar{a}\bar{b}\bar{c} \succ \bar{a}b\bar{c}$\\
	$\bar{a}\bar{b}c\succ \bar{a}bc$\\
	$abc\succ ab\bar{c}$\\ 
	$a\bar{b}c\succ a\bar{b}\bar{c}$\\
	$\bar{a}\bar{b}\bar{c}\succ \bar{a}\bar{b}c $ \\ 
	\begin{boldmath}$\bar{a}b\bar{c}\succ\bar{a}bc$\end{boldmath}
}};

\node(N3)[above=of text3]{\makecell[c]{
		$abc\succ \bar{a}bc$\\
		$ab\bar{c}\succ \bar{a}b\bar{c}$\\
		$abc\succ a\bar{b}c$\\ 
		$ab\bar{c}\succ a\bar{b}\bar{c}$\\
		$\bar{a}\bar{b}\bar{c}\succ \bar{a}b\bar{c}$\\
		$\bar{a}\bar{b}c\succ\bar{a}bc$\\
		$abc\succ ab\bar{c}$\\ 
		$a\bar{b}c\succ a\bar{b}\bar{c}$\\
		\begin{boldmath}$\bar{a}\bar{b}c\succ\bar{a}\bar{b}\bar{c}$\end{boldmath} \\ 
		$\bar{a}bc\succ \bar{a}b\bar{c}$
	}};
	
	\end{tikzpicture}
	\caption{Teaching sets for the three networks in Figure \ref{fig:subsumeExample} w.r.t.~the class $\mathcal{C}_{ac}^{n-1}$. }
	\label{fig:teachingSets}
\end{figure}

\subsubsection{Structural Properties of Interest to Learning-Theoretic Studies}

The class $\mathcal{C}_{ac}^{n-1}$ is interesting from a structural point of view, as its VC dimension equals its recursive teaching dimension, see Table~\ref{resultsSummary}. In general, the VC dimension can exceed the recursive teaching dimension by an arbitrary amount, and it can also be smaller than the recursive teaching dimension~\cite{DFSZ14}. Simon and Zilles~\cite{SimonZ15} posed the question whether the recursive teaching dimension can be upper-bounded by a function that is linear in the VC dimension. So far, the best known upper bound on the recursive teaching dimension is quadratic in the VC dimension~\cite{HuWLW17}. 

The computational learning theory literature knows of some structural properties under which the VC dimension and the recursive teaching dimension coincide. In Appendix\ref{sec:struct}, we show that none of these structural properties apply to the class of complete unbounded acyclic CP-nets.

\subsection{Teaching Dimension}

Computing the teaching dimension is somewhat more involved. We will introduce some terminology first. In a teaching set, the purpose of a \emph{conflict pair}\/ for a variable $v_i$ given a variable $v_j$ is to demonstrate that $v_j$ is a parent of $v_i$. 

\begin{definition}\label{def:conflict}
Let $n\ge 1$, $m\ge 2$. Let $N$ be a CP-net over $n$ variables of domain size $m$,  and let $v_i,v_j$ be variables in $N$. A conflict pair for $v_i$ given $v_j$ is a pair $(x,x')$ of swaps such that (i) $V(x)=V(x')=v_i$, (ii) $x.1$ and $x'.1$ differ only in $v_j$, (iii) $x.2$ and $x'.2$ differ only in $v_j$, and (iv) $N$ entails $x$ but not $x'$. 
\end{definition}

Obviously, the variable $v_j$ is a parent of the variable $v_i$ if and only if a conflict pair for $v_i$ given $v_j$ exists. For instance, in the CP-net $N_1$ in Figure~\ref{fig:subsumeExample}, the variable $C$ has a conflict pair given $A$, since $A$ is a parent of $C$. An example of a conflict pair here would be $((\bar{a}\bar{b}\bar{c},\bar{a}\bar{b}c)\,,(a\bar{b}\bar{c},a\bar{b}c))$. The CP-net $N_1$ entails $(\bar{a}\bar{b}\bar{c},\bar{a}\bar{b}c)$, but it does not entail $(a\bar{b}\bar{c},a\bar{b}c)$. In the same figure, in the CP-net $N_3$, the variable $C$ has no parents, so that each context over $A$ and $B$ results in the same preference for $C$. In other words, $C$ has no conflict pair given $A$ and no conflict pair given $B$. The variable $B$ in $N_3$ has a conflict pair given $A$ but none given $C$.

When a variable $v_i$ in a CP-net $c$ has the largest possible number of parents, conflict pairs for each parent are an efficient way of determining the parent set of $v_i$. However, to teach the learner the absence of parents, e.g., when the parent set of $v_i$ is empty, the previously introduced notion of universal set becomes relevant. The idea is to use swaps over $v_i$ in which the contexts form an $(m,n-1,k)$-universal set in order to show that $v_i$ has no parents at all. If however $v_i$ does have parents, then such sets of swaps can help determine the preference relations for those parents.

\begin{definition}
Let $n\ge 1$, $m\ge 2$, and $0\le k\le n-1$. Let $V=\{v_1,\ldots,v_n\}$ be a set of $n$ distinct variables, where each $v_i\in V$ has the domain $\{v^i_1,\ldots,v^i_m\}$ of size $m$. Fix any $(m,n-1,k)$-universal set $U$, and let $v_i\in V$. Then define the set $U_{v_i}$ of contexts over $V\setminus\{v_i\}$ as follows.
\[
U_{v_i}=\{(v^1_{j_1},\ldots,v^{i-1}_{j_{i-1}},v^{i+1}_{j_{i+1}},\ldots,v^n_{j_n})\mid (j_1,\ldots,j_{i-1},j_{i+1},\ldots,j_n)\in U\}\,.
\]
We call $U_{v_i}$ the context set imposed by $U$ and $v_i$. 

A set $\mathcal{F}\subseteq \mathcal{X}_{swap}$ of swaps is called a swap expression of $U_{v_i}$, if $|\mathcal{F}|=(m-1)|U_{v_i}|$ and, for each $(v^1_{j_1},\ldots,v^{i-1}_{j_{i-1}},v^{i+1}_{j_{i+1}},\ldots,v^n_{j_n})\in U_{v_i}$ there is an order $a_1\succ a_2\succ\ldots\succ a_m$ over the domain $D_{v_i}=\{a_1,\ldots,a_m\}$ such that 
\[((v^1_{j_1},\ldots,v^{i-1}_{j_{i-1}},a_t,v^{i+1}_{j_{i+1}},\ldots,v^n_{j_n}),\,(v^1_{j_1},\ldots,v^{i-1}_{j_{i-1}},a_{t+1},v^{i+1}_{j_{i+1}},\ldots,v^n_{j_n}))\in \mathcal{F}\]
for all $t$ with $1\le t\le m-1$.
\end{definition}

For example, consider $n=4$, $m=2$, and $k=2$, and four variables $A$, $B$, $C$, and $D$ with domains $\{a,\bar{a}\}$, $\{b,\bar{b}\}$, $\{c,\bar{c}\}$, and $\{d,\bar{d}\}$, respectively. A $(2,3,2)$-universal set of vectors is, e.g.,
\[
U=\{ (0,0,0)\,,(1,0,1)\,,(0,1,1)\,,(1,1,0)  \}\,,
\]
since any projection of these vectors onto two of their three components yields all four binary vectors of length 2. The context set $U_A$ imposed by $U$ and $A$ is then 
\[
U_A=\{bcd, \bar{b}c\bar{d}, b\bar{c}\bar{d}, \bar{b}\bar{c}d\}\,.
\]
Finally, a swap expression of $U_A$ is given as follows.
\[
\mathcal{F}=\{(abcd,\bar{a}bcd),(a\bar{b}c\bar{d},\bar{a}\bar{b}c\bar{d}),(ab\bar{c}\bar{d},\bar{a}b\bar{c}\bar{d}),(a\bar{b}\bar{c}d,\bar{a}\bar{b}\bar{c}d)\}\,.
\]
If $A$ had a third value $\bar{\bar{a}}$ in its domain, then $\mathcal{F}$ would be twice as large; for each swap $(ab^*c^*d^*,\bar{a}b^*c^*d^*)$ in $\mathcal{F}$, one would also include, for instance, $(\bar{a}b^*c^*d^*,\bar{\bar{a}}b^*c^*d^*)$.

\begin{lemma}\label{lem:TDuniversal} 
Let $n\ge 1$, $m\ge 2$, and $0\le k\le n-1$. Let $N$ be any $k$-bounded CP-net over $n$ variables of domain size $m$. Let $v_i\in V$ be any variable in $N$. 

The variable $v_i$ has at least one parent in $N$ if and only if, for every $(m,n-1,k)$-universal set $U$, the imposed context set $U_{v_i}$ contains two distinct contexts $\gamma,\gamma'\in\mathcal{O}_{V\setminus\{v_i\}}$ such that $N$ entails $\gamma:a\succ\bar{a}$ but not $\gamma':\bar{a}\succ a$, for some values $a,\bar{a}\in D_{v_i}$.
\end{lemma}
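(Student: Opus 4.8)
The plan is to prove the biconditional by establishing the two implications separately, first fixing a reading of the displayed condition. For the statement to be meaningful it must assert a \emph{conflict}: that the two contexts $\gamma,\gamma'$ impose opposite preferences on some ordered pair of values of $v_i$, so that the order restricted to $\{a,\bar a\}$ is reversed when passing from $\gamma$ to $\gamma'$ (equivalently, for complete $N$, that $N$ entails $\gamma:a\succ\bar a$ but does not entail $\gamma':a\succ\bar a$). The three ingredients I would lean on are: the minimal-form assumption (no dummy parents), the fact that the relation $\succ^{v_i}_\gamma$ depends on $\gamma$ only through its projection $\gamma[Pa(v_i)]$, and the defining surjectivity of an $(m,n-1,k)$-universal set.

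For the forward direction, suppose $v_i$ has at least one parent and fix an arbitrary $(m,n-1,k)$-universal set $U$. Because $N$ is in minimal form, a nonempty $Pa(v_i)$ cannot consist of dummy variables, so the contexts cannot all induce the same order on $D_{v_i}$; hence there exist parent-assignments $u,u'\in\mathcal{O}_{Pa(v_i)}$ with $\succ^{v_i}_u\neq\succ^{v_i}_{u'}$. Two distinct total orders on $D_{v_i}$ must disagree on the relative order of at least one pair, which yields values $a,\bar a\in D_{v_i}$ with $a\succ^{v_i}_u\bar a$ and $\bar a\succ^{v_i}_{u'}a$. It then remains to realize $u$ and $u'$ as projections of contexts in $U_{v_i}$. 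Since $|Pa(v_i)|\le k\le n-1$, I would extend the coordinate set corresponding to $Pa(v_i)$ to a size-$k$ coordinate set $Z\subseteq V\setminus\{v_i\}$; universality guarantees that the projection of $U$ onto $Z$ is all of $\{1,\ldots,m\}^k$, and hence its projection onto the sub-coordinates indexed by $Pa(v_i)$ is surjective as well. Thus $U_{v_i}$ contains contexts $\gamma,\gamma'$ with $\gamma[Pa(v_i)]=u$ and $\gamma'[Pa(v_i)]=u'$; these are distinct because they differ on $Pa(v_i)$, and since the preference on $v_i$ is a function of $Pa(v_i)$ alone, $N$ entails $\gamma:a\succ\bar a$ while $N$ entails $\gamma':\bar a\succ a$, i.e., the required conflict on $\{a,\bar a\}$.

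For the reverse direction I would argue contrapositively. If $v_i$ has no parent, then a single total order governs $D_{v_i}$ in every context, so for any two contexts $\gamma,\gamma'$ and any values $a,\bar a$ one has $a\succ^{v_i}_\gamma\bar a$ iff $a\succ^{v_i}_{\gamma'}\bar a$; consequently no context set $U_{v_i}$ can ever exhibit the conflict. As at least one $(m,n-1,k)$-universal set exists (the full set $\{1,\ldots,m\}^{n-1}$ is trivially universal for every $k\le n-1$), the ``for every universal set'' condition fails, which proves the contrapositive. I expect the only real subtlety to lie in the forward direction: correctly invoking minimality to pass from ``$Pa(v_i)\neq\emptyset$'' to ``two parent-assignments with genuinely different orders,'' and then handling the case $|Pa(v_i)|<k$ by padding the parent coordinates to a full size-$k$ set before applying universality. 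The rest is bookkeeping about projections.
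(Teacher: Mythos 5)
Your proposal is correct and takes essentially the same route as the paper's proof: for the forward direction, minimality (no dummy parents) yields two parent-assignments with conflicting orders on some pair $a,\bar a\in D_{v_i}$, and universality of $U$ lets you extend them to contexts in $U_{v_i}$ (your explicit padding of $Pa(v_i)$ to a size-$k$ coordinate set is just a more careful spelling of the paper's appeal to the definition), while the reverse direction is the same observation that a conflict forces the preference over $D_{v_i}$ to be conditional, argued contrapositively rather than directly. Your ``conflict'' reading of the displayed condition is the intended one (the paper's literal phrasing ``entails $\gamma:a\succ\bar a$ but not $\gamma':\bar a\succ a$'' contains the slip you flagged, both in the statement and in its proof), and your extra remark that universal sets exist, so the ``for every $U$'' hypothesis is non-vacuous, is a minor but sound refinement.
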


\begin{proof}
First, suppose $Pa(v_i)\ne\emptyset$ in $N$ and let $U$ be any $(m,n-1,k)$-universal set. Then there are $a,\bar{a}\in D_{v_i}$ and two distinct contexts $\alpha,\alpha'\in\mathcal{O}_{Pa(v_i)}$ such that $N$ entails $\alpha:a\succ\bar{a}$ but not $\alpha':\bar{a}\succ a$.\footnote{Recall that variables do not have dummy parents, i.e., every parent listed in the parent set of $v_i$ must affect the preference over $D_{v_i}$.} Since $U$ is an $(m,n-1,k)$-universal set and the length of $\alpha$ and $\alpha'$ is at most $k$, there are some contexts $\beta,\beta'\in\mathcal{O}_{V\setminus(Pa(v_i)\cup\{v_i\})}$, such that the contexts $\gamma,\gamma'\in\mathcal{O}_{V\setminus\{v_i\}}$ belong to $U_{v_i}$, where $\gamma,\gamma'$ result from $\alpha,\alpha'$ by extension with $\beta$ and $\beta'$, respectively. Clearly, $N$ entails $\gamma:a\succ\bar{a}$ but not $\gamma':\bar{a}\succ a$, since no variable over which $\beta$ and $\beta'$ are defined has an influence on the preference over $v_i$.

Second, suppose that there are two distinct contexts $\gamma,\gamma'\in\mathcal{O}_{V\setminus\{v_i\}}$ such that $N$ entails $\gamma:a\succ\bar{a}$ but not $\gamma':\bar{a}\succ a$. This means that the preference over $D_{v_i}$ is conditional, i.e., $v_i$ has a parent in $N$.
\end{proof}

The following lemma gives upper and lower bounds on the teaching dimension of any single concept in the class of complete acyclic $k$-bounded CP-nets.

\begin{lemma}\label{lem:TDbound} 
Let $n\ge 1$, $m\ge 2$, and $0\le k\le n-1$. Let $\mathcal{C}_{ac}^k$ be the class of all complete acyclic $k$-bounded CP-nets over $n$ variables of domain size $m$, over the instance space $\mathcal{X}_{swap}$. Fix $c\in \mathcal{C}_{ac}^k$ and let $e_c$ denote the number of edges in $c$. The teaching dimension of the concept $c$ with respect to the class $\mathcal{C}_{ac}^k$ is bounded as follows.
\[(m-1)\mathcal{M}_k\le \TD(c,\mathcal{C}_{ac}^k)\le e_c+n(m-1)\mathcal{U}_k\,.\]
\end{lemma}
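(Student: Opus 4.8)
The plan is to prove the two inequalities separately, reusing the subsumption machinery for the lower bound and combining universal sets with conflict pairs for the upper bound.

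For the lower bound $\TD(c,\mathcal{C}_{ac}^k)\ge (m-1)\mathcal{M}_k$, I would invoke Lemmas~\ref{lem:maximal} and~\ref{lem:nonmaximal}. Every concept $c$ is subsumed by some maximal concept $\hat{c}$, obtained by adding parents until each variable reaches the indegree bound or is saturated. Lemma~\ref{lem:maximal} gives $\TD(\hat{c},\mathcal{C}_{ac}^k)=(m-1)\mathcal{M}_k$, and the ``in particular'' clause of Lemma~\ref{lem:nonmaximal} yields $\TD(c,\mathcal{C}_{ac}^k)\ge\TD(\hat{c},\mathcal{C}_{ac}^k)$. Hence the bound holds uniformly over all $c\in\mathcal{C}_{ac}^k$, even though small (non-maximal) concepts may look cheap to teach: distinguishing them from the maximal concepts that subsume them is what is expensive.

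For the upper bound I would construct a teaching set $T$ explicitly. Fix a smallest $(m,n-1,k)$-universal set $U$, so $|U|=\mathcal{U}_k$. For each variable $v_i$, put into $T$ a swap expression $\mathcal{F}_i$ of the context set $U_{v_i}$ imposed by $U$ and $v_i$, labeled according to $c$; each $\mathcal{F}_i$ contributes $(m-1)|U_{v_i}|=(m-1)\mathcal{U}_k$ swaps. In addition, for each edge $(v_j,v_i)$ of $c$, add a single swap that, together with a swap already present in $\mathcal{F}_i$, forms a conflict pair for $v_i$ given $v_j$. This reuse is possible because $v_j$ genuinely influences $v_i$, so by universality there is a context $\gamma\in U_{v_i}$ whose $Pa(v_i)$-projection realizes the relevant parent assignment, and two distinct orders over $D_{v_i}$ (for $\gamma$ and for its $v_j$-flip $\gamma'$) must disagree on some adjacent pair of the $\gamma$-order; the corresponding adjacent swap lies in $\mathcal{F}_i$, and the single added swap is the same flip in context $\gamma'$, labeled per $c$. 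The total size of $T$ is thus at most $n(m-1)\mathcal{U}_k+e_c$, and $T$ is consistent with $c$ by construction.

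It remains to show that $c$ is the unique concept in $\mathcal{C}_{ac}^k$ consistent with $T$; this is the crux. Let $c'$ be consistent with $T$, with parent sets $Pa'(v_i)$. First, the $e_c$ conflict pairs force $Pa(v_i)\subseteq Pa'(v_i)$ for every $i$, since a conflict pair for $v_i$ given $v_j$ can be realized only if $v_j$ influences $v_i$ in $c'$. Second, I would rule out extra parents using universality alone: if some $v_\ell\in Pa'(v_i)\setminus Pa(v_i)$ existed, then since $|Pa'(v_i)|\le k$, two contexts $\gamma,\gamma'\in U_{v_i}$ agreeing on $Pa'(v_i)\setminus\{v_\ell\}$ and differing on $v_\ell$ occur among the contexts of $\mathcal{F}_i$; the chains in $\mathcal{F}_i$ pin $c'$'s order in each such context to $c$'s order, but $c$ assigns $\gamma,\gamma'$ the \emph{same} order (they agree on $Pa(v_i)$), contradicting the minimal-form assumption that $v_\ell$ genuinely affects $v_i$ in $c'$. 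Hence $Pa'(v_i)=Pa(v_i)$ for all $i$, and, with the parent sets fixed, universality guarantees every assignment to $Pa(v_i)$ is realized among the contexts of $U_{v_i}$, so the swap expressions determine $c'$'s entire conditional preference table; therefore $c'=c$.

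The main obstacle is precisely this parent-set identification. Universality alone cannot certify a \emph{missing} parent when $|Pa(v_i)\cup Pa'(v_i)|>k$, because the pair of contexts needed to expose the discrepancy need not occur in any $(m,n-1,k)$-universal set; this is exactly why the $e_c$ conflict pairs are indispensable, whereas ruling out \emph{extra} parents needs only the universal-set contexts. Getting this division of labor right, and verifying carefully that each conflict pair can reuse a swap already contained in the corresponding swap expression so that only one additional swap per edge is charged, are the delicate points I expect to require the most care.
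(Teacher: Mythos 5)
Your proof is correct and takes essentially the same approach as the paper's: the lower bound by combining Lemma~\ref{lem:maximal} with the ``in particular'' clause of Lemma~\ref{lem:nonmaximal}, and the upper bound via a teaching set built from swap expressions of the context sets imposed by a smallest $(m,n-1,k)$-universal set, plus one additional swap per edge that completes a conflict pair with a swap already contained in the expression, followed by the same two-step uniqueness argument ($Pa(v_i)\subseteq Pa'(v_i)$ from conflict pairs, extra parents excluded as dummies via universality). Your explicit justifications of the two points the paper leaves implicit---that two distinct total orders must disagree on a pair adjacent in one of them, which licenses reusing a swap from the expression, and the detailed dummy-parent contradiction---are accurate refinements of the paper's argument rather than a different route.
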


\begin{proof}
The lower bound follows from Lemma~\ref{lem:nonmaximal} in combination with Lemma~\ref{lem:maximal}---these two lemmas state that the smallest teaching dimension of any concept in $\mathcal{C}_{ac}^k$ is that of any maximal concept, and that this teaching dimension value equals $(m-1)\mathcal{M}_k$.

For the upper bound, it suffices to show that $c$ has a teaching set of size $e_c+n(m-1)\mathcal{U}_k$. Consider a set $T$ of labeled examples defined in the following way. Fix any smallest $(m,n-1,k)$-universal set $U$. For each variable $v_i$ and each context $\gamma\in U_{v_i}$, the set $T$ includes $m-1$ examples determining the preference order over $D_{v_i}$ under the context $\gamma$. Note that these examples form a swap expression of $U_{v_i}$. In addition to these $n(m-1)\mathcal{U}_k$ examples, for each variable $v_i$ with $Pa(v_i)\ne\emptyset$ and for each $v\in Pa(v_i)$, the set $T$ contains two swaps $x,x'$ over the swapped variable $v_i$ that form a conflict pair for $v_i$ given $v$. By Definition~\ref{def:conflict}, the contexts in these two swaps are identical except for the variable $v$, which is assigned in a way that the two swaps display a difference in the preference over $D_{v_i}$. Any conflict pair $(x,x')$ for $v_i$ given $v$ will be sufficient for this purpose. In particular, we can choose $x$ and $x'$ in a way such that  $x$ is already contained in the swap expression of $U_{v_i}$ included in $T$. Thus, in total, we add one labeled swap for each edge in $c$ on top of the $n(m-1)\mathcal{U}_k$ examples from the swap expressions.

Clearly, the size of $T$ is $e_c+n(m-1)\mathcal{U}_k$. To show that $T$ is a teaching set for $c$, let $c'$ be any concept in $\mathcal{C}_{ac}^k$ that is consistent with $T$, and let $v_i$ be any variable. 

Let $P$ be the parent set of $v_i$ in $c$, and $P'$ the parent set of $v_i$ in $c'$. The conflict pairs in $T$ imply that $P\subseteq P'$. Now, for every context $\alpha$ over $P$, the swap expression of $U_{v_i}$ contained in $T$ shows that all context extensions of $\alpha$ to contexts of length up to $k$ give rise to the same preference order over $v_i$ in $c'$. Therefore, $P'$ cannot be a strict superset of $P$---otherwise all elements of $P'\setminus P$ would be dummy parents. So $P=P'$. 

As just stated, the swap expression of $U_{v_i}$ contained in $T$ determines the preference order over $v_i$ for each context $\alpha$ over $P$ and each extension of such $\alpha$ to contexts over $k$ variables. Since every parent set in $c'$ has size at most $k$, the $\CPT$ of $v_i$ in $c'$ is thus fully determined. Consequently, $c=c'$, i.e., $T$ is a teaching set for $c$. This concludes the proof.
\end{proof}

We illustrate the teaching set construction in the proof of Lemma~\ref{lem:TDbound} with an example.

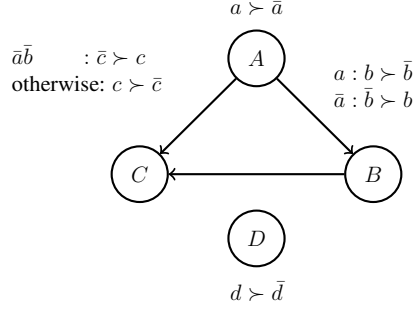
\begin{figure}
\centering{
\begin{tikzpicture}[thick,every node/.style={scale=0.7}]
\begin{scope}
\node[state](A){$A$};
\node[state](D)[below=of A,yshift=-0.9cm]{$D$};
\node[state](B)[below right=of A]{$B$};
\node[state](C)[below left=of A]{$C$};
\node (aCPT)[above=of A,yshift=-1.3cm]{$a\succ\bar{a}$};
\node (dCPT)[below=of D,yshift=1.3cm]{$d\succ\bar{d}$};
\node(bCPT)[above=of B,yshift=-1cm]{\makecell[l]{$a:b\succ \bar{b}$\\$\bar{a}:\bar{b}\succ b$}};
\node(cCPT)[above=of C,xshift=-1cm,yshift=-.7cm]{\makecell[l]{$\bar{a}\bar{b}\qquad :\bar{c}\succ c$\\otherwise: $c\succ \bar{c}$}};
\path (A) edge[->] (B) edge[->] (C)
(B) edge[->] (C);
\end{scope}
\end{tikzpicture}
}
\caption{Target CP-net for illustration of Lemma~\ref{lem:TDbound} and Algorithm 2.}\label{fig:alg2}
\end{figure}

\begin{example}
Consider the case $n=4$, $m=2$, and $k=2$, and suppose the CP-net from Figure~\ref{fig:alg2} is the target CP-net. Note that a smallest $(3,2)$-universal set has exactly four elements, which will be the cardinality of each set $\mathcal{F}$ considered for each variable. To construct a teaching set, each variable will be treated separately.

For variable $A$, one selects a set of contexts that are imposed by a smallest $(2,3,2)$-universal set. One such choice is $\{bcd, \bar{b}c\bar{d}, b\bar{c}\bar{d}, \bar{b}\bar{c}d\}$. This leads to the following swap expression $\mathcal{F}$.
\[
\mathcal{F}=\{(abcd,\bar{a}bcd),(a\bar{b}c\bar{d},\bar{a}\bar{b}c\bar{d}),(ab\bar{c}\bar{d},\bar{a}b\bar{c}\bar{d}),(a\bar{b}\bar{c}d,\bar{a}\bar{b}\bar{c}d)\}\,.
\]
The set $T$ will contain all the examples in $\mathcal{F}$, together with their correct labels, which will show that the target CP-net entails $a\gamma\succ\bar{a}\gamma$ for all $\gamma\in\mathcal{O}_{V\setminus\{A\}}$. This part of the teaching set $T$ hence determines that $A$ has no parents and that $a\succ\bar{a}$ is the only statement in $\CPT(A)$.

For variable $B$, the same construction would yield a swap expression
\[
\mathcal{F}=\{(abcd,a\bar{b}cd),(\bar{a}bc\bar{d},\bar{a}\bar{b}c\bar{d}),(ab\bar{c}\bar{d},a\bar{b}\bar{c}\bar{d}),(\bar{a}b\bar{c}d,\bar{a}\bar{b}\bar{c}d)\}\,.
\]
Including these swaps with their correct labels in $T$ reveals that $abcd\succ a\bar{b}cd$, $\bar{a}\bar{b}c\bar{d}\succ\bar{a}bc\bar{d}$, $ab\bar{c}\bar{d}\succ a\bar{b}\bar{c}\bar{d}$, $\bar{a}\bar{b}\bar{c}d\succ \bar{a}b\bar{c}d$. It is clear from these examples already that $B$ must have at least one parent, but the parent set itself is not yet determined. In order to show that $A$ is a parent of $B$, we select the swap $x=(abcd,a\bar{b}cd)$ from $T$ and pick an appropriate swap $x'$ so that $(x,x')$ forms a conflict pair for $B$ given $A$; in this case $x'=(\bar{a}bcd,\bar{a}\bar{b}cd)$. We add $x'$ with its label to $T$. Now we claim that $T$ fully determines $\CPT(B)$. To see this, partition $\mathcal{F}$ according to the values in $A$:
\[
\mathcal{F}_{A=a}=\{(abcd,a\bar{b}cd),(ab\bar{c}\bar{d},a\bar{b}\bar{c}\bar{d})\}\mbox{ and }\mathcal{F}_{A=\bar{a}}=\{ (\bar{a}bc\bar{d},\bar{a}\bar{b}c\bar{d}),(\bar{a}b\bar{c}d,\bar{a}\bar{b}\bar{c}d)\}\,.
\]
Each set displays a single preference order over $D_B$, so that $B$ has no further parents. The set $T$ thus reveals that $Pa(B)=\{A\}$ and that $\CPT(B)$ contains the statements $a:b\succ\bar{b}$ and $\bar{a}:\bar{b}\succ b$. 

For variable $C$, the construction from the proof of Lemma~\ref{lem:TDbound} works similarly as for $B$. Two conflict pairs reveal that $A$ and $B$ are parents of $C$, and partitioning the initial swap expression set into four subsets (according to the four possible assignments to the variables $A$ and $B$) will show the same preference order over $D_C$ in each of the four parts, which will fully determine $\CPT(C)$.

Finally, variable $D$ can be treated by analogy to the case of variable $A$.
\end{example}

Our main result on the teaching dimension can be stated as follows.

\begin{theorem} Let $n\ge 1$, $m=2$, and $0\le k\le n-1$. The teaching dimension of the class of all complete acyclic $k$-bounded CP-nets over $n$ variables of domain size $m$, over the instance space $\mathcal{X}_{swap}$, is bounded as follows.
\[n(m-1)m^k\le n(m-1)\mathcal{U}_k\le\TD(\mathcal{C}_{ac}^k)\le e_{max}+n(m-1)\mathcal{U}_k\le nk+n(m-1)\binom{n-1}{k}m^k\,.\]
For the values $k=0$, $k=1$, and $k=n-1$, the following holds:
\begin{enumerate}
\item $\TD(\mathcal{C}_{ac}^0)=(m-1)n$. 
\item $\TD(\mathcal{C}_{ac}^1)=(m-1)mn$. 
\item $\TD(\mathcal{C}_{ac}^{n-1})=(m-1)nm^{n-1}$.
\end{enumerate}
\label{thm:TD}
\end{theorem}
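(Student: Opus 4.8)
The plan is to read off the two outer inequalities from elementary facts about universal sets, to obtain the class upper bound by maximizing the per-concept bound of Lemma~\ref{lem:TDbound}, to prove the class lower bound by exhibiting one hard concept (the unconditional net), and finally to sharpen the upper bound in the three special cases, where the generic bound of Lemma~\ref{lem:TDbound} overcounts.

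First I would dispose of the outer inequalities. The bound $n(m-1)m^k\le n(m-1)\mathcal{U}_k$ is immediate, since every $(m,n-1,k)$-universal set must surject onto each $k$-subset of coordinates and hence has at least $m^k$ elements, so $\mathcal{U}_k\ge m^k$. The rightmost inequality follows from $e_{max}\le nk$ (Definition~\ref{def:emax}) together with the crude construction that places one vector per pair $(Z,z)$ with $Z$ a $k$-subset and $z\in\mathcal{O}_Z$, giving $\mathcal{U}_k\le\binom{n-1}{k}m^k$. For the class upper bound $\TD(\mathcal{C}_{ac}^k)\le e_{max}+n(m-1)\mathcal{U}_k$ I would take the maximum over $c\in\mathcal{C}_{ac}^k$ of the per-concept bound in Lemma~\ref{lem:TDbound}, using that the edge count $e_c$ is maximized at $e_{max}$.

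The heart of the argument is the class lower bound $\TD(\mathcal{C}_{ac}^k)\ge n(m-1)\mathcal{U}_k$, which I would obtain by showing that the unconditional concept $c_\emptyset\in\mathcal{C}_{ac}^k$ (every variable a root with a fixed order $\succ^{v_i}$) satisfies $\TD(c_\emptyset,\mathcal{C}_{ac}^k)\ge n(m-1)\mathcal{U}_k$. Fix a teaching set $T$ and a variable $v_i$, and let $T_i$ collect the examples of $T$ over $v_i$. The structural observation is: if for some $k$-subset $Z$ and value $z\in\mathcal{O}_Z$ the examples of $T_i$ whose context projects to $z$ fail to imply the full order $\succ^{v_i}$, then one may reverse a single adjacent pair of values of $v_i$ in context $z$ only and declare $Pa(v_i)=Z$, producing a \emph{distinct} complete acyclic $k$-bounded CP-net consistent with $T$ (this is the same type of adversary used in Lemma~\ref{lem:TDuniversal}). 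Hence every such fiber must imply the full order. I would then use the elementary fact that, because all comparisons are oriented consistently with $\succ^{v_i}$, a set of comparisons implies the adjacent relation $a_s\succ a_{s+1}$ only if it literally contains that adjacent swap. Consequently, for each of the $m-1$ adjacent pairs $s$, the contexts carrying the adjacent-$s$ swap in $T_i$ must surject onto every $k$-subset, i.e.\ they form an $(m,n-1,k)$-universal set and number at least $\mathcal{U}_k$. Since these $m-1$ families of examples are disjoint, $|T_i|\ge(m-1)\mathcal{U}_k$, and summing over the $n$ variables yields the bound.

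Finally I would settle the three exact values. For $k=0$ there are no edges, so $e_{max}=0$ and $\mathcal{U}_0=1$ collapse the chain to the single value $(m-1)n$. For $k=1$ and $k=n-1$ the generic upper bound is loose (it adds a spurious $e_{max}$), so I would build sharper teaching sets. For $k=n-1$ I would simply reveal, for every variable and each of its $m^{n-1}$ contexts, the full order via $m-1$ adjacent swaps; this exposes every $\CPT$ in full, so the parent structure is forced and no conflict pairs are needed, giving $\TD(c)\le n(m-1)m^{n-1}$. For $k=1$ I would teach each variable with $(m-1)m$ examples: a root variable is handled by $m$ contexts forming a size-$m$ universal set (e.g.\ the constant contexts), all exhibiting the same order, whereas a variable with parent $v_j$ is handled by the $m$ contexts that vary only $v_j$ with all other coordinates fixed; since at most one parent is allowed, the observed change pins the parent to $v_j$ and simultaneously reveals the entire $\CPT$, again without separate conflict pairs. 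Using $\mathcal{U}_1=m$ and $\mathcal{U}_{n-1}=m^{n-1}$ from Lemma~\ref{lem:universal}, these upper bounds match the lower bound $n(m-1)\mathcal{U}_k$, giving the stated equalities.

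The main obstacle is the lower bound, and specifically the passage from \quotes{every fiber implies the full order} to the clean count $(m-1)\mathcal{U}_k$: one must avoid the tempting but false claim that each individual context is fully pinned (false when $k<n-1$, since conditioning on at most $k$ variables cannot isolate a single context), and instead exploit fiber-level forcing together with the fact that implying an adjacent relation requires the adjacent swap itself. A secondary difficulty is the special-case upper bounds, where one must verify that merely exposing enough $\CPT$ content forces the parent sets, so that the extra edge-certifying examples of the generic construction in Lemma~\ref{lem:TDbound} can be dropped.
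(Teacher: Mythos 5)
Your proposal is correct and follows essentially the same route as the paper: the lower bound via the unconditional (separable) concept, whose teaching sets must embed $m-1$ disjoint $(m,n-1,k)$-universal families of contexts per variable; the upper bound by maximizing the per-concept bound of Lemma~\ref{lem:TDbound} over $e_c\le e_{max}$; the outer inequalities from $m^k\le\mathcal{U}_k\le\binom{n-1}{k}m^k$; and the same sharpened constructions for $k\in\{0,1,n-1\}$ using $\mathcal{U}_0=1$, $\mathcal{U}_1=m$, $\mathcal{U}_{n-1}=m^{n-1}$. Your fiber-level adversary argument (reversing one adjacent pair in a single context with $Pa(v_i)=Z$, plus the observation that an adjacent relation is implied only if the adjacent swap is literally present) is in fact a more explicit justification of the covering step that the paper states only informally.
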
 

\begin{proof}
To obtain a lower bound of $n(m-1)\mathcal{U}_k$ on the teaching dimension, let us first determine the teaching dimension of any complete separable CP-net with respect to the class $\mathcal{C}_{ac}^k$. Consider any unconditional $\CPT$, i.e., a $\CPT$ of the form $\CPT(v_i)=\{a_1\succ \dots \succ a_m\}$. To distinguish a complete separable CP-net $c$ with this $\CPT$ from all other concepts in $\mathcal{C}_{ac}^k$, examples are needed that show that $v_i$ has no parent. Since for CP-nets in $\mathcal{C}_{ac}^k$, the variable $v_i$ can have up to $k$ parents, a teaching set for $c$ has to demonstrate the following: For each set $R$ of $k$ variables in $V\setminus\{v_i\}$, and for each of the $m^k$ possible contexts over $R$, the target CP-net $c$ entails the same order $a_1\succ \dots \succ a_m$. That means, a teaching set of swap pairs for $c$ must ``cover'' each such context over $R$, and for each such context provide at least $m-1$ examples so that the total order over the $m$ domain values of $v_i$ can be determined. In other words, the swap pairs with swapped variable $v_i$ that occur in the teaching set for $c$ must be partitioned into parts of size $m-1$, where each part forms an $(m,n-1,k)$-universal set. The same applies to each of the $n$ variables in $V$, since knowledge of the $\CPT$ for any subset of $V$ still leaves open the possibility of up to $k$ parents for each of the remaining variables. Consequently, to teach $c$, at least $n(m-1)\mathcal{U}_k$ examples are needed. Thus, $\TD(\mathcal{C}_{ac}^k)\ge n(m-1)\mathcal{U}_k$.

The upper bound of $e_{max}+n(m-1)\mathcal{U}_k$ on the teaching dimension is immediate from Lemma~\ref{lem:TDbound}.

The upper and lower bounds on $\mathcal{U}_k$ are not hard to obtain. First, $m^k\le \mathcal{U}_k$ because an $(m,n-1,k)$-universal set must yield $m^k$ distinct projections onto any set of at least $k$ components. Second, $\mathcal{U}_k\le\binom{n-1}{k}m^k$ because an $(m,n-1,k)$-universal set can always be obtained as follows: for each $k$-element subset of the $n-1$-element universe, include $m^k$ vectors, namely one for each possible assignment of values in $\{1,\ldots,m\}$ to each of the $k$ elements in the subset.

Concerning the values for $k=0$, $k=1$, and $k=n-1$, note that $\mathcal{U}_0=1$, $\mathcal{U}_1=m$, and $\mathcal{U}_{n-1}=m^{n-1}$ by Lemma~\ref{lem:universal}, which yields the desired lower bounds on the teaching dimension values for $\mathcal{C}_{ac}^0$, $\mathcal{C}_{ac}^1$, and $\mathcal{C}_{ac}^{n-1}$, respectively. 

For $k=0$, we have $e_{max}=0$, so that the upper and lower bounds on the teaching dimension coincide, yielding $n(m-1)$. 

For $k=1$, we will show an upper bound of $(m-1)mn$, matching the lower bound. Consider any variable $v_i$ in the target CP-net. If it has no parents, then its $\CPT$ can be taught with $(m-1)\mathcal{U}_1=(m-1)m$ examples. If it has a parent, then $(m-1)m$ suitably chosen examples suffice to teach the target preference relation over $D_{v_i}$ for every possible value of the parent variable. Since $k=1$, it is then clear that $v_i$ has no other variable, so that $\CPT(v_i)$ is fully determined. Applying the same reasoning to every variable, we obtain a teaching set of size $n(m-1)m$.

For $k=n-1$, the target concept is uniquely determined by a teaching set that specifies an order over $m$ values via $m-1$ suitably chosen swaps over $v$, for each $v\in V$ and each of the $m^{n-1}$ contexts over the remaining variables; i.e., the lower bound of $(m-1)nm^{n-1}$ is attained.
\end{proof}

Theorem~\ref{thm:TD} implies that, for $\mathcal{C}_{ac}^{n-1}$, the ratio of $\TD$ over instance space size $|\mathcal{X}_{swap}|$ is $\frac{2}{m}$. In particular, in the case of binary CP-nets (i.e., when $m=2$), which is the focus of most of the literature on learning CP-nets, the teaching dimension equals the instance space size. However, maximal concepts have a teaching dimension far below the worst-case teaching dimension. 

\section{The Complexity of Learning Potentially Incomplete Acyclic CP-Nets}\label{sec:incomplete}

In this section, we revisit our results on the VC dimension and the teaching dimension for the case that CP-nets are potentially incomplete. This extension is useful for several reasons.
\begin{itemize}
\item In practice, a user's preferences may not always be representable by a complete CP-net. For example, the genre of a movie might be a parent variable of the variable representing the lead actor. Now a user may prefer actor $a_1$ over actor $a_2$ when the genre is comedy, while not having any preference between the same two actors for a different genre. In other words, the user generally considers the two actors equally preferable, but considers $a_1$ a better comedian than $a_2$. Such preference relation can be modeled with an incomplete CP-net.
\item We will introduce algorithms that learn both complete and incomplete CP-nets from membership queries. In order to assess the optimality of these algorithms in terms of the number of queries required, we will use the value of the teaching dimension of the corresponding concept class. Likewise, we will reassess an algorithm by Koriche and Zanuttini~\cite{Koriche2010685} that uses equivalence and membership queries for learning potentially incomplete CP-nets; for this purpose we need to calculate the VC dimension of the corresponding concept class.
\item Our extension of Theorem~\ref{thm:VCDac} to the case of potentially incomplete CP-nets substantially improves on (and corrects) a result by Koriche and Zanuttini~\cite{Koriche2010685}, who present a lower bound on $\VCdim(\mathcal{C}_{ac}^k)$; their bound is in fact incorrect unless $k\ll n$.
\end{itemize}

The main results of this section are summarized in Table~\ref{tab:resultsIncomplete}. The calculation of the $\RTD$ of classes containing complete and incomplete CP-nets is left as an open problem.

\begin{table*}
\centering
\caption{Summary of complexity results for classes of potentially incomplete CP-nets. $\mathcal{M}_k=(n-k)m^k+\frac{m^k-1}{m-1}$, as detailed in Lemma~\ref{lem:Mk}; $e_{max}=(n-k)k+\binom{k}{2}\le nk$; the value $\mathcal{U}_k$ is defined in Definition~\ref{def:Uk}.}
\begin{footnotesize}
\begin{tabular}{|c|c|c|}
\hline class & VCD & TD \\
\hline $\overline{\mathcal{C}}_{ac}^k$& $\geq (m-1)\mathcal{M}_k$&  $2n(m-1)\mathcal{U}_k\le \TD \le e_{max}+2n(m-1)\mathcal{U}_k$ \\
\hline $\overline{\mathcal{C}}_{ac}^{n-1}$& $m^n-1$& $2n(m-1)m^{n-1}$ \\
\hline $\overline{\mathcal{C}}_{ac}^{0}$&$(m-1)n$&$2(m-1)n$  \\
\hline
\end{tabular}
\end{footnotesize}
\label{tab:resultsIncomplete}
\end{table*}

\subsection{VC Dimension}

We will first establish that Theorem~\ref{thm:VCDac} remains true when we allow incomplete CP-nets and use the larger instance space $\overline{\mathcal{X}}_{swap}$. In particular, the VC dimension $\mathcal{C}_{ac}^k$ of the class of all complete acyclic $k$-bounded CP-nets (over $\mathcal{X}_{swap}$) equals the VC dimension of the class $\overline{\mathcal{C}}_{ac}^k$ of all complete and incomplete acyclic $k$-bounded CP-nets (over $\overline{\mathcal{X}}_{swap}$.) 

\begin{theorem}\label{thm:VCD}
For fixed $n\ge 1$, $m\ge 2$, and $k\le n-1$, the value $\VCdim(\mathcal{C}_{ac}^k)$, taken over the instance space $\mathcal{X}_{swap}$, is equal to $\VCdim(\overline{\mathcal{C}}_{ac}^k)$, taken over the instance space $\overline{\mathcal{X}}_{swap}$. In particular,
\begin{enumerate}
\item $\VCdim(\overline{\mathcal{C}}_{ac}^{n-1})=m^n-1$.
\item $\VCdim(\overline{\mathcal{C}}_{ac}^0)=(m-1)n$.
\item $\VCdim(\overline{\mathcal{C}}_{ac}^k)\geq(m-1)\mathcal{M}_k = (m-1)(n-k)m^k+m^k-1$. 
\end{enumerate}
\end{theorem}

\begin{proof}
By definition, $\VCdim(\overline{\mathcal{C}}_{ac}^k)\ge \VCdim(\mathcal{C}_{ac}^k)$. It remains to show that $\VCdim(\mathcal{C}_{ac}^k)\ge \VCdim(\overline{\mathcal{C}}^k_{ac})$. 

Suppose $X\subseteq \overline{\mathcal{X}}_{swap}$ is a set of swap instances that is shattered by $\overline{\mathcal{C}}^k_{ac}$. We need to show that there is a set $X' \subseteq \mathcal{X}_{swap}$ such that $|X'|=|X|$ and $X'$ is shattered by $\mathcal{C}_{ac}^k$. Since $X$ is shattered by $\overline{\mathcal{C}}^k_{ac}$, there is no $(o,o')$ such that both $(o,o')$ and $(o',o)$ belong to $X$. Otherwise there would be a $c\in \overline{\mathcal{C}}^k_{ac}$ such that $c(o,o')= c(o',o)=1$, which is an impossible inconsistency. 

The fact that there is no $(o,o')$ such that both $(o,o')$ and $(o',o)$ belong to $X$ has two implications: 

(i) Either $X\subseteq \mathcal{X}_{swap}$ or swapping the order of some pairs in $X$ results in a set $X'$ such that $X'$ is a subset of $\mathcal{X}_{swap}$.

(ii) Every incomplete CP-net $c\in \overline{\mathcal{C}}^k_{ac}$ can be turned into a complete CP-net in $\mathcal{C}^k_{ac}$ by adding statements to some of its $\CPT$s without changing the value of $c(x)$ for any $x\in X$. (This is because every incomplete CP-net results from a complete one by removing some $\CPT$ statement(s).) This holds in particular for the set of $2^{|X|}$ CP-nets in $\overline{\mathcal{C}}^k_{ac}$ that witness the shattering of $X$: each of them can be completed in a way so that the $2^{|X|}$ completions still shatter $X$. 

Combining (i) and (ii), there is a set $X' \subseteq \mathcal{X}_{swap}$ shattered by $\mathcal{C}_{ac}^k$ such that $|X'|=|X|$.

Theorem~\ref{thm:VCDac} then yields the claimed formulas.
\end{proof}

\subsubsection{Re-assessment of Koriche and Zanuttini's contribution}

Koriche and Zanuttini~\cite{Koriche2010685} present an algorithm for exact learning of potentially incomplete $k$-bounded acyclic CP-nets from membership and equivalence queries. To evaluate their algorithm, they compare its query consumption to the value $\log_2(4/3)\VCdim(\mathcal{C})$, which is a lower bound on the required number of membership and equivalence queries, known from fundamental learning-theoretic studies~\cite{AuerL99}. In lieu of an exact value for $\VCdim(\mathcal{C})$, Koriche and Zanuttini plug in a lower bound on $\VCdim(\mathcal{C})$, cf.\ their Theorem~6. Using our Theorem~\ref{thm:VCD}, we show in Appendix\ref{sec:app} that this lower bound is not quite correct; consequently, here we re-assess the query consumption of Koriche and Zanuttini's algorithm. 

For any $k$, their algorithm uses at most $s_{N^*}+e_{N^*}\log_2(n)+e_{N^*}+1$ queries in total, for a target CP-net $N^*$ with $s_{N^*}$ statements and $e_{N^*}$ edges. In the worst case $s_{N^*}=\mathcal{M}_k\leq \VCdim(\mathcal{C})$ and $e_{N^*}=\binom{k}{2}+(n-k)k$ (i.e., $N^*$ is maximal w.r.t.\ $\mathcal{C}$). This yields $\mathcal{M}_k+e_{N^*}(\log_2(n)+1)$ queries for their algorithm, which exceeds the lower bound $\log_2(4/3)\VCdim(\mathcal{C})$ by at most $\log_2(3/2)\VCdim(\mathcal{C})+e_{N^*}\log_2(n)$. This is a more refined assessment compared to the term $e_{N^*}\log_2(n)$ that they report, and it holds for \emph{any}\/ value of $k$. 


\subsection{Teaching Dimension}

It is not difficult to adapt our previous teaching dimension results to the case of learning both complete and incomplete acyclic CP-nets, for various indegree bounds $k$. 

\begin{theorem} Let $n\ge 1$, $m=2$, and $0\le k\le n-1$. The teaching dimension of the class of all complete and incomplete acyclic $k$-bounded CP-nets over $n$ variables of domain size $m$, over the instance space $\overline{\mathcal{X}}_{swap}$, is bounded as follows.
\[2n(m-1)m^k\le 2n(m-1)\mathcal{U}_k\le\TD(\overline{\mathcal{C}}_{ac}^k)\le e_{max}+2n(m-1)\mathcal{U}_k\le nk+2n(m-1)\binom{n-1}{k}m^k\,.\]
For the values $k=0$, $k=1$, and $k=n-1$, the following holds:
\begin{enumerate}
\item $\TD(\overline{\mathcal{C}}_{ac}^0)=2(m-1)n$. 
\item $\TD(\overline{\mathcal{C}}_{ac}^1)=2(m-1)mn$. 
\item $\TD(\overline{\mathcal{C}}_{ac}^{n-1})=2(m-1)nm^{n-1}$.
\end{enumerate}
\label{thm:TDincomplete}
\end{theorem}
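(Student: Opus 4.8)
The plan is to mirror the proof of Theorem~\ref{thm:TD}, adjusting for the enlarged instance space $\overline{\mathcal{X}}_{swap}$, which contains both $(o,o')$ and $(o',o)$ for every swap, and for the new possibility of \emph{incomparable}\/ (empty) preference relations. Since $m=2$, each context contributes a single unordered swap pair with two directed instances, and the one genuinely new phenomenon is that certifying incomparability in a context now costs \emph{two}\/ labeled examples (one ruling out $a\succ\bar a$, one ruling out $\bar a\succ a$), whereas certifying a total order still costs only $m-1=1$ example.

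For the lower bound $2n(m-1)\mathcal{U}_k$ I would take the \emph{empty}\/ separable CP-net $c_\emptyset$ (every variable incomparable in every context) as the witness. To teach $c_\emptyset$ with respect to $\overline{\mathcal{C}}_{ac}^k$, for each variable $v_i$ one must simultaneously (i) rule out every possible total order in every relevant context---each such certification of incomparability requiring both directed instances, hence two examples per context---and (ii) rule out that $v_i$ has any parent set of size up to $k$. Requirement (ii) forces, exactly as in the proof of Theorem~\ref{thm:TD} (via the reading of Lemma~\ref{lem:TDuniversal}), the set of contexts appearing with swapped variable $v_i$ to contain an $(m,n-1,k)$-universal set. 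Combining (i) and (ii) gives at least $2\mathcal{U}_k$ examples per variable, so $\TD(\overline{\mathcal{C}}_{ac}^k)\ge 2n(m-1)\mathcal{U}_k$.

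For the upper bound I would adapt the construction of Lemma~\ref{lem:TDbound}. Fix a smallest $(m,n-1,k)$-universal set $U$; for each variable $v_i$ and each context $\gamma\in U_{v_i}$, include enough labeled swaps to pin down the preference over $D_{v_i}$ in context $\gamma$---one example if it is a total order, both directions if it is empty, in all cases at most $2(m-1)$ per context and hence at most $2(m-1)\mathcal{U}_k$ per variable. Then add one conflict-pair example per edge (reusing an already-included swap), contributing at most $e_c\le e_{max}$. Correctness is the incomplete-case analogue of Lemma~\ref{lem:TDbound}: the conflict pairs certify that the target's parent set is contained in that of any consistent $c'$, while the fully determined preferences over the universal contexts---now including which contexts are empty---certify that no extra parents exist and that the entire CPT matches. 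This yields $\TD(\overline{\mathcal{C}}_{ac}^k)\le e_{max}+2n(m-1)\mathcal{U}_k$. The boundary values follow by substituting $\mathcal{U}_0=1$, $\mathcal{U}_1=m$, $\mathcal{U}_{n-1}=m^{n-1}$ from Lemma~\ref{lem:universal}: for $k=0$ we have $e_{max}=0$, so the bounds collapse to $2(m-1)n$; for $k=n-1$ the universal set is the set of all contexts, so full coverage determines every CPT without conflict pairs, giving $2(m-1)nm^{n-1}$; and $k=1$ needs the sharper, $e_{max}$-free argument paralleling Theorem~\ref{thm:TD}, handling each variable with at most $2(m-1)m$ examples (for a variable with parent $v_j$, using the $m$ contexts that vary only $v_j$, so the preference differences both localize the parent and fix the CPT), which sums to the tight value $2(m-1)mn$.

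The step I expect to be the main obstacle is the lower bound---specifically, verifying that certifying incomparability genuinely costs two examples per context and that this cost cannot be amortized across contexts. For $m=2$ this is clean because each context contributes exactly one unordered pair, so its two directed instances are independent and both are forced; this is precisely why the statement is restricted to $m=2$, since for larger $m$ one can certify emptiness more cheaply by exhibiting a single incomparable pair, and the factor would no longer be $2(m-1)$. The universal-set machinery underlying requirement (ii) transfers verbatim from Theorem~\ref{thm:TD}.
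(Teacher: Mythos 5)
Your proposal is correct and follows essentially the same route as the paper's proof: the lower bound via the empty separable CP-net, whose teaching set must contain both directed instances for each context of a swap expression of an $(m,n-1,k)$-universal set (doubling the bound from Theorem~\ref{thm:TD}), and the upper bound by symmetrically closing the construction of Lemma~\ref{lem:TDbound} and adding one conflict-pair example per edge, with the boundary cases $k\in\{0,1,n-1\}$ handled via Lemma~\ref{lem:universal} exactly as in the paper. Your added observation that a label of $1$ already excludes both the reverse order and emptiness (so total orders need only one example per context, while only empty relations force both directions) is a minor refinement the paper subsumes by uniformly doubling, and it does not change the stated bounds.
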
 

\begin{proof} For the lower bound, consider the empty CP-net, i.e., a separable CP-net without any statements in any of its $\CPT$s. We use the same argument as in the proof of Theorem~\ref{thm:TD} to show that one needs at least $n$ swap expressions of contexts imposed by universal sets to teach this CP-net. Note however, that every swap $(o,o')$ in the resulting teaching set will be labeled 0, as the target CP-net entails no preferences. This would leave open the option that the reverse swap $(o',o)$ could be labeled 1 by the target CP-net. To exclude this option, the teaching set must contain the reverse swap $(o',o)$ as well, with the correct label 0. Therefore, the lower bound is exactly twice as large as the lower bound in Theorem~\ref{thm:TD}.

For the upper bound, our argument is similar to that in the proof of Lemma~\ref{lem:TDbound}. The construction of the teaching sets remains essentially the same: one includes (i) a swap expression of a context set imposed by a universal set, in order to teach preferences once the parents are known, as well as to determine when a variable has no parents (on top of the already found ones), and (ii) a conflict pair $(x,x')$ for each edge in the target CP-net, where $x$ is chosen from the swap expression set. The only difference is that we make the swap expression set twice as large by including the reverse swap $(x.2,x.1)$ to every $(x.1,x.2)$ already included. This will make sure that, in the case of a swap $(o,o')$ labeled 0, one can find out whether the reverse swap $(o',o)$ is also labeled 0 (which would mean that the target CP-net entails no preference between $o$ and $o'$) or whether the reverse swap is labeled 1 (which would mean that the target CP-net entails the preference $o'\succ o$). 

The trivial upper and lower bounds on the value $\mathcal{U}_k$, namely $\binom{n-1}{k}m^k$ and $m^k$, respectively, were already established in Theorem~\ref{thm:TD}.

For the special cases of $k=0$, $k=1$, and $k=n-1$, one only needs to adapt the argument in the proof of Theorem~\ref{thm:TD} by forming the symmetric closure of the set of swaps used in the teaching sets, i.e., always adding the reverse swaps.
\end{proof}

\section{Learning from Membership Queries}
\label{sec:perfect}

In this section, we investigate the problem of learning complete CP-nets from membership queries alone, in contrast to the setting considered by Koriche and Zanuttini, where the learner asks both membership and equivalence queries \cite{Koriche2010685}.

A membership query, represented by a pair $(o,o')$ of objects, corresponds to asking the user directly whether they prefer $o$ over $o'$ or not. In an idealized model, it is expected that the user will always answer truthfully. An equivalence query, represented by a CP-net $N$, corresponds to asking the user whether $N$ correctly captures their preferences. The user is expected to answer correctly, where a negative response is accompanied by a witness, i.e., a pair $(o,o')$ of objects for which $N$ entails a preference opposite to the user's.

While in computational learning theory, beginning with Angluin's seminal work on learning regular languages~\cite{Angluin87}, the combination of equivalence and membership queries has been the most commonly investigated query scenario, we have several reasons for investigating learning from membership queries only.

Firstly, from the cognitive perspective, answering membership queries of the form \quotes{\emph{is $o$ better than $o'$?}} is more intuitive and poses less burden upon the user than comparing a proposed CP-net to the true one (which is the case when answering equivalence queries). Considering that membership queries can at times be perceived as too intrusive by the user, equivalence queries do not seem reasonable at all. To the best of our knowledge, the only ``real-world'' implementations of equivalence queries apply to situations in which the equivalence queries are answered by a program rather than by a human user. One such application is in the area of formal methods, where an equivalence query corresponds to a guess on the semantics of a program in a given class of target programs, and a formal verification procedure (comparable to a model checker) can be run in order to verify the correctness of the guess. This works well for fully automated reasoning about finite-state systems, where such verification procedures can be implemented, but it is unreasonable to assume that users of an e-commerce system (or of some similar kind of application) can comprehend and verify equivalence queries about their preferences.\footnote{Note that equivalence queries, from a learning-theoretic point of view, can be considered as a prediction system ``in action''~\cite{Angluin:1988:QCL:639961.639995}: a system corresponding to an equivalence query for $N$ would keep predicting the user's preferences according to the entailments of $N$. As long as all predictions by the system are correct, the hypothesis $N$ will be maintained. When a prediction mistake is made, the system will update its hypothesis using the mistake as a counterexample, which then results in a new equivalence query. In the end, the overall number of mistakes made by the system equals the number of equivalence queries asked. To the best of our knowledge, there is no reasonable application scenario in preference elicitation where such system would make sense. In practice, a system may want to offer products to a user, and the system's success lies in being able to identify the best product according to the user's preferences. However, the scenario of an equivalence query for a CP-net does not correspond to making guesses about the best product, but rather to making guesses about preferences between any two arbitrary products, for which no obvious counterpart (other than making membership queries) is seen in practice.}

Secondly, Koriche and Zanuttini showed that membership queries are powerful in the sense that CP-nets are not efficiently learnable from equivalence queries alone but they are from equivalence and membership queries \cite{Koriche2010685}. Thus, an immediate question is whether membership queries alone are powerful enough to efficiently learn CP-nets. In particular, we can consider learning from membership queries as an extreme case of limiting the allowable number of equivalence queries in order to investigate how much information can be obtained from the (intuitively less costly) membership queries.

Thirdly, in preference elicitation in general, the notion of membership query is of central importance, as is detailed in Section~\ref{ssec:pe}. For example, a value query asks how much an item is worth to a user~\cite{BlumJSZ04}. Clearly, a membership query asking the user to express a preference between two items can always be simulated by two value queries. However, membership queries may be easier to answer than value queries, as they do not require the user to quantify the value of an item. 


The complexity results presented in Sections \ref{sec:complete} and \ref{sec:incomplete} have interesting consequences on learning CP-nets from membership queries alone. In particular, it is known that the query complexity of the optimal membership query algorithm is lower-bounded by the teaching dimension of the class \cite{queriesRevisited}. Therefore, in this section, we propose strategies to learn CP-nets and use the $\TD$ results to assess their optimality. In what follows, we show near-optimal query strategies for tree CP-nets and generally for classes of bounded acyclic CP-nets. 

%


\subsection{Tree CP-Nets}
\label{sec:treeMQ}
Koriche and Zanuttini~\cite{Koriche2010685} present an algorithm for learning a binary tree-structured CP-net $N^*$ that may be incomplete (i.e., it learns the superclass $\overline{\mathcal{C}}_{ac}^1$ of $\mathcal{C}_{ac}^1$ for $m=2$.) Their learner uses at most $n_{N^*}+1$ equivalence queries and $4n_{N^*}+e_{N^*}\log_2(n)$ membership queries, where $n_{N^*}$ is the number of relevant variables and $e_{N^*}$ the number of edges in $N^*$. We present a method for learning any CP-net in $\mathcal{C}_{ac}^1$ (i.e., \emph{complete\/} tree CP-nets) for any $m$, using only membership queries. Later, we will extend that method to cover also the case of incomplete CP-nets.

Recall that, for a CP-net $N$, a conflict pair w.r.t.\ $v_i$ is a pair $(x,x')$ of swaps such that (i) $V(x)=V(x')=v_i$, (ii) $x.1$ and $x'.1$ agree on $v_i$, (iii) $x.2$ and $x'.2$ agree on $v_i$, and (iv) $N$ entails one of the swaps $x,x'$, but not the other, cf.\ Definition~\ref{def:conflict}. If $v_i$ has a conflict pair $(x,x')$, then $v_i$ has a parent variable $v_j$ whose values in $x$ and $x'$ are different. Such a variable $v_j$ can be found with $\log_2(n)$ membership queries by binary search (each query halves the number of candidate variables with different values in $x$ and $x'$) \cite{adaptiveLearning}. 

We use this binary search to learn tree-structured CP-nets from membership queries, by exploiting the following fact: if a variable $v_i$ in a tree CP-net has a parent, then a conflict pair w.r.t.\ $v_i$ exists and can be detected by asking membership queries to sort $m$ ``test sets'' for $v_i$. Let $(v^i_ 1,\ldots,v^i_m)$ be an arbitrary but fixed permutation of $D_{v_i}$. Then, for all $j\in\{1,\ldots,m\}$, a test set $I_{i,j}$ for $v_i$ is defined by $I_{i,j}=\{(v^1_j,\ldots,v^{i-1}_j,v^i_r,v^{i+1}_j,\ldots,v^n_j)\mid 1\le r\le m\}$. Since $v_i$ has no more than one parent, determining preference orders over $m$ such test sets of size $m$ is sufficient for revealing conflict pairs, rather than having to test all possible contexts in $\mathcal{O}_{V\setminus\{v_i\}}$.

\begin{example}
	Consider the set of variables $V=\{A,B,C\}$ where $D_{A}=\{a,a',a'',a'''\}$, $D_{B}=\{b,b',b'',b'''\}$, and $D_{C}=\{c,c',c'',c'''\}$. The following is one possible collection of test sets for the variable $A$:
	\begin{align*}
	\vspace*{-\baselineskip}
	&I_{A,1}=\{abc, a'bc, a''bc, a'''bc\}\\
	&I_{A,2}=\{ab'c', a'b'c', a''b'c', a'''b'c'\}\\
	&I_{A,3}=\{ab''c'',a'b''c'',a''b''c'',a'''b''c''\}\\
	&I_{A,4}=\{ab'''c''',a'b'''c''',a''b'''c''',a'''b'''c'''\}
	\end{align*}
	
\end{example}

Clearly, a complete target CP-net imposes a total order on every $I_{i,j}$, which can be revealed by posing enough membership queries selected from the $\binom{m}{2}$ swaps over $I_{i,j}$; a total of $O(m\log_2(m))$ comparisons suffice to determine the order over $I_{i,j}$. This yields a simple algorithm for learning tree CP-nets with membership queries: 

\medskip

\noindent\textbf{Algorithm 1.} For every variable $v_i$, determine $Pa(v_i)$ and $\CPT(v_i)$ as follows:
\begin{enumerate}
	\item For every value $j\in\{1,\ldots,m\}$, ask $O(m\log_2(m))$ membership queries from the $\binom{m}{2}$ swaps over $I_{i,j}$ to obtain an order over $I_{i,j}$. Note that, for $m=2$, a single query is enough to obtain an order over $I_{i,j}$.
	\item If for all $j_1,j_2\in\{1,\ldots,m\}$ the obtained order over $I_{i,j_1}$ imposes the same order on $D_{v_i}$ as the obtained order over $I_{i,j_2}$ does, i.e., there is no conflict pair for $v_i$, then $Pa(v_i)=\emptyset$. In this case, $\CPT(v_i)$ is fully determined by the queries in Step 1, following the order over $D_{v_i}$ that is imposed by the order over any of the $I_{i,j}$.
	\item If there are some $j_1,j_2\in\{1,\ldots,m\}$ such that the obtained order over $I_{i,j_1}$ imposes a different order on $D_{v_i}$ than the obtained order over $I_{i,j_2}$ does, i.e., there is a conflict pair $(x,x')$ for $v_i$, then find the only parent of $v_i$ by $\log_2(n)$ further queries, as described by Damaschke \cite{adaptiveLearning}. From these queries, together with the ones posed in Step 1, $\CPT(v_i)$ is fully determined.
\end{enumerate}

The procedure described by Damaschke \cite{adaptiveLearning} is a binary search on the set of candidates for the parent variable. Let $(x,x')$ be the conflict pair over variable $v_i$, as found in Step 3, where 
\begin{eqnarray*}
x&=(a_1\ldots a_{i-1}a_ia_{i+1}\ldots a_n,\ a_1\ldots a_{i-1}\overline{a_i}a_{i+1}\ldots a_n)\,,\\
x'&=(a'_1\ldots a'_{i-1}a_ia'_{i+1}\ldots a'_n,\ a'_1\ldots a'_{i-1}\overline{a_i}a'_{i+1}\ldots a'_n)\,.
\end{eqnarray*}
Initially, each variable other than $v_i$ is a potential parent. The set of potential parents is halved recursively by asking membership queries for swaps $(o,o')$ over $v_i$, with $o(v_i)=a_i$, $o'(v_i)=\bar{a_i}$, and half of the potential parent variables in $o$ and $o'$ having the same values as in $x$, while the other half of the potential parent variables has values identical to those in $x'$. (The variables that have been eliminated from the set of potential parent variables will all be assigned the same values as in $x$.)

\begin{example}
Suppose the queries on the test sets revealed a conflict pair $(x,x')$ for the variable $v_5$, where
\begin{eqnarray*}
x&=(a_1a_2a_3a_4a,\ a_1a_2a_3a_4\overline{a})\,,\\
x'&=(a'_1a'_2a'_3a'_4a,\ a'_1a'_2a'_3a'_4\overline{a})\,,
\end{eqnarray*}
and
\[
a_1a_2a_3a_4a\succ a_1a_2a_3a_4\overline{a}\,,
\]
while
\[
a'_1a'_2a'_3a'_4\overline{a}\succ a'_1a'_2a'_3a'_4a\,.
\]
To find the only parent of $v_5$, Damaschke's procedure will check whether 
\[
a_1a_2a'_3a'_4a\succ a_1a_2a'_3a'_4\overline{a}\,.
\]
If yes, then either $v_1$ or $v_2$ must be the parent of $v_5$, and one will next check whether 
\[
a_1a'_2a_3a_4a\succ a_1a'_2a_3a_4\overline{a}\,.
\]
If yes, then $v_1$ is the parent of $v_5$, else $v_2$ is the parent of $v_5$.
If, however, $a_1a_2a'_3a'_4\overline{a}\succ a_1a_2a'_3a'_4a$, then the second query would have been to test whether $a_1a_2a'_3a_4a\succ a_1a_2a'_3a_4\overline{a}$, in order to determine whether the parent of $v_5$ is $v_3$ or $v_4$.
\end{example}

We formulate a result on learning complete tree CP-nets.

\begin{theorem}\label{thm:treePerfect}
Let $n\ge 1$, $m\ge 2$. Algorithm 1 learns every complete tree CP-net $N^*\in\mathcal{C}_{ac}^1$ over $n$ variables of domain size $m$ with 
\[
O(nm^2\log_2(m)+e_{N^*}\log_2(n))
\] 
membership queries over swap examples in $\mathcal{X}_{swap}$, where $e_{N^*}$ is the number of edges in $N^*$. In particular, for $m=2$, Algorithm 1 requires at most $2n+e_{N^*}\log_2(n)$ queries.
\end{theorem}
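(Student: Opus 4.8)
The plan is to establish the theorem in two parts: first correctness (Algorithm 1 outputs a CP-net equivalent to $N^*$), and then the query count. Both hinge on one structural observation that I would prove first: since $N^*$ is a \emph{tree} CP-net, every variable $v_i$ has at most one parent, and across the $m$ test sets $I_{i,1},\ldots,I_{i,m}$ the (at most one) parent of $v_i$ assumes \emph{all} $m$ of its domain values. Indeed, in $I_{i,j}$ every variable other than $v_i$ is fixed to its $j$-th value, so if $v_p$ is the parent of $v_i$, then $v_p$ takes value $v^p_j$ in $I_{i,j}$, and as $j$ ranges over $\{1,\ldots,m\}$ the parent ranges over its entire domain. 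Hence the $m$ test sets together realize every context relevant to the preference over $D_{v_i}$.

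For correctness, I would argue separately for the two branches of the algorithm. If $v_i$ has no parent, its preference over $D_{v_i}$ is unconditional, so all orders obtained in Step 1 are identical and Step 2 correctly sets $Pa(v_i)=\emptyset$ with the right $\CPT$. If $v_i$ does have a parent $v_p$, then, since $N^*$ is complete and in minimal form (no dummy parents), there exist two values of $v_p$ inducing different orders on $D_{v_i}$; by the observation above these two values occur in two distinct test sets, so Step 1 reveals orders that disagree, i.e., a genuine conflict pair $(x,x')$ for $v_i$. The parent must lie among the variables on which $x$ and $x'$ differ, and Damaschke's binary search isolates it: each query fixes half of the candidate variables to their $x$-values and half to their $x'$-values, and the returned preference matches the $x$-branch precisely when the parent retained its $x$-value, halving the candidate set each round. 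Once the single parent $v_p$ is known, the $m$ orders from Step 1---one per value of $v_p$---give $\CPT(v_i)$ exactly.

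For the query complexity, I would count the two phases. Sorting a single test set $I_{i,j}$ of size $m$ takes $O(m\log_2 m)$ membership queries (comparison-based sorting over the $\binom{m}{2}$ candidate swaps), and there are $m$ test sets per variable and $n$ variables, giving $O(nm^2\log_2 m)$ for Step 1. The binary search in Step 3 is invoked once for each variable that has a parent, i.e., $e_{N^*}$ times (each edge of the tree corresponds to exactly one such invocation), and each search costs $O(\log_2 n)$ queries, for a total of $O(e_{N^*}\log_2 n)$. Summing gives the claimed $O(nm^2\log_2 m + e_{N^*}\log_2 n)$. For the sharper bound when $m=2$, I would simply note that $\binom{m}{2}=1$, so a single query determines the order over each two-element test set; Step 1 then uses exactly $2n$ queries (two test sets per variable), and Step 3 uses $e_{N^*}\log_2 n$, yielding the stated $2n+e_{N^*}\log_2 n$.

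The main obstacle is the correctness of the conflict-detection step, specifically justifying that $m$ test sets suffice instead of all $m^{n-1}$ possible contexts. This is exactly where the tree structure ($k=1$) is essential: the argument would break for $k\ge 2$, since then two parents could interact and a difference in the induced order might surface only for a context combination not realized by any single $I_{i,j}$. I would therefore make sure to state the single-parent property explicitly and use it both to guarantee that a conflict pair is found whenever a parent exists and to guarantee that, once the parent is identified, the $m$ already-sorted test sets pin down the full conditional preference table without further queries.
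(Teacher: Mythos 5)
Your proposal is correct and follows essentially the same route as the paper's proof: the same query accounting for Steps 1 and 3 (including the $m=2$ refinement of one query per test set) and the same correctness argument via conflict pairs and the sufficiency of $m$ test sets under the single-parent property of tree CP-nets. In fact, you spell out details the paper compresses into one sentence---notably that in $I_{i,j}$ the (unique) parent takes its $j$-th value, so the $m$ test sets realize all relevant contexts, and that minimality (no dummy parents) guarantees a conflict pair whenever a parent exists---which is a faithful expansion rather than a different approach.
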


\begin{proof}
Step 1 consumes $O(m^2\log_2(m))$ queries and is run $n$ times, and Step 3 consumes an additional $\log_2(n)$ for every variable $v_i$ that has a parent. In total, this sums up to $O(nm^2\log_2(m)+e_{N^*}\log_2(n))$ queries. When $m=2$, Step 1 requires only $m$ queries for each variable $v_i$, namely a single query per test set, for 2 test sets. This results in a total of $2n+e_{N^*}\log_2(n)$ queries.

The correctness of Algorithm 1 follows from the properties of conflict pairs and tests sets. The existence of a conflict pair is equivalent to the existence of a parent, and the orders over the test sets correspond to $\CPT$ statements for all relevant contexts. The $m$ test sets as specified above are sufficient since each variable has at most one parent. Thus, the responses to the queries asked by Algorithm 1 uniquely determine the target tree CP-net.
\end{proof}

As Theorem~\ref{thm:treePerfect} states, for the binary case, Algorithm 1 requires $2n+e_{N^*}\log_2(n)$ queries at most, i.e., compared to Koriche and Zanuttini's method, when focusing only on tree CP-nets with non-empty $\CPT$s, our method reduces the number of membership queries by a factor of 2, while at the same time dropping equivalence queries altogether.

It is a well-known fact that the teaching dimension of a concept class $\mathcal{C}$ is a lower bound on the worst-case number of membership queries required for learning concepts in $\mathcal{C}$~\cite{GK95}. We have proven above that $\TD(\mathcal{C}^1_{ac})=n(m-1)\mathcal{U}_1=nm(m-1)$. That means that our method uses no more than on the order of $\log_2(m)+e_{N^*}\log_2(n)$ queries more than an optimal one, which means, asymptotically, it uses at most an extra $e_{N^*}\log_2(n)$ queries when $m=2$. 

For $m=2$, i.e., when CP-nets are binary, it is not hard to extend our result to the case of potentially incomplete CP-nets, at the cost of just doubling the number of queries. Perhaps not coincidentally, the teaching dimension also essentially doubles when switching from complete CP-nets to both complete and incomplete CP-nets, cf.\ Theorems~\ref{thm:TD} and \ref{thm:TDincomplete}.

\begin{theorem}\label{thm:treePerfectIncomplete}
Let $n\ge 1$, $m=2$. There exists an algorithm that learns every (complete or incomplete) tree CP-net $N^*\in\overline{\mathcal{C}}_{ac}^1$ over $n$ binary variables with at most $4n+2e_{N^*}\log_2(n)$ membership queries over swap examples in $\overline{\mathcal{X}}_{swap}$, where $e_{N^*}$ is the number of edges in $N^*$. 
\end{theorem}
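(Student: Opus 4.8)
The plan is to reuse Algorithm~1 almost verbatim, with a single modification dictated by incompleteness: whereas in the complete binary case a single query on a swap $(o,o')$ determines the relation between $o$ and $o'$ (either $o\succ o'$ or $o'\succ o$), in the incomplete case the answer $c(o,o')=0$ is ambiguous, since it may indicate either $o'\succ o$ or that $o$ and $o'$ are incomparable. I would therefore replace every membership query on a swap $(o,o')$ by the pair of queries $(o,o')$ and $(o',o)$. The two answers together pin down the exact relation between $o$ and $o'$: a strict preference in one of the two directions, or incomparability (both answers $0$). This is the only change needed, and it at most doubles the number of queries asked by Algorithm~1, which is precisely the slack between Theorem~\ref{thm:treePerfect} and the present bound (and mirrors the doubling of the teaching dimension between Theorems~\ref{thm:TD} and~\ref{thm:TDincomplete}).

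First I would handle the test sets. For $m=2$ each variable $v_i$ has exactly two test sets $I_{i,1},I_{i,2}$, each of size $2$, so each test set contains a single swap over $v_i$. Instead of one query per test set, I ask both directions, spending $2$ queries per test set and hence $4$ queries per variable, for a total of $4n$. This determines, for the all-$1$s and the all-$2$s context over the remaining variables, whether $v_i$ prefers one value, the other, or is indifferent. Next comes parent detection. Since $m=2$, any single binary parent $v_j$ of $v_i$ takes exactly its two possible values in the two contexts covered by $I_{i,1}$ and $I_{i,2}$. By minimality (no dummy parents), if $v_i$ has a parent, the relations over $D_{v_i}$ recorded in $I_{i,1}$ and $I_{i,2}$ must differ---whether as two opposite strict orders, or as one strict order against indifference; conversely, if the two relations coincide, $v_i$ has no parent and its (possibly empty) $\CPT$ is already fully determined.

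When the two relations differ, orienting both swaps with the same fixed value of $v_i$ first yields a pair in which exactly one swap is entailed, i.e.\ a conflict pair in the sense of Definition~\ref{def:conflict}, which is exactly what is needed to launch Damaschke's binary search for the unique parent. At each of its $\log_2(n)$ steps I again issue both directions of the probed swap to read off the exact relation at the intermediate context. Because only the parent's value influences the preference over $v_i$, and the parent is forced to take one of its two endpoint values at every intermediate context, the determined relation always coincides with one of the two already-known endpoint relations, so the search proceeds exactly as in the complete case. Doubling costs $2\log_2(n)$ per edge, i.e.\ $2e_{N^*}\log_2(n)$ in total, giving the claimed bound $4n+2e_{N^*}\log_2(n)$.

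Correctness is inherited from Theorem~\ref{thm:treePerfect}: for every variable the algorithm recovers its parent (if any) and its full, possibly incomplete, $\CPT$---including all contexts of indifference---so the responses uniquely determine $N^*$. The main thing to get right is the bookkeeping around incomparability: verifying that every ambiguous $0$-answer is disambiguated by its reverse query, and that two binary test sets still suffice to detect a parent once indifference is admitted (which rests on the minimality assumption together with the fact that the two test sets cover both values of any binary parent). I expect no genuinely new combinatorial difficulty beyond this careful case analysis, so the proof is essentially a robustness upgrade of the perfect-oracle argument.
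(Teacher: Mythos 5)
Your proposal is correct and matches the paper's own proof essentially verbatim: both duplicate every query of Algorithm~1 with its reverse swap to disambiguate $0$-answers, observe that conflict pairs (now possibly pitting a strict order against indifference) still certify a parent and that Damaschke's binary search goes through unchanged, and conclude the $4n+2e_{N^*}\log_2(n)$ bound from Theorem~\ref{thm:treePerfect}. Your added case analysis of indifference in the test sets and at intermediate search contexts only spells out details the paper's shorter proof leaves implicit.
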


\begin{proof}
The desired algorithm is a simple modification of Algorithm 1; whenever Algorithm 1 asks a query for a swap $x=(x.1,x.2)$, the new algorithm will ask two queries, namely one for $x=(x.1,x.2)$ and one for $x'=(x.2,x.1)$ (since it is no longer guaranteed that $c(x')=1$ whenever $c(x)=0$.) Other than that, the algorithm proceeds the same way as Algorithm 1. If both queries $x$ and $x'$ are answered with 0, then the target CP-net does not specify a preference between $x.1$ and $x.2$. Determining orders over test sets then works with at most twice the number of queries as before, even if those orders are empty. Conflict pairs still imply the existence of a parent, and still every variable has at most one parent, which can still be found with binary search. The claim then follows from Theorem~\ref{thm:treePerfect}.
\end{proof}

Let us compare the algorithm provided by Koriche and Zanuttini~\cite{Koriche2010685} to ours. While our algorithm does not require equivalence queries, uses the same number of queries in total when learning both complete and incomplete CP-nets, and is described for not necessarily binary CP-nets, Koriche and Zanuttini's has two desirable properties that our method does not have:
\begin{itemize}
\item It learns CP-nets with nodes of arbitrary bounded indegree $k$, not just for $k=1$. 
\item It is attribute-efficient, i.e., the number of queries it poses is polynomial in the size of the target CP-net, but only logarithmic in the number $n$ of variables.
\end{itemize}

The attribute-efficiency of their algorithm is possible only because equivalence queries are allowed. In particular, consider the incomplete CP-net without edges in which every $\CPT$ is empty. This CP-net has size 0, but to learn it only with membership queries over swap examples requires to ask queries regarding every one of its $n$ $\CPT$s.

Concerning arbitrary values of $k$, Section~\ref{sec:boundedPerfect} will refine Algorithm 1 and present a new procedure that efficiently learns any $k$-bounded acyclic CP-net when $m=2$, i.e., in the binary case.


\subsection{Bounded Binary Acyclic CP-nets}\label{sec:boundedPerfect}

In this section, we show that the teaching dimension results for $\mathcal{C}_{ac}^k$ (cf.\ Theorem \ref{thm:TD}) immediately yield a general strategy for learning complete acyclic CP-nets from membership queries alone, when $m=2$. Recall that, in the binary case,  $n\mathcal{U}_k\le \TD(\mathcal{C}_{ac}^k)\le e_{max}+n\mathcal{U}_k$, where $\mathcal{U}_k$ is the size of an $(n-1,k)$-universal set $U$ of minimum size. 

For $k\geq 2$, the quantity $\mathcal{U}_k$ is known to be $\Omega(2^k \log_2(n-1))$ and $O(k2^k \log_2(n-1))$ \cite{DBLP:journals/tit/SeroussiB88}. Thus, using $e_{max}\le nk$, we obtain
\[
\Omega(n2^k \log_2(n-1))\ni \TD(\mathcal{C}_{ac}^k)\in O(nk2^k \log_2(n-1))\,.
\] 

Our proposed method for learning $\mathcal{C}_{ac}^k$ from membership queries reuses the idea of universal sets in teaching sets. For each variable $v_i$, one selects a swap expression $\mathcal{F}$ of the context set $U_{v_i}$ imposed by $U$. Then, for every variable $v_i$, one queries the elements of such set $\mathcal{F}$. The rest of the method we propose is an adaptation of Algorithm 1. For simplicity, in the description of the algorithm, we assume all variables have the same domain $\{0,1\}$.

\medskip

\noindent\textbf{Algorithm 2.} For every variable $v_i$, determine $Pa(v_i)$ and $\CPT(v_i)$ as follows:
\begin{enumerate}
	\item Ask membership queries for all the elements of $\mathcal{F}$. \emph{This step amounts to $\mathcal{U}_k$ membership queries.}
	\item Initialize $P=\emptyset$; the set $P$ will always contain all the parents of $v_i$ found so far. If $|P|=k$, $Pa(v_i)$ must equal $P$, since no variable has more than $k$ parents. In this case, go to Step 7, else go to Step 3.
	\item If all of the queried instances for $\mathcal{F}$ show the same preference over the values of $v_i$, then $v_i$ has no further parents, i.e., $Pa(v_i)=P$. (This is because there are at most $k$ parents for $v_i$ and the same statement appears in every context of all potential parent sets of size $k$, cf.\ Lemma~\ref{lem:TDuniversal}.) In this case, go to Step 7. 
	
	Otherwise, there exist two contexts $\gamma,\gamma'\in\mathcal{O}_{V\setminus (P\cup \{v_i\})}$ for which $\gamma:v^i_1\succ v^i_2$ and $\gamma':v^i_2\succ v^i_1$ are answers obtained from the queries in Step 1. Clearly, $Pa(v_i)$ contains an element of $V\setminus (P\cup \{v_i\})$. 
	\item Apply Damaschke's binary search procedure \cite{adaptiveLearning} to determine one new variable $v\in Pa(v_i)$, by analogy to the procedure used in Algorithm 1. Add $v$ to the set $P$. \emph{This step amounts to at most $\log_2(n-1)$ membership queries.} 
	\item Partition $\mathcal{F}$ into two sets $\mathcal{F}_{v=0}$ and $\mathcal{F}_{v=1}$. Here $\mathcal{F}_{v=z}$ is the set of all swaps in $\mathcal{F}$ in which the value of $v$ is $z$. Note that these swaps have all been queried in Step 1 already.
	\item If all of the queried instances for $\mathcal{F}_{v=0}$ show a single preference over the values of $v_i$, and likewise all of the queried instances for $\mathcal{F}_{v=1}$ show a single preference over the values of $v_i$, then $Pa(v_i)=P$; go to Step 7. 
	
	Otherwise, recursively add parents to $P$ by calling Step 3 once with $\mathcal{F}_{v=0}$ in place of $\mathcal{F}$ and once with $\mathcal{F}_{v=1}$ in place of $\mathcal{F}$, both times excluding the elements of $P$ from the binary search in Step 4.
	\item From the queries posed above, $\CPT(v_i)$ is fully determined.
\end{enumerate}

\begin{example}
Consider $n=4$, $m=2$, and $k=2$, and suppose the CP-net from Figure~\ref{fig:alg2} is the target. A smallest $(3,2)$-universal set has four elements, which will be the cardinality of each set $\mathcal{F}$ considered for each variable. Algorithm 2 will consider each variable in turn. 

For variable $A$, it will first ask four queries, one for each swap pair in, say, the following set
\[
\mathcal{F}=\{(abcd,\bar{a}bcd),(a\bar{b}c\bar{d},\bar{a}\bar{b}c\bar{d}),(ab\bar{c}\bar{d},\bar{a}b\bar{c}\bar{d}),(a\bar{b}\bar{c}d,\bar{a}\bar{b}\bar{c}d)\}\,.
\]
(Here the $(3,2)$-universal set realizing all possible 2-dimensional context sub-vectors within 3-dimensional context vectors would be $\{bcd, \bar{b}c\bar{d}, b\bar{c}\bar{d}, \bar{b}\bar{c}d\}$.)
The responses show that $a\gamma\succ\bar{a}\gamma$ for all $\gamma\in\mathcal{O}_{V\setminus\{A\}}$. The algorithm hence detects that $A$ has no parents and that $a\succ\bar{a}$ is the only statement in $\CPT(A)$.

For variable $B$, the same procedure applied to
\[
\mathcal{F}=\{(abcd,a\bar{b}cd),(\bar{a}bc\bar{d},\bar{a}\bar{b}c\bar{d}),(ab\bar{c}\bar{d},a\bar{b}\bar{c}\bar{d}),(\bar{a}b\bar{c}d,\bar{a}\bar{b}\bar{c}d)\}
\]
reveals that $abcd\succ a\bar{b}cd$, $\bar{a}\bar{b}c\bar{d}\succ\bar{a}bc\bar{d}$, $ab\bar{c}\bar{d}\succ a\bar{b}\bar{c}\bar{d}$, $\bar{a}\bar{b}\bar{c}d\succ \bar{a}b\bar{c}d$. Two contexts $\gamma$ and $\gamma'$ for which the preference over $D_B$ differs are, for example, $\gamma=acd$, $\gamma'=\bar{a}c\bar{d}$. Hence $B$ must have at least one parent among $A,C,D$, and applying Damaschke's binary search to the context $\gamma$ yields that the preference given $acd$ differs from the preference given $\bar{a}cd$. So $A$ must be a parent of $B$. Partitioning $\mathcal{F}$ yields 
\[
\mathcal{F}_{A=a}=\{(abcd,a\bar{b}cd),(ab\bar{c}\bar{d},a\bar{b}\bar{c}\bar{d})\}\mbox{ and }\mathcal{F}_{A=\bar{a}}=\{ (\bar{a}bc\bar{d},\bar{a}\bar{b}c\bar{d}),(\bar{a}b\bar{c}d,\bar{a}\bar{b}\bar{c}d)\}\,.
\]
Each set displays a single preference order over $D_B$ in the queries originally asked, so that $B$ has no further parents. The algorithm has determined that $Pa(B)=\{A\}$ and that $\CPT(B)$ contains the statements $a:b\succ\bar{b}$ and $\bar{a}:\bar{b}\succ b$. 

For variable $C$, queries for pairs in the corresponding set $\mathcal{F}$
reveal $abcd\succ ab\bar{c}d$, $\bar{a}bc\bar{d}\succ\bar{a}b\bar{c}\bar{d}$, $a\bar{b}c\bar{d}\succ a\bar{b}\bar{c}\bar{d}$, and $\bar{a}\bar{b}\bar{c}d\succ\bar{a}\bar{b}cd$. In the same way as for variable $B$, the algorithm finds a first parent of $C$, say $A$. This time, partitioning $\mathcal{F}$ into  $\mathcal{F}_{A=a}$ and $\mathcal{F}_{A=\bar{a}}$ will show that $C$ must have another parent, since the two contexts $\bar{a}b\bar{d}$ and $\bar{a}\bar{b}d$ in $\mathcal{F}_{A=\bar{a}}$ give rise to two different preference orders over $D_C$. Another binary search finds the parent $B$. Since $k=2$, the variable $C$ has no further parents, and the algorithm infers $Pa(C)=\{A,B\}$ alongside the CPT statements for $C$. 

Finally, Algorithm 2 would, similarly to the case of $\CPT(A)$, use four queries to learn $\CPT(D)$.
\end{example}

\begin{theorem}\label{thm:boundedPerfect}
Let $n\ge 1$, $m=2$, $0\le k\le n-1$. Algorithm 2 learns every complete binary $k$-bounded CP-net $N^*\in\mathcal{C}_{ac}^k$ over $n$ variables with 
\[
O(n\mathcal{U}_k+e_{N^*}\log_2(n))
\] 
membership queries over swap examples in $\mathcal{X}_{swap}$, where $e_{N^*}$ is the number of edges in $N^*$. 
\end{theorem}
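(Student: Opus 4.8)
The plan is to prove the two assertions of the theorem separately: first the bound on the number of membership queries, and then the correctness of Algorithm~2, i.e.\ that the answers it gathers uniquely pin down the target $N^*$. For the query count I would simply tally the two sources of queries. Step~1 is executed once per variable and, since $m=2$, costs exactly $(m-1)\mathcal{U}_k=\mathcal{U}_k$ queries (the cardinality of the swap expression $\mathcal{F}$ of the context set imposed by a smallest $(n-1,k)$-universal set $U$), contributing $n\mathcal{U}_k$ over all variables. The only further queries arise in the binary searches of Step~4: each run of Damaschke's procedure costs at most $\log_2(n-1)$ queries and is invoked exactly once per parent it uncovers. Since distinct invocations uncover distinct parents, and the total number of parent--child pairs is $e_{N^*}$, the binary searches consume at most $e_{N^*}\log_2(n-1)$ queries. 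Summing yields $O(n\mathcal{U}_k+e_{N^*}\log_2 n)$.

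For correctness I would fix a variable $v_i$ and argue by induction on the number of its parents not yet discovered, with the set $P$ tracking the parents found so far. The structural fact driving the induction is that the universal-set property is preserved under the partitioning of Steps~5--6: if $U\subseteq\{0,1\}^{n-1}$ is $(n-1,k)$-universal and we retain only those vectors whose component for the newly found parent $v$ equals a fixed value $z$, then their restriction to the remaining $n-2$ components is $(n-2,k-1)$-universal. This holds because any $k-1$ of the remaining components, together with the component for $v$, form $k$ components onto which $U$ projects surjectively onto $\{0,1\}^k$; exactly half of those $2^k$ images have $v=z$, so the retained subset still surjects onto $\{0,1\}^{k-1}$.

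With this lemma in hand the induction runs smoothly. By Lemma~\ref{lem:TDuniversal}, $v_i$ has a still-undiscovered parent if and only if the current swap expression exhibits two contexts with opposite preferences over $D_{v_i}$; hence the conflict test in Step~3 detects a further parent precisely when one exists. When a conflict is found, the binary search of Step~4 correctly identifies a genuine parent $v$ (verbatim the argument used for Algorithm~1, since a conflict pair isolates a variable whose flip reverses the preference). Partitioning $\mathcal{F}$ by the value of $v$ then yields two independent subproblems with bound $k-1$ over a universe of size $n-2$, each carrying a valid universal-set-based swap expression by the structural lemma, so the inductive hypothesis applies. The recursion terminates either at $|P|=k$ (Step~2), where no additional parent is possible, or when no conflict remains (Step~3), where $v_i$ has no further parents; in both terminal cases the universal-set property guarantees that every assignment to the at most $k$ true parents is realized by some queried context, so the entailments read off the queries determine $\CPT(v_i)$ without ambiguity. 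As this holds for each variable, the collected answers uniquely determine $N^*$.

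The main obstacle I expect is the structural lemma on the preservation of the universal-set property under partitioning, together with the bookkeeping that confirms, at every level of the recursion, that each assignment to the current true parent set is covered by one of the already-posed queries, so that the conditional preference table can be recovered exactly. By contrast, the query-count tally and the single-step soundness of the binary search are routine and essentially inherited from the analysis of Algorithm~1.
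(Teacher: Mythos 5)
Your proposal is correct and follows essentially the same route as the paper's proof: the identical query tally (one $\mathcal{U}_k$-sized swap-expression batch per variable plus one $\log_2(n)$-cost binary search per edge) and correctness via Lemma~\ref{lem:TDuniversal} applied recursively to the partition classes of $\mathcal{F}$. Your explicit structural lemma---that fixing one component of an $(n-1,k)$-universal set leaves an $(n-2,k-1)$-universal set on the remaining components---is a welcome, correctly proved elaboration of the step the paper dispatches with the terse remark that ``the same lemma can be applied to any part of the set $\mathcal{F}$ as defined in Step 5.''
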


\begin{proof} The number of queries made by Algorithm 2 can be upper-bounded as follows. The only steps in which queries are made are Steps 1 and 3. Step 1 is run once for each of the $n$ variables, each time costing $\mathcal{U}_k$ queries, for a total of $n\mathcal{U}_k$ queries. Step 3 is run once for each edge in the target CP-net, each time costing at most $\log_2(n)$ queries, for a total of $O(e_{N^*}\log_2(n))$ queries. In the sum, the query consumption of Algorithm 2 is in $O(n\mathcal{U}_k+e_{N^*}\log_2(n))$.

We will next prove the correctness of Algorithm 2. Note that the set $\mathcal{F}$ in Step 1 of the algorithm is a swap expression of a context set imposed by a $(2,n-1,k)$-universal set. By Lemma~\ref{lem:TDuniversal}, Algorithm 2 decides correctly in Step 2 whether or not $v_i$ has a parent. The same lemma can be applied to any part of the set $\mathcal{F}$ as defined in Step 5. Hence, Algorithm 2 correctly determines the parent set of each variable. Since $\mathcal{F}$ is a swap expression of a context set imposed by a $(2,n-1,k)$-universal set, and the domain is of size 2, the responses to queries in $\mathcal{F}$ determine a preference order for each of the relevant contexts. Consequently, Algorithm 2 is able to infer all $\CPT$ statements from those responses.
\end{proof}

Thus, one can identify any concept $c\in \mathcal{C}_{ac}^k$ with $n\mathcal{U}_k+e_c\log_2(n)$ membership queries, where $\mathcal{U}_k\in O(nk2^k\log_2(n-1))$.  A universal set of such size was proven to exist by a probabilistic argument with no explicit construction of the set \cite{Jukna:2010:ECA:1965203}. However, a construction of an $(n-1,k)$-universal set of size $2^k \log_2(n-1) k^{O(\log_2 (k))}$ is reported in the literature \cite{DBLP:journals/tit/SeroussiB88}. Therefore, one can effectively learn any concept in $\mathcal{C}_{ac}^k$ with a number of queries bounded by $n2^k \log_2(n-1) k^{O(\log_2 (k))}+e\log_2(n)$, which is at most a factor of $k^{O(\log_2 k)}$ away from the teaching dimension, and thus at most a factor of $k^{O(\log_2 k)}$ away from optimal. Moreover, when $k2^k<\sqrt{n}$, there is an explicit construction of a $(n,k)$-universal set of size $n$~\cite{Jukna:2010:ECA:1965203}, e.g., for $n=800$ this construction is guaranteed to work with an indegree up to $3$ and for $n=1500$ with an indegree up to $4$. This can be utilized if one is interested in learning sparse CP-nets over a large number of variables. 


Finally, a simple adaptation of Algorithm 2 efficiently learns all $k$-bounded acyclic CP-nets, whether complete or incomplete. As in the proof of Theorem~\ref{thm:treePerfectIncomplete}, this comes at a cost of a factor of 2 in the number of queries. The proof is easily adapted from that of Theorem~\ref{thm:boundedPerfect}.

\begin{theorem}\label{thm:boundedPerfectIncomplete}
Let $n\ge 1$, $m=2$, $0\le k\le n-1$. There exists an algorithm that learns every (complete or incomplete) binary $k$-bounded CP-net $N^*\in\overline{\mathcal{C}}_{ac}^k$ over $n$ variables with 
\[
O(n\mathcal{U}_k+e_{N^*}\log_2(n))
\] 
membership queries over swap examples in $\overline{\mathcal{X}}_{swap}$, where $e_{N^*}$ is the number of edges in $N^*$. 
\end{theorem}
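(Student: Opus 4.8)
The plan is to obtain the desired algorithm as a direct modification of Algorithm~2, in exactly the same spirit as the passage from the complete case (Theorem~\ref{thm:boundedPerfect}) to the incomplete tree case treated in Theorem~\ref{thm:treePerfectIncomplete}. Concretely, I would keep the control flow of Algorithm~2 unchanged but replace every single membership query for a swap $x=(x.1,x.2)$ by a pair of queries, one for $x=(x.1,x.2)$ and one for its reverse $x'=(x.2,x.1)$. The reason this is necessary is that, over $\overline{\mathcal{X}}_{swap}$, a label $c(x)=0$ no longer forces $c(x')=1$: the two outcomes $x.1$ and $x.2$ may be incomparable under the target CP-net, which corresponds to $c(x)=c(x')=0$. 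Querying both members of the pair lets the learner determine, for each swapped context, the full three-valued preference status of $v_i$---namely $x.1\succ x.2$, $x.2\succ x.1$, or incomparable---which is precisely the information the complete-case algorithm obtained for free from a single query. In the language of the teaching-set construction, this amounts to enlarging each swap expression $\mathcal{F}$ by forming its symmetric closure, exactly as in the proof of Theorem~\ref{thm:TDincomplete}.

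For correctness I would argue that every structural ingredient of Algorithm~2 survives this change. First, the set $\mathcal{F}$ queried for each variable is still a swap expression of a context set imposed by a $(2,n-1,k)$-universal set, so Lemma~\ref{lem:TDuniversal} still certifies that $v_i$ has a parent (beyond those already found) if and only if two of the queried contexts exhibit differing preference status over $D_{v_i}$; here ``differing'' must now be read in the three-valued sense, and the doubled queries supply exactly this information. Second, once a difference is detected, Damaschke's binary search \cite{adaptiveLearning} still isolates a single new parent: the preference over $v_i$ depends only on $Pa(v_i)$, so starting from two contexts with different status and flipping the candidate variables in halves localizes one variable whose change alters the status, and each probe's status is again recovered by a doubled query. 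Third, once $Pa(v_i)$ is fixed, the symmetrically closed swap expression determines $\CPT(v_i)$ completely, now correctly recording a context as carrying an empty preference statement whenever both orientations of the corresponding swap are labelled $0$.

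For the query count, doubling each query multiplies the total by a constant factor, so the bound stays in $O(n\mathcal{U}_k+e_{N^*}\log_2(n))$: Step~1 costs $2\mathcal{U}_k$ per variable for a total of $2n\mathcal{U}_k$, and each of the $e_{N^*}$ binary searches costs at most $2\log_2(n)$ queries. I expect the only genuinely delicate point to be the three-valued parent-detection step: I must confirm that an incomparable context behaves, for the purposes of both Lemma~\ref{lem:TDuniversal} and the binary search, just like a third preference value, so that a variable is flagged as a parent exactly when some context changes the status of $v_i$, and that this is consistent with the minimal-form assumption forbidding dummy parents. Once that is checked, the rest follows verbatim from the proof of Theorem~\ref{thm:boundedPerfect} together with the doubling argument of Theorem~\ref{thm:treePerfectIncomplete}.
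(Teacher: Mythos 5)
Your proposal is correct and follows essentially the same route as the paper's own proof: simulate Algorithm~2, replacing each membership query for a swap $x=(x.1,x.2)$ by the pair of queries $x$ and $x'=(x.2,x.1)$, interpret two $0$-answers as incomparability, and absorb the doubling into the $O(n\mathcal{U}_k+e_{N^*}\log_2(n))$ bound. Your explicit check that the parent-detection and binary-search steps remain sound under the three-valued preference status is a point the paper leaves implicit, but it does not change the argument.
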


We can now use our results on the VC dimension to compare the query consumption of our algorithms to the sample complexity of PAC learning. PAC learning~\cite{V84} is a model of learning from randomly chosen examples, in which a target concept is approximated with high confidence. By comparison, our algorithms are \emph{guaranteed} to learn the target CP-net \emph{exactly}, by selecting examples non-randomly.

The VC dimension of a concept class $\mathcal{C}$ is known to characterize the asymptotic sample complexity for PAC-learning $\mathcal{C}$; in particular, the number of examples required for learning with error at most $\epsilon$ and confidence at least $1-\delta$ is in $\Theta(\frac{1}{\epsilon}(\VCdim(\mathcal{C})+\ln(\frac{1}{\delta})))$~\cite{BEHW89,Hanneke16}. That means, for tree CP-nets, using our result $\VCdim \ge 2n-1$, the PAC sample complexity is in $\Omega(\frac{1}{\epsilon}(n+\ln(\frac{1}{\delta})))$, which is asymptotically at most a factor of $\log(n)$ below the query complexity of our algorithm, which amounts to $2n+e\log(n)$, where $e\le n-1$. For $k$-bounded acyclic CP-nets, Theorem~\ref{thm:VCD} yields a VC dimension of at least $(n-k)2^k+2^k-1$, resulting in a PAC sample complexity of $\Omega(\frac{1}{\epsilon}((n-k)2^k+\ln(\frac{1}{\delta})))$, while our query complexity is $n2^k\log(n)k^{O(\log(k))}$. That means asymptotically, for constant $k$, a difference of no more than a factor of $\log(n)$, as in the tree case.

The query complexity of our algorithm improves on the best upper bound on the membership query complexity claimed in the literature so far: Chevaleyre et al.~\cite{qqq} stated that $k$-bounded acyclic CP-nets can be learned with $O(kn^{k+1}2^{k-1})$ membership queries, which asymptotically exceeds our query complexity by a factor of $\frac{n^k}{k^{O(\log(k))}}$.

\section{Learning from Corrupted Membership Queries}
\label{sec:corrupted}

So far, we have assumed that all membership queries are answered correctly. This assumption is unrealistic in many settings, especially when it comes to dealing with human experts that may have incomplete knowledge of the target function. For instance, when eliciting CP-nets from users, it could be the case that the user actually does not know which of two given outcomes is to be preferred or does not want to respond to a query. When recommender systems explicitly request information from users, be it in the form of item ratings or in the form of preference queries, not all queries will be answered and some will be answered incorrectly. 

In this section, we consider two models of learning from membership queries in which the oracle does not always provide a correct answer. In particular, we consider the situation in which there is a fixed set $L$ of instances $x$ for which the oracle does not provide the true classification $c^*(x)$. The set $L$ is assumed to be chosen in advance by an adversary. There are two ways in which the oracle could deal with queries to elements in $L$:
\begin{itemize}
\item A \emph{limited oracle}\/ returns \quotes{I don't know} (denoted by $\perp$) when queried for any $x\in L$, and returns the true label $c^*(x)$ for any $x\not\in L$ \cite{Angluin1997,DBLP:journals/jcss/BishtBK08}. 
\item A \emph{malicious oracle}\/ returns the wrong label $1-c^*(x)$ when queried for any $x\in L$, and returns the true label $c^*(x)$ for any $x\not\in L$ \cite{Angluin1997,DBLP:journals/tcs/BennetB07}.
\end{itemize}
In either case, the oracle is persistent in the sense that it will return the same answer every time the same instance is queried.  A concept class $\mathcal{C}$ is learnable with membership queries to a limited (malicious, resp.)\ oracle if there is an algorithm that exactly identifies any target concept $c^*\in\mathcal{C}$ by asking a limited (malicious, resp.)\ oracle a number of membership queries that is polynomial in $n$, the size of $c^*$, and $|L|$ \cite{Angluin1997,DBLP:journals/jcss/BishtBK08}\footnote{This corresponds to the strict learning model discussed in \cite{Angluin1997,DBLP:journals/jcss/BishtBK08}.}. 

Consider for instance the application scenario of a recommender system that explicitly asks users for preferences over pairs of items. The limited oracle then corresponds to a user who chooses to ignore some queries or who does not know which of the two presented items to prefer (perhaps since the user doesn't know the items well enough to express a preference.) The malicious oracle is a simplified model of a user who misunderstands some queries and answers them incorrectly, or who gets annoyed with queries and answers some of them on purpose incorrectly. 

At first, the assumption that the oracle is persistent may seem unrealistic. However, in practice, this assumption translates into never asking the same query twice, since the system cannot possibly gain information from repeating queries to a persistent user. Repeating queries to the user is often undesirable, as it bears a high risk of annoying the user.

For practical applications, one might prefer models of probabilistic noise and learning algorithms that infer approximations to the true target based on randomness in the noise. Instead, we pursue the idealized goal of exactly identifying the target concept in the face of partial or incorrect information, which is useful for studying strict limitations of learning. 

It is obvious that exact learning is not possible unless some restrictions are made on the set $L$. Note though that we assume that the learner has no prior information on the size or content of $L$. Instead, we will study under which conditions on $L$ learning complete acyclic CP-nets from limited or malicious oracles is possible even if the learner has no information on these conditions. 

We will restrict our analysis to the case of binary CP-nets. Further, we will make use of the trivial observation that exact learnability with membership queries to a malicious oracle implies exact learnability with membership queries to a limited oracle.

\subsection{Limitations on the Corrupted Set}

We first establish that learning the class $\mathcal{C}_{ac}^k$ of all complete acyclic binary $k$-bounded CP-nets over $n$ variables is impossible, both from malicious and from limited oracles, when $|L|\ge 2^{n-1-k}$.

\begin{proposition}
	Let $m=2$ and $1\le k\le n-1$. If $|L|\geq 2^{n-1-k}$, then the class $\mathcal{C}_{ac}^k$ is not learnable with membership queries to a limited oracle and not learnable with membership queries to a malicious oracle.
\end{proposition}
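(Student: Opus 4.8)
The plan is to exhibit one fixed corruption set together with two distinct target concepts in $\mathcal{C}_{ac}^k$ that no learner can tell apart, establishing an information-theoretic impossibility (so the conclusion holds even for learners allowed unboundedly many queries). First I would dispose of the malicious case by reduction: since learnability with a malicious oracle implies learnability with a limited oracle, it suffices to prove non-learnability with a \emph{limited}\/ oracle, and the malicious case follows by contraposition. So the whole argument reduces to the limited oracle.

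For the limited oracle, the key observation is that it reveals nothing about the labels of instances in $L$ (it answers $\perp$ there) and reveals the true label off $L$. Hence, if there exist two distinct concepts $c_1,c_2\in\mathcal{C}_{ac}^k$ whose disagreement set $D=\{x\in\mathcal{X}_{swap}\mid c_1(x)\neq c_2(x)\}$ satisfies $|D|\le |L|$, the adversary can fix $L=D$. Then on $L$ the oracle returns $\perp$ regardless of which of $c_1,c_2$ is the target, while off $L$ we have $c_1=c_2$, so the oracle's responses are byte-for-byte identical under either target. A learner therefore cannot distinguish $c_1$ from $c_2$, and exact identification is impossible.

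It then remains to construct such a pair with $|D|=2^{n-1-k}$. I would start from any complete acyclic $k$-bounded CP-net $N_1$ containing a variable $v_i$ of full indegree $k$ (such a net exists because $k\le n-1$; for instance assign $v_i$ the parent set $\{v_1,\dots,v_k\}$, keep the graph acyclic, and complete all $\CPT$s arbitrarily). Let $N_2$ be obtained from $N_1$ by reversing the single preference order $\succ^{v_i}_\gamma$ for one fixed context $\gamma\in\mathcal{O}_{Pa(v_i)}$. The only swaps whose labels can change are those with swapped variable $v_i$ and context $\gamma$ on $Pa(v_i)$, and all of them do flip; fixing $v_i$ together with the $k$ parent values leaves the remaining $n-1-k$ variables free, so exactly $2^{n-1-k}$ instances of $\mathcal{X}_{swap}$ change label. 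Thus $|D|=2^{n-1-k}\le|L|$ and $c_1\neq c_2$ since $D\neq\emptyset$.

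The main obstacle I anticipate is careful bookkeeping rather than conceptual depth: I must verify that the count $|D|=2^{n-1-k}$ is exact, which is precisely why $v_i$ must have full indegree $k$ (a variable with fewer parents would flip strictly more than $2^{n-1-k}$ labels), and I must confirm that both labelings are genuine members of $\mathcal{C}_{ac}^k$. The latter hides a small wrinkle: reversing $\succ^{v_i}_\gamma$ may render one of the $k$ parents of $v_i$ a dummy in $N_2$, which would violate the minimal-form convention on representations. This is harmless, because the concept $c_2$ induced by $N_2$ coincides with the concept induced by the minimal CP-net obtained by deleting the dummy edge, and the latter is a bona fide element of $\mathcal{C}_{ac}^k$; what matters for the disagreement set and for membership in the class is the induced swap labeling, not the particular representation.
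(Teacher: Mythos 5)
Your proof is correct and follows essentially the same route as the paper's: reduce the malicious case to the limited case, then exhibit two $k$-bounded nets differing only in the preference order at a single context $\gamma$ of a variable with full indegree $k$, so that the disagreement set consists of exactly the $2^{n-1-k}$ swaps fixing $\gamma$, all of which the adversary places in $L$. Your version is in fact slightly more careful than the paper's, since you explicitly note why full indegree $k$ is needed for the count and you handle the dummy-parent/minimal-form wrinkle that the paper's construction (where the flipped table becomes unconditional) leaves implicit.
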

\begin{proof}
It suffices to prove non-learnability from limited oracles.

Consider a CP-net $N$ in which all the variables are unconditional, except for one variable $v_i$, which has exactly $k$ parents. Furthermore, let $\CPT(v_i)$ impose the same order on the domain of $v_i$ for all contexts over $Pa(v_i)$, except for one context $\gamma$ over $Pa(v_i)$ for which $\CPT(v_i)$ imposes the reverse order over the domain of $v_i$. Figure \ref{fig:corruptionLimitExample} shows an example of such a CP-net. Consider the set $X=\{x=(x.1,x.2)\in \mathcal{X}_{swap}\mid V(x)=v_i$ and $x[Pa(v_i)]=\gamma\}$. Each swap instance $x$ in $X$ has the same values in the $k$ parent variables of $v_i$, and there are no choices for its values in the binary swap variable $v_i$ either. For each of the remaining $n-1-k$ variables, there are $m=2$ choices. Hence, the cardinality of $X$ is $2^{n-1-k}$. Since $|L|\geq 2^{n-1-k}$, the adversary can choose $L$ so that $X\subseteq L$.

Then, in communication with a limited membership oracle, the learner will not be able to distinguish $N$ from the CP-net $N'$ that is equivalent to $N$ except for having the orders over the domain of $v_i$ swapped in $\CPT(v_i)$.
\end{proof}
		
In essence, this negative result is due to the fact that the number of instances supporting a statement in a CPT with $k$ parents is $2^{n-1-k}$---having all these corrupted makes learning hopeless. In particular, when $k=n-1$, i.e., for unbounded CP-nets, even a single corrupted query will make learning impossible. This is because a $\CPT$ of a variable with $n-1$ parents may have statements that are witnessed by only a single context, and thus by only a single swap pair $x$. The adversary can choose $L=\{x\}$ and not respond to the query $x$, so that the learner cannot figure out the correct preference for the swap pair $x$. 

In the remainder of this section, we will study assumptions on the structure of $L$ that will allow a learning algorithm to overcome the corrupted answers from limited or malicious oracles at the expense of a bounded number of additional queries. That means, we will constrain the adversary in its options for selecting $L$, without directly limiting the size of $L$. Roughly speaking, we no longer look at the worst possible choice of the set $L$, but at a case where the corrupted responses are ``reasonably spread out.''

The goal is to be able to obtain the correct answer for any query $x$ made by our algorithms that learn from perfect oracles, simply by taking majority votes over a bounded number of additional ``verification queries.'' If we know that the corrupted oracle will affect only a small subset of these verification queries (small enough so that the majority vote over them is guaranteed to yield the correct label for $x$,) we can use the same learning procedures as in the perfect oracle case, supplemented by a bounded number of verification queries.

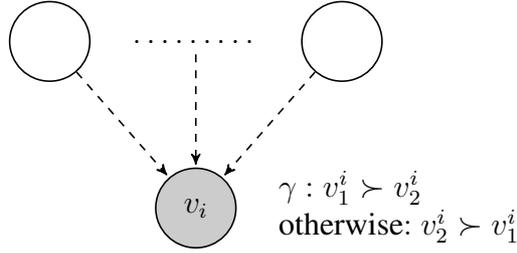
\begin{figure}
	\centering
	\begin{tikzpicture}[->,>=stealth',shorten >=1pt,auto,node distance=1.5cm and .4cm,semithick,scale=1]
	\node[state] (a) {};
	\node[right= of a] (dots) {$\dots\dots\dots$};
	\node[state,right= of dots](end) {};
	\node[state,fill=black!20,below=of dots](vi){$v_i$};
	\node[right =of vi](viCPT){\makecell[l]
		{
			$\gamma:v^i_1\succ v^i_2$\\ otherwise$: v^i_2\succ v^i_1$
		}
	};
	\path (a) edge[dashed] (vi)
	(dots) edge[dashed] (vi)
	(end) edge[dashed] (vi);
	\end{tikzpicture}
	
	\caption{A CP-net $N\in \mathcal{C}_{ac}^k$ that, for $|L|\ge2^{n-1-k}$, cannot be distinguished from any CP-net $N'$ that differs from $N$ only in $\CPT(v_i)$.}
	\label{fig:corruptionLimitExample}
\end{figure}

Suppose that, for each $x\in\mathcal{X}_{swap}$, we could effectively compute a small set $\VQ(x)\subseteq\mathcal{X}_{swap}$ such that the limited/malicious oracle would be guaranteed to return the correct label for $x$ on more than half of the queries for elements in $\VQ(x)$. In the case of membership queries to a limited oracle, we could then simulate Algorithms 1 or 2 with the following modification:
\begin{itemize}
\item[LIM] If a query for $x\in\mathcal{X}_{swap}$ made by Algorithm 1 (or 2) is answered with $\perp$, replace this response by the majority vote of the limited oracle's responses to all queries over the set $\VQ(x)$.
\end{itemize}
In the case of learning from malicious oracles, \emph{every}\/ query made by Algorithms 1 or 2 would have to be supplemented by verification queries:
\begin{itemize}
\item[MAL] If a query for $x\in\mathcal{X}_{swap}$ is made by Algorithm 1 (or 2), respond to that query by taking the majority vote over the malicious oracle's responses to all queries over the set $\VQ(x)$.
\end{itemize}
If $q$ is an upper bound on the size of the set $\VQ(x)$, for any $x\in\mathcal{X}_{swap}$, the modified algorithms then would need to ask at most $qz$ membership queries, where $z$ is the number of queries asked by Algorithms 1 or 2.

It remains to find a suitable set $\VQ(x)$ of verification queries for any swap pair $x$, so that, intuitively, 
\begin{itemize}
\item the size of $\VQ(x)$ is not too large, and 
\item it is not too unreasonable an assumption that the corrupted oracle will return the true label for $x$ on the majority of the swap pairs in $\VQ(x)$. 
\end{itemize}
To this end, we introduce some notation.

\begin{definition} Let $x=(x.1,x.2)\in\mathcal{X}_{swap}$ and $1\le t\le n-1$. Then we denote by $F^t(x)$ the set of all swap instances $x'=(x'.1,x'.2)$ with the swapped variable $V(x')=V(x)$ and with a Hamming distance\footnote{The Hamming distance \cite{Hamming50} between two vectors is the number of components in which they disagree.} of exactly $t$ between $x.i$ and $x'.i$ when restricted to $V\setminus\{V(x)\}$, i.e.
\[
F^t(x)=\{x'\in \mathcal{X}_{swap}\mid V(x)=V(x')=v_s\mbox{ and }|\{v\in V\setminus\{v_s\}\mid x[\{v\}]\ne x'[\{v\}]\}|= t\}\,.
\]
\end{definition}

For example, if $x=(abcd,a\bar{b}cd)$ where $D_{v_1}=\{a,\bar{a}\}$, $D_{v_2}=\{b,\bar{b}\}$, $D_{v_3}=\{c,\bar{c}\}$, $D_{v_4}=\{d,\bar{d}\}$, then for $t=1$ we get
\[
F^1(x)=\{(\bar{a}bcd,\bar{a}\bar{b}cd),(ab\bar{c}d,a\bar{b}\bar{c}d),(abc\bar{d},a\bar{b}c\bar{d})\}\,.
\]
Each swap in that set has the same swapped variable as $x$ (namely $v_2$) and has a context that differs from that in $x$ in exactly one variable. 

There is a relationship between the entailment of an instance $x$ and the entailments of the elements of $F^t(x)$ for any $t$: Given a preference table $\CPT(v_i)$, where $v_i$ has $k$ parents, and given $t$, it is not hard to see that 
\begin{itemize}
\item $|F^t(x)|=\binom{n-1}{t}$, and 
\item the set $F^t(x)$ contains $\binom{n-1-k}{t}$ elements with the same entailment w.r.t.\ $V(x)$ as in $x$ itself. These are the instances that share the same values in the parent variables of $V(x)$ and, hence, their entailments have to be identical. 
\end{itemize}
If most of the elements in $F^t(x)$ share the same entailment, they could be used to compensate the oracle corruption. However, queries to elements of $F^t(x)$ again might receive corrupted answers. We will therefore impose a restriction on the overlap between the set $L$ of instances with corrupted responses and any set $F^t(x)$, specifically for $t=1$.

We would like to constrain $L$ so that querying the elements of $F^1(x)$, and then picking the most frequent answer, yields the true classification of instance $x$. 
Figure \ref{fig:f1example} shows an example of an instance $x$ with $V(x)=v_5$, its entailment and the entailments of the elements of $F^1(x)$ assuming $k=1$ and $Pa(v_5)=\{v_1\}$. 

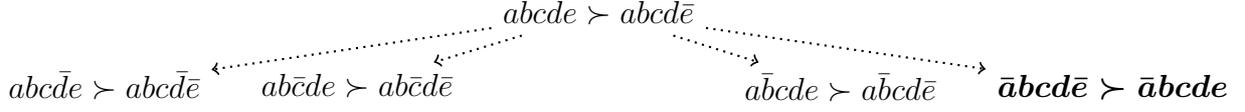
\begin{figure}
	\centering 
	\begin{adjustbox}{max width=\textwidth}
		\begin{tikzpicture}[thick,dotted, node distance=.5cm]
		\node(query){$abcde\succ abcd\bar{e}$};
		\node[below left=of query](cflipped){$ab\bar{c}de\succ ab\bar{c}d\bar{e}$};
		\node[left=of cflipped](dflipped){$abc\bar{d}e\succ abc\bar{d}\bar{e}$};
		\node[below right=of query](bflipped){$a\bar{b}cde\succ a\bar{b}cd\bar{e}$};
		\node[font=\boldmath,right=of bflipped](aflipped){$\bar{a}bcd\bar{e}\succ \bar{a}bcde$};
		\path (query) edge[->](dflipped) edge[->] (cflipped) edge[->](bflipped) edge[->](aflipped);
		\end{tikzpicture}
	\end{adjustbox}
	\caption{An example of the entailments of an instance $x$ and $F^1(x)$ for $x=(abcde,abcd\bar{e})$ with $Pa(v_5)=\{v_1\}$ and $\CPT(v_5)$ is $\{a:e\succ \bar{e},\bar{a}:\bar{e}\succ e\}$.}
	\label{fig:f1example}
\end{figure}

We then obtain the following learnability result for the case that the indegree $k$ is bounded to be sufficiently small.

\begin{theorem}\label{thm:corrupted} Suppose $n>2k+2$. For each swap pair $x\in\mathcal{X}_{swap}$, let the set $\VQ(x)$ of verification queries used by the LIM and MAL strategies be equal to $F^1(x)$.
\begin{enumerate}
\item If $|F^1(x) \cap L|\leq n-2-2k$ for every $x\in\mathcal{X}_{swap}$, then the strategy LIM will learn any complete acyclic $k$-bounded binary CP-net over $n$ variables, when interacting with a limited oracle.
\item If $|F^1(x)\cap L|\le \lfloor \frac{n-1}{2}\rfloor -k-1$ for every $x\in\mathcal{X}_{swap}$, learn any complete acyclic $k$-bounded binary CP-net over $n$ variables, when interacting with a malicious oracle.
\end{enumerate}
\end{theorem}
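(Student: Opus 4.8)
The plan is to reduce both statements to a single counting argument showing that, under the stated bounds on $|F^1(x)\cap L|$, the majority vote over the verification set $\VQ(x)=F^1(x)$ always returns the true label $c^*(x)$. Once this is established, the LIM (resp.\ MAL) strategy simulates Algorithm~1 or Algorithm~2 as if it were interacting with a \emph{perfect}\/ oracle, so correctness and the polynomial query bound follow immediately from Theorem~\ref{thm:boundedPerfect} (and Theorem~\ref{thm:treePerfect} in the tree case), with only a multiplicative factor of $|F^1(x)|=n-1$ in the number of queries.

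First I would recall the two facts noted just before the theorem: $|F^1(x)|=n-1$, and exactly $n-1-k$ of these instances share the same entailment as $x$ (they agree with $x$ on the parent variables of $V(x)$, differing only in a non-parent variable), while the remaining $k$ instances differ from $x$ in a single parent variable and may therefore carry the opposite label. I would call the first group the \emph{agreeing}\/ instances and the second the \emph{possibly-disagreeing}\/ ones, and write $\ell=|F^1(x)\cap L|$ for the number of corrupted instances in $F^1(x)$.

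For the limited oracle, corrupted instances answer $\perp$ and are discarded, so the vote is taken among the remaining labelled answers. The worst case minimises the correct votes and maximises the incorrect ones: place all $\ell$ corruptions among the agreeing instances and assume every possibly-disagreeing instance is uncorrupted and votes against $c^*(x)$. Then the number of correct votes is at least $(n-1-k)-\ell$ and the number of incorrect votes is at most $k$. Using $n>2k+2$ to guarantee $n-1-k\ge\ell$, the hypothesis $\ell\le n-2-2k$ yields $(n-1-k)-\ell\ge k+1>k$, a strict majority for $c^*(x)$. For the malicious oracle the corrupted instances flip rather than abstain, so each corruption moves a vote from the correct to the incorrect side; the same worst case now gives at least $(n-1-k)-\ell$ correct votes against at most $k+\ell$ incorrect ones, so a strict majority requires $(n-1-k)-\ell>k+\ell$, i.e.\ $\ell<\tfrac{n-1}{2}-k$. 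I would then check that the stated bound $\ell\le\lfloor\tfrac{n-1}{2}\rfloor-k-1$ implies this strict inequality for both parities of $n$ (the margin $(n-1-k)-\ell-(k+\ell)$ evaluates to $2$ when $n$ is odd and $3$ when $n$ is even).

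The argument is a routine vote count, so there is no deep obstacle; the delicate points are bookkeeping rather than conceptual. The main thing to get right is that the \emph{binding}\/ adversarial configuration really is ``all corruptions on the agreeing instances, all possibly-disagreeing instances voting wrong,'' which requires $n-1-k\ge\ell$ so that this concentration is feasible (this is exactly where $n>2k+2$ enters), together with the parity check that turns the clean inequality $\ell<\tfrac{n-1}{2}-k$ into the floor form quoted in part~2. A secondary point to verify is that in the limited case the discarded $\perp$ answers never empty the vote, which holds because at least $k+1\ge 2$ correct votes always survive.
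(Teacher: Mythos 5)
Your proof is correct and takes essentially the same route as the paper's: the identical worst-case vote count over $\VQ(x)=F^1(x)$ (all corruptions placed on the $n-1-k$ agreeing instances, all $k$ parent-flipping instances voting against $c^*(x)$), yielding at least $k+1$ correct versus at most $k$ incorrect votes in the limited case and the strict-majority margin in the malicious case, followed by simulating Algorithm~2 with an effectively perfect oracle. One cosmetic remark: the hypothesis $n>2k+2$ is needed only so that the stated bounds on $|F^1(x)\cap L|$ are nonnegative (in particular $\lfloor\frac{n-1}{2}\rfloor-k-1\ge 0$), not to make the adversary's concentration feasible, since $\ell\le n-2-2k\le n-1-k$ holds automatically for all $k\ge 0$.
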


\begin{proof} Note that there are $n-1$ elements in $F^1(x)$, of which at least $n-1-k$ have the same label in the target concept as $x$ and at most $k$ have the opposite label.

First, suppose $|F^1(x) \cap L|\leq n-2-2k$ for every $x\in\mathcal{X}_{swap}$, in the case of a limited oracle. Then, for every $x\in\mathcal{X}_{swap}$, the limited oracle will respond with a label to at least $|F^1(x)|-n+2+2k=2k+1$ of the queries for elements in $F^1(x)$. Since at most $k$ of these elements have the opposite label as $x$, the majority of these queries will return the label for $x$.  Therefore, the strategy LIM will simulate Algorithm~2 with a perfect membership oracle and thus learn any complete acyclic $k$-bounded binary CP-net over $n$ variables.

Second, suppose $|F^1(x)\cap L|\le \lfloor \frac{n-1}{2}\rfloor$ for every $x\in\mathcal{X}_{swap}$, in the case of a malicious oracle. Then, for every $x\in\mathcal{X}_{swap}$, the limited oracle will correctly respond to at least $|F^1(x)|-\lfloor \frac{n-1}{2}\rfloor+k+1= \lceil\frac{n-1}{2}\rceil +k+1$ of the queries for elements in $F^1(x)$. In the worst case, these $\lceil\frac{n-1}{2}\rceil +k+1$ correctly answered queries contain all of the $k$ elements of $F^1(x)$ that have the opposite label of $x$. That means that at least $\lceil\frac{n-1}{2}\rceil +1$ of the $n-1$ queries over $F^1(x)$ (and thus a majority) return the correct label for $x$. Therefore, the strategy MAL will simulate Algorithm~2 with a perfect membership oracle and thus learn any complete acyclic $k$-bounded binary CP-net over $n$ variables.
\end{proof}

We obtain the following corollary.

\begin{corollary} Suppose $n>2k+2$. 
\begin{enumerate}
\item If $|F^1(x) \cap L|\leq n-2-2k$ for every $x\in\mathcal{X}_{swap}$, then the class of all complete acyclic $k$-bounded binary CP-nets over $n$ variables is learnable with membership queries to a limited oracle. 
\item If $|F^1(x)\cap L|\le \lfloor \frac{n-1}{2}\rfloor -k-1$ for every $x\in\mathcal{X}_{swap}$, then the class of all complete acyclic $k$-bounded binary CP-nets over $n$ variables is learnable with membership queries to a malicious oracle. 
\end{enumerate}
In either case, the worst-case number of queries can be upper-bounded by $O(n^2\mathcal{U}_k+e_{N^*}\log_2(n))$, which is in $O(n^22^k\log_2(n)k^{O(\log_2(k))}+e_{N^*}n\log_2(n))$.
\end{corollary}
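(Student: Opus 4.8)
The plan is to derive both parts as direct consequences of Theorem~\ref{thm:corrupted} together with the query count of Algorithm~2, so that essentially no new combinatorial work is required beyond bookkeeping. First I would observe that Theorem~\ref{thm:corrupted} already guarantees that, under the stated hypotheses on $|F^1(x)\cap L|$, the strategies LIM and MAL faithfully simulate Algorithm~2 run against a \emph{perfect}\/ oracle, and hence exactly identify the target CP-net. By the definition of learnability with a limited or malicious oracle, it then suffices to exhibit a query bound that is polynomial in $n$, $size(c^*)$, and $|L|$; this establishes items~1 and~2 once the counting in the next step is done.

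For the query complexity I would reuse the multiplicative-overhead scheme set up just before Theorem~\ref{thm:corrupted}: if $q$ bounds $|\VQ(x)|$ uniformly over $x$, then the modified algorithm asks at most $qz$ queries, where $z$ is the number of queries asked by the underlying perfect-oracle Algorithm~2. Here $\VQ(x)=F^1(x)$, and since $|F^1(x)|=\binom{n-1}{1}=n-1$, we may take $q=n-1\in O(n)$. By Theorem~\ref{thm:boundedPerfect}, $z\in O(n\mathcal{U}_k+e_{N^*}\log_2(n))$. Multiplying yields the worst-case bound
\[
qz\in O\!\left(n^2\mathcal{U}_k+n\,e_{N^*}\log_2(n)\right),
\]
which is polynomial in $n$ and in $e_{N^*}$ (hence in $size(c^*)$) and does not depend on $|L|$ at all, completing the learnability argument.

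Finally, I would make the bound explicit by substituting a known size estimate for $(m,n-1,k)$-universal sets in the binary case. Using the explicit construction of an $(n-1,k)$-universal set of size $2^k\log_2(n-1)\,k^{O(\log_2 k)}$ from \cite{DBLP:journals/tit/SeroussiB88} (or the probabilistic bound $\mathcal{U}_k\in O(k2^k\log_2(n-1))$), the term $n^2\mathcal{U}_k$ becomes $O\!\left(n^2 2^k\log_2(n)\,k^{O(\log_2 k)}\right)$, which together with the edge term gives the stated form $O(n^2 2^k\log_2(n)k^{O(\log_2(k))}+e_{N^*}n\log_2(n))$.

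The only point requiring care---and the closest thing to an obstacle---is verifying that the overhead really is a clean multiplicative factor of $n-1$ in both the LIM and MAL cases. In LIM only the queries answered with $\perp$ trigger verification, so the factor is merely an upper bound; in MAL every original query is replaced by the $n-1$ verification queries over $F^1(x)$, and crucially no verification query is itself recursively verified. Confirming that the verification layer is not nested---so that the count does not exceed $qz$---is the detail I would check explicitly before asserting the final bound.
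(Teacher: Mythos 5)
Your proposal is correct and follows essentially the same route as the paper: learnability is obtained directly from Theorem~\ref{thm:corrupted}, and the query bound comes from multiplying the bound of Theorem~\ref{thm:boundedPerfect} by the $O(n)$ factor arising from the verification set $F^1(x)$ of size $\binom{n-1}{1}=n-1$, with the explicit form obtained by substituting the known size $2^k\log_2(n-1)\,k^{O(\log_2 k)}$ for $\mathcal{U}_k$. Your added check that verification queries are not recursively verified is a sound (if implicit in the paper) observation and does not change the argument.
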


\begin{proof}
Learnability follows from Theorem~\ref{thm:corrupted}. The bound on the number of queries is derived from the bound in Theorem~\ref{thm:boundedPerfect} by multiplying with a factor of $O(n)$ due to the verification query set of size $\binom{n-1}{1}=n-1$.
\end{proof}

In the special case of learning tree CP-nets, i.e., when $k=1$, we obtain the following improved result.

\begin{corollary} Suppose $n>4$. 
\begin{enumerate}
\item If $|F^1(x) \cap L|\leq n-4$ for every $x\in\mathcal{X}_{swap}$, then the class of all complete acyclic binary tree CP-nets over $n$ variables is learnable with membership queries to a limited oracle. 
\item If $|F^1(x)\cap L|\le \lfloor \frac{n-1}{2}\rfloor -2$ for every $x\in\mathcal{X}_{swap}$, then the class of all complete acyclic binary tree CP-nets over $n$ variables is learnable with membership queries to a malicious oracle. 
\end{enumerate}
In particular, in either case, the worst-case number of queries is in $O(n^2+e_{N^*}n\log_2(n))$, where $e_{N^*}$ is the number of edges in the target CP-net $N^*$.
\end{corollary}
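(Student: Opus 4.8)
The plan is to obtain both learnability claims as the special case $k=1$ of Theorem~\ref{thm:corrupted}, and then to specialize the query-count bound accordingly. First I would observe that the hypothesis $n>2k+2$ of Theorem~\ref{thm:corrupted} becomes exactly $n>4$ when $k=1$, matching the hypothesis of the corollary. For the limited-oracle part, substituting $k=1$ into the bound $n-2-2k$ from part~1 of Theorem~\ref{thm:corrupted} gives $n-4$, so the assumption $|F^1(x)\cap L|\le n-4$ is precisely the instance of that theorem's hypothesis; hence the strategy LIM identifies the target. For the malicious-oracle part, substituting $k=1$ into $\lfloor\frac{n-1}{2}\rfloor-k-1$ gives $\lfloor\frac{n-1}{2}\rfloor-2$, so the assumption $|F^1(x)\cap L|\le\lfloor\frac{n-1}{2}\rfloor-2$ is the instance of part~2, and the strategy MAL succeeds. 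In both cases the corrupted responses are defeated by majority vote over the verification set $\VQ(x)=F^1(x)$, exactly as in the theorem.

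For the query bound I would specialize the argument behind the preceding general corollary. In the tree case one may run Algorithm~1 against the (simulated) perfect oracle; by Theorem~\ref{thm:treePerfect} with $m=2$ this consumes $O(n+e_{N^*}\log_2 n)$ queries. Each such query $x$ is answered by the LIM/MAL strategy via a majority vote over $\VQ(x)=F^1(x)$, whose size is $|F^1(x)|=\binom{n-1}{1}=n-1=O(n)$. Multiplying the per-query cost of $O(n)$ verification queries by the $O(n+e_{N^*}\log_2 n)$ queries of Algorithm~1 yields $O\!\big(n(n+e_{N^*}\log_2 n)\big)=O(n^2+e_{N^*}n\log_2 n)$, as claimed. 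Equivalently, one can start from the bound $O(n\mathcal{U}_k+e_{N^*}\log_2 n)$ of Theorem~\ref{thm:boundedPerfect}, multiply by the $O(n)$ verification factor, and use $\mathcal{U}_1=2$ from Lemma~\ref{lem:universal} to collapse $n^2\mathcal{U}_1$ to $O(n^2)$.

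There is no genuinely hard step here: the corollary is a direct specialization, so the only things to get right are the two arithmetic substitutions ($n-2-2k\mapsto n-4$ and $\lfloor\frac{n-1}{2}\rfloor-k-1\mapsto\lfloor\frac{n-1}{2}\rfloor-2$) and the bookkeeping that turns the generic query bound into $O(n^2+e_{N^*}n\log_2 n)$. The one point worth stating explicitly is that the factor of $n$ multiplying $e_{N^*}\log_2 n$ arises from the verification queries and not from Algorithm~1 itself, so the edge term is $e_{N^*}n\log_2 n$ rather than $e_{N^*}\log_2 n$.
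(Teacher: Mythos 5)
Your proposal is correct and follows essentially the same route as the paper: learnability is obtained by substituting $k=1$ into the two hypotheses of Theorem~\ref{thm:corrupted}, and the query bound comes from multiplying the $O(n+e_{N^*}\log_2(n))$ bound of Theorem~\ref{thm:treePerfect} by the $O(n)$ cost of the verification set $F^1(x)$ of size $\binom{n-1}{1}=n-1$. Your explicit arithmetic substitutions and the remark that the extra factor of $n$ on the edge term comes from the verification queries merely spell out what the paper's two-line proof leaves implicit.
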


\begin{proof}
Learnability follows from Theorem~\ref{thm:corrupted}. The bound on the number of queries is derived from the bound in Theorem~\ref{thm:treePerfect} by multiplying with a factor of $O(n)$ due to the verification query set of size $\binom{n-1}{1}=n-1$.
\end{proof}

\section{Conclusion}

We determined exact values or non-trivial bounds on the parameters $\VCdim$ and $\TD$ 
for the classes of all (all complete, resp.) $k$-bounded acyclic CP-nets for any $k$, and used some of the insights gained thereby for the design of algorithms for learning CP-nets from membership queries, both in a setting where membership queries are always answered correctly and in settings where some answers may be missing or incorrect. 
The $\VCdim$ values we determined 
correct a mistake in \cite{Koriche2010685}. Further, we used the calculated $\TD$ values in order to show that our proposed algorithm for learning complete tree CP-nets from membership queries alone is close to optimal, and the calculated $\VCdim$ values in order to refine an optimality assessment of Koriche and Zanuttini's algorithm for learning all $k$-bounded acyclic CP-nets from equivalence and membership queries.

All our results were obtained over the swap instance space. When using membership queries, a restriction of the instance space to swaps actually limits the options of the algorithm; one may want to investigate whether our algorithms can be made more efficient when given access to the full set of outcome pairs. At the same time, lower bounds on the query complexity (as obtained via the teaching dimension) might decrease when opening the choice to non-swaps.

For fixed $k$, the number of membership queries made by our learning algorithms grows moderately with $n$, with an asymptotic worst-case behavior of $O(n\log_2(n))$, (to be multiplied by $n$ in the corrupted setting), 
which is at most a factor of $O(\log_2(n))$ above the best achievable, and suggests practical feasibility. In practice, running our algorithms may become prohibitive when $k$ grows too large---as is also evident in the complexity parameters we calculated for the unbounded case, i.e., when $k=n-1$. Intuitively though, learning preferences that depend on a large number of parent variables is a problem that one might prefer to approach with approximate learning. When the dependency structure between the variables becomes so complex that unbounded CP-nets are needed to model them, one might in practice worry about overfitting the observed preferences---a simpler yet approximate CP-net may in such cases be a more desirable learning target. Knowledge of the number of queries required for exact identification of $\CPT$s and CP-nets will then also be useful in the design of heuristic approaches for approximating a target $\CPT$ or CP-net.


On top of the $\VCdim$ and the $\TD$, we calculated another parameter, $\RTD$, for classes of all complete $k$-bounded acyclic CP-nets. While we did not use our $\RTD$ results for optimality assessments of proposed learning algorithms, they are of interest to learning-theoretic studies, as they exhibit, with the class $\mathcal{C}_{ac}^{n-1}$ of all complete unbounded acyclic CP-nets, the first known non-maximum class as well as the first not intersection-closed class that is interesting from an application point of view and satisfies $\RTD=\VCdim$, cf.\ Appendix\ref{sec:struct}. Thus further studies on the structure of CP-nets may be helpful toward the solution of an open problem concerning the general relationship between $\RTD$ and $\VCdim$ \cite{SimonZ15}.

\section*{Acknowledgements}
The authors would like to express their gratitude to the reviewers, whose detailed comments  helped to improve the quality of this article. 

Malek Mouhoub and Sandra Zilles were supported by the Discovery Grant program of the Natural Sciences and Engineering Research Council of Canada (NSERC). Sandra Zilles was also supported by the NSERC Canada Research Chairs program.

\bibliographystyle{elsarticle-num}
\bibliography{references}
\appendix

\section{Revising a Lower Bound on $\VCdim$ from \cite{Koriche2010685}}\label{sec:app}


Let $\overline{\mathcal{C}}^{2,k,e}_{ac}$ be the class of all binary acyclic $k$-bounded CP-nets with at most $e$ edges, where $0\leq k < n$ and $k\leq e \leq \binom{n}{2}$ (note that this class includes both complete and incomplete CP-nets.) Koriche and Zanuttini~\cite[Theorem 6]{Koriche2010685} gave a lower bound on $\VCdim(\overline{\mathcal{C}}^{2,k,e}_{ac})$ over the swap instance space. In particular, setting $u=\lfloor \frac{e}{k}\rfloor$ and $r=\lfloor\log_2\frac{n-u}{k}\rfloor$, they claimed that  $\VCdim(\overline{\mathcal{C}}^{2,k,e}_{ac})$ is lower-bounded by 
\[
	\LB=\begin{cases}	1\,,&\mbox{if }k=0\,,\\
		u(r+1)\,,&\mbox{if }k=1\,,\\
		u(2^k+k(r-1)-1)\,,&\mbox{if }k>1\,.
		\end{cases}
\]
We claim that this bound is not generally correct for large values of $k$ and $e$.

For any given $k$, it is easy to see that there is a target concept in $\overline{\mathcal{C}}^{2,k,e}_{ac}$ whose graph has $e_{max}=(n-k)k+\binom{k}{2}$ edges. We can always construct such a graph $G$ as indicated in the proof of Lemma~\ref{lem:Mk}: Let $V_1$ and $V_2$ be a partition over the $n$ vertices of $G$ where $|V_1|=n-k$ and $|V_2|=k$. Add an edge from each node in $V_2$ to each node in $V_1$. This results in $(n-k)k$ edges and $G$ is clearly acyclic with indegree exactly $k$ for every element in $V_1$. For the remaining $\binom{k}{2}$ edges, let $<$ be any total order on $V_2$; now the edge $(v,w)$ is added to $G$ if and only if $v<w$. 



Next, we evaluate the lower bound $\LB$ for $k>1$ when the target concept has $e_{max}$ edges:
\begin{flalign*}
&u(2^k+k(r-1)-1)\\
&=\lfloor \frac{e}{k}\rfloor (2^k+k(\lfloor \log_2( \frac{n-\lfloor \frac{e}{k}\rfloor}{k})\rfloor-1)-1)& \text{(setting $e=e_{max}$)}\\
&=\frac{\binom{k}{2}+(n-k)k}{k}(2^k+k(\lfloor \log_2( \frac{n-\frac{\binom{k}{2}+(n-k)k}{k}}{k})\rfloor-1)-1)\\
&=\frac{1}{2}(2n-k-1)(2^k+k(\lfloor \log_2(\frac{k+1}{2k})\rfloor -1)-1) \\
&=\frac{1}{2}(2n-k-1)(2^k-2k-1) 
\end{flalign*}
When $k=n-1$, this term evaluates to $n2^{n-2}-n^2+\frac{n}{2}$ which becomes larger than $2^n-1$ for $n>6$. More generally, given $k=n-c$ for some constant $0<c<n-1$, the term equals

\begin{equation*}
(n+(c-1))2^{n-(c+1)}-n^2-(c-1)n+\frac{(2c-1)(n+(c-1))}{2}
\end{equation*}

\noindent which exceeds $2^n-1$ for small values of $c$ and $n>6$. Theorem~\ref{thm:VCD}, however, states that $2^n-1$ is an upper bound on the VC dimension of $\overline{\mathcal{C}}^{2,k,e}_{ac}$. Consequently, there must be a mistake in the bound $\LB$. It appears that the mistake was caused by Koriche and Zanuttini assuming that there are acyclic CP-nets with $\frac{e}{k}2^k$ statements, which is not true for large values of $k$ and $e$.

\section{Structural Properties of CP-net Classes}
\label{sec:struct}


The computational learning theory literature knows of a number of structural properties under which the VC dimension and the recursive teaching dimension coincide. The purpose of this section is to investigate which of these structural properties apply to certain classes of acyclic CP-nets. The main result is that the class $\mathcal{C}_{ac}^{n-1}$ of all complete unbounded acyclic CP-nets does not satisfy any of the known general structural properties sufficient for $\VCdim$ and $\RTD$ to coincide; therefore this class may serve as an interesting starting point for formulating new general properties of a concept class $\mathcal{C}$ that are sufficient for establishing $\RTD(\mathcal{C})=\VCdim(\mathcal{C})$.

We first define the structural properties to be studied in this section. These basic notions from the computational learning theory literature can, for example, also be found in~\cite{DFSZ14}.

\begin{definition}\label{def:struct} 
Let $\mathcal{C}$ be any finite concept class over a fixed instance space $\mathcal{X}$. Let $d$ denote the VC dimension of $\mathcal{C}$.
\begin{enumerate}
\item The class $\mathcal{C}$ is maximum if $|\mathcal{C}|=\sum_{i=0}^d\binom{|\mathcal{X}|}{i}$, i.e., if the cardinality of $\mathcal{C}$ meets Sauer's bound~\cite{Sau72} with equality. 
\item The class $\mathcal{C}$ is maximal if $\VCdim(\mathcal{C}\cup\{c\})>d$ for any concept $c$ over $\mathcal{X}$ that is not contained in $\mathcal{C}$, i.e., if adding any concept $c\not\in \mathcal{C}$ to the class will increase its VC dimension. 
\item The class $\mathcal{C}$ is said to be an extremal class if $\mathcal{C}$ strongly shatters every set that it shatters.~$\mathcal{C}$ strongly shatters $S\subseteq \mathcal{X}$ if there is a subclass $\mathcal{C'}$ of $\mathcal{C}$ that shatters $S$ such that all concepts in $\mathcal{C'}$ agree on the labeling of all instances in $\mathcal{X}\backslash S$.\footnote{For example, suppose a concept $c$ over an instance space $\{x_1,x_2,x_3,x_4\}$ is represented as the vector $(c(x_1),c(x_2),c(x_3),c(x_4))$. Then the concepts $(0,0,0,0)$, $(0,0,0,1)$, $(0,0,1,0)$, $(1,0,1,1)$ shatter the set $\{x_3,x_4\}$, but they do not strongly shatter it, because they do not all agree on their labelings in the remaining instances (they disagree in $x_1$.) By comparison, $\{(1,0,0,0)$, $(1,0,0,1)$, $(1,0,1,0)$, $(1,0,1,1)\}$ strongly shatters the set $\{x_3,x_4\}$.}
\item The class $\mathcal{C}$ is intersection-closed if $c\cap \bar{c}\in \mathcal{C}$ for any two concepts $c,\bar{c}\in \mathcal{C}$.
\end{enumerate}
\end{definition}

Intuitively, for a given VC dimension and a given instance space size, maximum classes are the largest possible classes in terms of \emph{cardinality}, while maximal classes are largest with respect to \emph{inclusion}. Every maximum class is also maximal but the converse does not hold \cite{Kearns:1994:ICL:200548}. Every maximum class is also an extremal class, but not vice versa \cite{MoranW16}. 

It was proven that any finite maximum class $\mathcal{C}$ that can be \emph{corner-peeled}\/ by the algorithm proposed by Rubinstein and Rubinstein~\cite{RR12} satisfies $\RTD(\mathcal{C})=\VCdim(\mathcal{C})$ \cite{DFSZ14}. The same equality holds when $\mathcal{C}$ is of VC dimension $1$ or when $\mathcal{C}$ is intersection-closed~\cite{DFSZ14}. Therefore, if $\mathcal{C}_{ac}^{n-1}$ were to fulfill any of these structural properties, it would not be surprising that the recursive teaching dimension and the VC dimension coincide for $\mathcal{C}_{ac}^{n-1}$. In this section, we will demonstrate that none of these structural properties are fulfilled for any of the classes $\mathcal{C}_{ac}^k$ studied in our manuscript, with the exception of some trivial special cases.

The main results of this section are summarized in Table \ref{tble:structProperties}. For the proofs of some of these results, the following definition and lemma will be useful.

\begin{table}
	\centering
	\caption{Structural properties of CP-net concept classes.}
	\begin{tabular}{ l || c | c | c | c }
		class	& maximum & maximal & intersection-closed & extremal\\
		\hline
		$\mathcal{C}_{ac}^0$ with $m=2$ (over $\mathcal{X}_{sep}$)& yes& yes &	yes & yes\\
		\hline 
		$\mathcal{C}_{ac}^0$ with $m>2$ (over $\mathcal{X}_{sep}$ or $\mathcal{X}_{swap}$)& no & no &no& no\\
		\hline 
		$\mathcal{C}_{ac}^k$ with $m\ge 2$, $1\le k\le n-1$& no & no & no & no \\
		\hline 
	\end{tabular}
	
	\label{tble:structProperties}
\end{table}

\begin{definition}
Let $c$ be any concept over an instance space $\mathcal{X}$. Then the complement of $c$, denoted by $c^{co}$, is the concept over $\mathcal{X}$ that contains exactly those instances not contained in $c$, i.e., 
\[c^{co}(x)=1-c(x)\mbox{ for all }x\in\mathcal{X}\,.\]
\end{definition}

\begin{lemma}\label{lem:complement}
Let $n\ge 1$, $m\ge 2$, and $k\in\{0,\ldots,n-1\}$. If $c$ belongs to the class $\mathcal{C}^k_{ac}$ of all complete acyclic $k$-bounded CP-nets with $n$ variables of domain size $m$, over the instance space $\mathcal{X}_{swap}$, then also the complement $c^{co}$ of $c$ belongs to $\mathcal{C}^k_{ac}$.
\end{lemma}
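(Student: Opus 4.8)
The plan is to give a direct construction: from the CP-net $N$ that represents $c$, build a new CP-net $N'$ by reversing the preference order in \emph{every} conditional preference statement, and then argue that $N'$ represents exactly the complement $c^{co}$ while still being a complete acyclic $k$-bounded CP-net in minimal form.

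First I would fix the CP-net $N=(V,E)$ representing $c$ (cf.\ Definition~\ref{def:cpnets}) and define $N'$ to have the same vertex set $V$ and the same parent sets $Pa(v_i)$ for every variable, hence the same edge set $E$. For each variable $v_i$ and each context $\gamma\in\mathcal{O}_{Pa(v_i)}$, if $\CPT(v_i)$ in $N$ specifies the total order $\succ^{v_i}_\gamma\,=\,a_1\succ a_2\succ\cdots\succ a_m$ on $D_{v_i}$, then $\CPT(v_i)$ in $N'$ specifies the reversed order $a_m\succ a_{m-1}\succ\cdots\succ a_1$. Since $N$ is complete, every context over every parent set receives a total order, so the same holds for $N'$; thus $N'$ is complete in the sense of Definition~\ref{def:complete}. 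Because $N'$ shares the graph of $N$, it is acyclic and respects the same indegree bound $k$.

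Next I would verify that $N'$ represents $c^{co}$ by invoking the swap semantics. For any swap $x=(o,o')\in\mathcal{X}_{swap}$ with swapped variable $v_i$ and common parent context $\gamma=o[Pa(v_i)]=o'[Pa(v_i)]$, recall that $c_N(x)=1$ iff $o[v_i]\succ^{v_i}_\gamma o'[v_i]$ holds in $N$. Reversing the order in $N'$ flips precisely this pairwise comparison: the two domain values $o[v_i]$ and $o'[v_i]$ are distinct (as $x$ is a swap), and a total order compares every distinct pair, so $o[v_i]$ precedes $o'[v_i]$ in $\succ^{v_i}_\gamma$ in $N$ if and only if it follows it in the reversed order used by $N'$. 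Hence $c_{N'}(x)=1$ iff $o'[v_i]\succ^{v_i}_\gamma o[v_i]$ in $N$, i.e.\ iff $c_N(x)=0$, giving $c_{N'}(x)=1-c_N(x)=c^{co}(x)$ for every $x\in\mathcal{X}_{swap}$. Note this argument is independent of which of $(o,o')$ and $(o',o)$ is the representative included in $\mathcal{X}_{swap}$.

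The only point requiring care, and thus the main (though mild) obstacle, is the minimal-form requirement that $N'$ contain no dummy parents. I would argue this is automatic: $v_j\in Pa(v_i)$ in $N$ means there exist two contexts over $Pa(v_i)$ differing only in $v_j$ that induce distinct total orders on $D_{v_i}$; since order reversal is a bijection on the set of total orders over $D_{v_i}$, these two contexts still induce distinct (reversed) orders in $N'$, so $v_j$ remains a genuine parent. Conversely, no new dependency can appear, because the orders attached to any two contexts of $N'$ are distinct exactly when the corresponding orders in $N$ are. Therefore $N'$ has exactly the parent sets of $N$, is in minimal form, lies in $\mathcal{C}^k_{ac}$, and represents $c^{co}$, which completes the proof.
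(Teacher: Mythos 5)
Your proposal is correct and follows essentially the same route as the paper's proof: the paper likewise constructs $N^{co}$ by reversing every preference statement in every $\CPT$ of $N$ and observes that all parent sets (hence the graph, acyclicity, and the indegree bound $k$) are preserved. Your additional verifications—that the reversed network labels every swap in $\mathcal{X}_{swap}$ oppositely to $c$, and that order reversal, being a bijection on total orders, creates no dummy parents—are details the paper leaves implicit ("obviously"), so your write-up is simply a more explicit version of the same argument.
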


\begin{proof}
The CP-net $N^{co}$ corresponding to $c^{co}$ is obtained from the CP-net $N$ corresponding to $c$ by reversing each preference statement in each CPT of $N$. Obviously, in $N^{co}$, each variable has the same parent set as in $N$, so that $c^{co}\in\mathcal{C}^k_{ac}$.
\end{proof}

First of all, we discuss separable CP-nets. Note that there is a one-to-one correspondence between separable CP-nets and $n$-tuples of total orders of $\{1,\ldots,m\}$: since there are no dependencies between the variables, each separable CP-net simply determines an order over the $m$ domain values of a variable, and it does so for each variable independently. This way, for a separable CP-net, the swap example $(v^1_{i_1}v^2_{i_2}\ldots v^{l-1}_{i_{l-1}}\alpha v^{l+1}_{i_{l+1}}\ldots v^{n}_{i_n},v^1_{i_1}v^2_{i_2}\ldots v^{l-1}_{i_{l-1}}\beta v^{l+1}_{i_{l+1}}\ldots v^{n}_{i_n})$ will always be labeled exactly the same way as \emph{any}\/ other swap example $x=(x.1,x.2)$ whose swapped variable is $v_l$ and for which the $l$th positions of $x.1$ and $x.2$ are $\alpha$ and $\beta$, respectively. 
We may therefore consider separable CP-nets over an instance space that is a proper subset of the set $\mathcal{X}_{swap}$ of all swaps without ``redundancies'', namely one that contains exactly one swap for each two domain values of each variable. Assuming a fixed choice of such pairs, we denote this subset of $\mathcal{X}_{swap}$ by $\mathcal{X}_{sep}$ and remark that $|\mathcal{X}_{sep}|=\binom{m}{2}n$. Note that, for the class of separable CP-nets, each instance $x\in\mathcal{X}_{swap}\setminus\mathcal{X}_{sep}$ is redundant in the following sense: there exists some $x' \in\mathcal{X}_{sep}$ such that
\begin{itemize}
\item either $c(x)=c(x')$ for all $c\in\mathcal{C}^0_{ac}$,
\item or $c(x)=1-c(x')$ for all $c\in\mathcal{C}^0_{ac}$.
\end{itemize}

It is now easy to see the following for the binary case.

\begin{proposition}
Let $n\ge 1$, $m=2$. Over the instance space $\mathcal{X}_{sep}$, the concept class $\mathcal{C}^0_{ac}$ of all complete binary separable CP-nets with $n$ variables is maximum (in particular, also maximal and extremal) and intersection-closed. 
\end{proposition}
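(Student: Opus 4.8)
The plan is to reduce the entire statement to one structural observation: over $\mathcal{X}_{sep}$, the class $\mathcal{C}^0_{ac}$ is nothing but the \emph{full} power set of an $n$-element instance space. To see this, recall that for $k=0$ a separable CP-net imposes, independently for each variable $v_i$, one of the total orders over $D_{v_i}$, with no conditioning on other variables. When $m=2$ there are exactly two such orders per variable, namely $v^i_1\succ v^i_2$ and $v^i_2\succ v^i_1$. Since $\mathcal{X}_{sep}$ retains exactly one swap instance per variable (here $|\mathcal{X}_{sep}|=\binom{m}{2}n=n$), the label a separable CP-net assigns to the instance for $v_i$ records precisely which of the two orders was chosen for $v_i$. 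As the choices across the $n$ variables are mutually independent, every one of the $2^n$ labelings of $\mathcal{X}_{sep}$ is realized by some CP-net in $\mathcal{C}^0_{ac}$, and distinct CP-nets give distinct labelings. Hence, as a concept class over $\mathcal{X}_{sep}$, we have $\mathcal{C}^0_{ac}=2^{\mathcal{X}_{sep}}$ with $|\mathcal{X}_{sep}|=n$ and $|\mathcal{C}^0_{ac}|=2^n$.

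Once this identification is in place, all four properties follow immediately. First I would note that the full power set shatters all of $\mathcal{X}_{sep}$, so $\VCdim(\mathcal{C}^0_{ac})=n$, in agreement with Theorem~\ref{thm:VCDac}(2). For the \emph{maximum} property I would simply verify Sauer's bound with equality: with $d=n$ and $|\mathcal{X}_{sep}|=n$, the bound is $\sum_{i=0}^{n}\binom{n}{i}=2^n=|\mathcal{C}^0_{ac}|$. The \emph{maximal} and \emph{extremal} properties then require no separate argument, since the text already records (just after Definition~\ref{def:struct}) that every maximum class is maximal and every maximum class is extremal; alternatively, maximality is vacuous because there is no concept over $\mathcal{X}_{sep}$ outside $\mathcal{C}^0_{ac}$ that could be added.

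For \emph{intersection-closedness} I would argue directly from the power-set description: for any $c,\bar c\in\mathcal{C}^0_{ac}$, the set $c\cap\bar c$ is again a subset of $\mathcal{X}_{sep}$ and therefore lies in $2^{\mathcal{X}_{sep}}=\mathcal{C}^0_{ac}$. If one preferred a self-contained verification of extremality over citing the implication, I would observe that for any $S\subseteq\mathcal{X}_{sep}$ one may fix the all-zero labeling on $\mathcal{X}_{sep}\setminus S$ and take all $2^{|S|}$ extensions to $S$; these all belong to the power set, shatter $S$, and agree off $S$, which witnesses strong shattering.

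The only real content is the reduction in the first paragraph, and it is the crux rather than an obstacle: it rests entirely on the independence of the per-variable orders when $k=0$ and on the fact that, for $m=2$, the single swap instance kept in $\mathcal{X}_{sep}$ per variable carries exactly one bit of information, namely the chosen order. The preceding discussion of redundant instances in $\mathcal{X}_{swap}\setminus\mathcal{X}_{sep}$ already justifies passing to $\mathcal{X}_{sep}$ without loss, so no delicate estimation is needed anywhere.
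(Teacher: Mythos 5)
Your proof is correct and takes essentially the same route as the paper: both arguments reduce everything to the single observation that, over $\mathcal{X}_{sep}$ with $m=2$ and $k=0$, the class $\mathcal{C}^0_{ac}$ is the \emph{full} power set of an $n$-element instance space, from which maximum (and hence maximal and extremal) and intersection-closed all follow at once. The only cosmetic difference is that the paper obtains this identification by citing $\VCdim(\mathcal{C}^0_{ac})=(m-1)n=n=\binom{m}{2}n=|\mathcal{X}_{sep}|$ from Theorem~\ref{thm:VCDac}, whereas you re-derive it directly from the independence of the per-variable orders (one bit per variable); your additional explicit verifications of Sauer's bound, intersection-closedness, and strong shattering are sound but not needed once the power-set identification is in place.
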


\begin{proof} Note that we are considering the case $m=2$ and $k=0$.
Given the instance space $\mathcal{X}_{sep}$, the claim is immediate from the fact that $\VCdim(\mathcal{C}^0_{ac})=(m-1)n=n=\binom{m}{2}n=|\mathcal{X}_{sep}|$, which means that $\mathcal{C}^0_{ac}$ is the class of \emph{all}\/ possible concepts over $\mathcal{X}_{sep}$.
\end{proof}

In the non-binary case, we will see below that the situation is different. We start by showing that the class of complete separable CP-nets is not intersection-closed in the non-binary case.

Up to now, a subtlety in the definition of intersection-closedness has been ignored in our discussions. This is best explained using a very simple example. Consider a concept class $\mathcal{C}$ over $\mathcal{X}=\{x_1,x_2\}$ that contains the concepts $\{x_1\}$, $\{x_2\}$, and the empty concept. Obviously, $\mathcal{C}$ is intersection-closed. From a purely learning-theoretic point of view, and certainly for the calculation of any of the information complexity parameters studied above, $\mathcal{C}$ is equivalent to the class $\mathcal{C}'=\{\{x_2\},\{x_1\},\{x_1,x_2\}\}$ that results from $\mathcal{C}$ simply when flipping all labels. This class is no longer intersection-closed, as it does not contain the intersection of $\{x_2\}$ and $\{x_1\}$. Likewise, any two concept classes $\mathcal{C}$ and $\mathcal{C'}$ over some instance space $\mathcal{X}$ are equivalent if one is obtained from the other by ``inverting'' any of its instances, i.e., by selecting any subset $X\subseteq\mathcal{X}$ and replacing $c(x)$ by $1-c(x)$ for all $c\in\mathcal{C}$ and all $x\in X$. 

When defining the instance space $\mathcal{X}_{swap}$, we did not impose any requirements, for any swap pair $(o,o')$, as to whether $(o,o')$ or $(o',o)$ should be included in $\mathcal{X}_{swap}$. So, in fact, $\mathcal{X}_{swap}$ could be any of a whole class of instance spaces, all of which are equivalent for the purposes of calculating the information complexity parameters we studied. Thus, to show that $\mathcal{C}_{ac}^0$ is not intersection-closed, we have to consider all possible combinations in which the outcome pairs in $\mathcal{X}_{swap}$ could be arranged. In the proof of Proposition~\ref{prop:intersectionsep} this requires a distinction of only two cases, while more cases need to be considered when proving that $\mathcal{C}_{ac}^k$ is not intersection-closed for $k>0$ (see Proposition~\ref{prop:intersection} below.)

\begin{proposition}\label{prop:intersectionsep}
Let $n\ge1$, $m>2$. Then the class $\mathcal{C}_{ac}^0$ of all complete separable CP-nets with $n$-variables of domain size $m$ is not intersection-closed (neither over the instance space $\mathcal{X}_{sep}$ nor over the instance space $\mathcal{X}_{swap}$).
\end{proposition}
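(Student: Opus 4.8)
The plan is to exhibit, for each possible local ``shape'' of the instance space, two separable CP-nets $c$ and $\bar c$ whose set-theoretic intersection $c\cap\bar c$ fails to be a total order on some triple of domain values, and hence (by completeness) cannot be represented by any separable CP-net in $\mathcal{C}_{ac}^0$; this directly contradicts intersection-closedness as given in Definition~\ref{def:struct}.

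First I would reduce the problem to a single variable and three of its domain values. Since a separable CP-net assigns an independent total order to each variable, and the label of any swap depends only on its swapped variable and the two swapped values (and, over $\mathcal{X}_{swap}$, is independent of the context), I can let $c$ and $\bar c$ agree on the orders of all variables except one, say $v_i$, and even agree on $v_i$ outside a chosen triple $\{a,b,c\}\subseteq D_{v_i}$: I place $\{a,b,c\}$ as a common block of three positions in both orders and order the remaining $m-3$ values identically around it. Then $c\cap\bar c$ coincides with a genuine total order on every pair of values except possibly those inside $\{a,b,c\}$, so it suffices to force a cyclic (non-transitive) pattern on the three instances comparing $a$, $b$, and $c$.

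Next I would classify the instance space locally. The three swap instances comparing $\{a,b\}$, $\{a,c\}$, $\{b,c\}$ each carry a fixed orientation (namely which value sits in the first outcome), and these orientations define a tournament on $\{a,b,c\}$. Up to renaming the three values, every tournament on three vertices is either transitive or a $3$-cycle; this is the promised distinction into only two cases. In each case I would list the six labelings of the three instances that arise from the six total orders of $\{a,b,c\}$, observe that exactly the two cyclic labelings are absent, and then pick two of the six valid total orders whose componentwise intersection equals one of the two missing cyclic labelings. Concretely, for the transitive orientation the orders $b\succ a\succ c$ and $a\succ c\succ b$ intersect in the forbidden $(0,1,0)$ labeling, while for the cyclic orientation the orders $a\succ c\succ b$ and $b\succ a\succ c$ intersect in the forbidden all-$0$ labeling; these small three-coordinate computations I would verify by hand. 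Extending each pair of orders to total orders on all of $D_{v_i}$ as described above yields $c,\bar c\in\mathcal{C}_{ac}^0$ with $c\cap\bar c\notin\mathcal{C}_{ac}^0$.

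The same two concepts work verbatim over $\mathcal{X}_{swap}$: because separable concepts label every swap independently of its context, one uses only a single representative instance per value pair, and the cyclic obstruction on the chosen triple is unaffected by the additional context-varying instances (each of which is labeled identically to its representative or to its reverse). The main obstacle here is not conceptual but bookkeeping: the counterexample must be robust against the \emph{arbitrary} orientation built into $\mathcal{X}_{sep}$ (resp.\ $\mathcal{X}_{swap}$), which is precisely what the tournament classification into the transitive and cyclic cases handles. Once that case split is in place, each case is closed by the direct three-coordinate intersection computation.
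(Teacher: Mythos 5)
Your proposal is correct and follows essentially the same route as the paper's proof: both reduce to three values of a single swapped variable, split into exactly two cases according to the orientation the instance space imposes on the three pairs (your tournament classification of transitive vs.\ $3$-cycle is precisely the paper's Case~1/Case~2 after renaming values), and exhibit two total orders whose intersected labeling is cyclic and hence realizable by no separable CP-net --- in the transitive case you even choose the same pair of orders ($b\succ a\succ c$ and $a\succ c\succ b$) as the paper. Your explicit block-extension to the remaining $m-3$ domain values and the observation that separable concepts label all context-variants of a pair identically over $\mathcal{X}_{swap}$ are harmless extra bookkeeping that the paper leaves implicit.
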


\begin{proof}
Let $v\in V$ be any variable. Since $m>2$, the domain of $v$ contains three pairwise distinct values $a_1$, $a_2$, and $a_3$ such that the instance space ($\mathcal{X}_{sep}$ or $\mathcal{X}_{swap}$) contains swap examples $x_1$, $x_2$, and $x_3$, each with the swapped variable $v$, and one of the following two cases holds:
\begin{itemize}
\item Case 1.  The projections $x_i[v]$ of the swap pairs $x_i$ to the swapped variable $v$ are
\[
x_1[v]=(a_1,a_2),\ x_2[v]=(a_1,a_3),\ x_3[v]=(a_2,a_3)\,.
\]
\item Case 2.  The projections $x_i[v]$ of the swap pairs $x_i$ to the swapped variable $v$ are
\[
x_1[v]=(a_1,a_2),\ x_2[v]=(a_3,a_1),\ x_3[v]=(a_2,a_3)\,.
\]
\end{itemize}
It remains to show that, in either case, we can find two separable CP-nets whose intersection (as concepts over the given instance space) is not a separable CP-net. 

In Case 1, let $c_1$ be a CP-net entailing $a_1\succ a_2$, $a_1\succ a_3$, and $a_3\succ a_2$, while $c_2$ entails $a_2\succ a_1$, $a_1\succ a_3$, and $a_2\succ a_3$. Both these sets of entailments can be realized by separable CP-nets. Both $c_1$ and $c_2$ label $x_2$ with 1 (as they both prefer $a_1$ over $a_3$), but they disagree in their labels for $x_1$ and $x_3$. The intersection of $c_1$ and $c_2$ thus labels $x_2$ with 1, while it labels both $x_1$ and $x_3$ with 0. This corresponds to a preference relation in which $a_2$ is preferred over $a_1$, then $a_1$ is preferred over $a_3$, but $a_3$ is preferred over $a_2$. This cycle cannot be realized by a separable CP-net, i.e., $c_1,c_2\in \mathcal{C}_{ac}^0$ while $c_1\cap c_2\notin \mathcal{C}_{ac}^0$.

In Case 2, let $c_1$ be a CP-net entailing $a_1\succ a_2$, $a_3\succ a_1$, and $a_3\succ a_2$, while $c_2$ entails $a_2\succ a_1$, $a_1\succ a_3$, and $a_2\succ a_3$. Both these sets of entailments can be realized by separable CP-nets. The concepts $c_1$ and $c_2$ disagree in their labels for all of $x_1$, $x_2$, and $x_3$. The intersection of $c_1$ and $c_2$ thus labels all of $x_1$, $x_2$, and $x_3$ with 0. This corresponds to a preference relation in which $a_2$ is preferred over $a_1$, then $a_1$ is preferred over $a_3$, but $a_3$ is preferred over $a_2$. This cycle cannot be realized by a separable CP-net, i.e., $c_1,c_2\in \mathcal{C}_{ac}^0$ while $c_1\cap c_2\notin \mathcal{C}_{ac}^0$.
\end{proof}

Proposition~\ref{prop:intersectionsep} states that the class of non-binary complete separable CP-nets is not intersection-closed, and that this result holds true even when restricting the instance space to the set $\mathcal{X}_{sep}$. It turns out that the same class is not maximal or extremal, either, over the same choices for instance spaces. Since maximum classes are always extremal, this also implies that the class of non-binary complete separable CP-nets is not maximum. The proofs of these claims rely on Lemma~\ref{lem:complement} and establish the same claims for the class $\mathcal{C}^k_{ac}$ for any $k\in\{1,\ldots,n-1\}$ and any $m\ge 2$. Since for $k>0$, the set $\mathcal{X}_{swap}$ is the more reasonable instance space, in the remainder of this section we always assume that a concept class is given over $\mathcal{X}_{swap}$. For the class of separable CP-nets though, every proof we provide will go through without modification when $\mathcal{X}_{swap}$ is replaced by$\mathcal{X}_{sep}$.

First, we show that maximality no longer holds for the class of separable CP-nets, when $m>2$, and neither for $\mathcal{C}_{ac}^k$, when $k>0$ and $m\ge 2$.

\begin{proposition}\label{prop:nonmaximal}
Let $n\ge1$, $m\ge 2$, and $0\le k\le{n-1}$, where $(m,k)\ne(2,0)$. Then the concept class $\mathcal{C}_{ac}^k$ of all complete $k$-bounded acyclic CP-nets over $n$ variables of domain size $m$, over the instance space $\mathcal{X}_{swap}$, is not maximal, and, in particular, $\mathcal{C}_{ac}^k$ is not maximum.
\end{proposition}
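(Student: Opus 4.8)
The plan is to isolate a single general lemma about complement-closed classes and then simply feed in Lemma~\ref{lem:complement}. The lemma I would prove is: \emph{if a finite concept class $\mathcal{C}$ over $\mathcal{X}$ is closed under complementation and $\mathcal{C}\neq 2^{\mathcal{X}}$, then $\VCdim(\mathcal{C}\cup\{c\})=\VCdim(\mathcal{C})$ for every concept $c$, and hence $\mathcal{C}$ is not maximal.} This is attractive because complementation closure is exactly the property established for $\mathcal{C}^k_{ac}$ in Lemma~\ref{lem:complement}, so essentially all the work is in the abstract lemma. Recall that adding one concept $c$ can enlarge the VC dimension from $d$ to $d+1$ \emph{only} if some $Y$ with $|Y|=d+1$ satisfies $\mathcal{C}|_Y=2^{Y}\setminus\{c|_Y\}$ (the projection of $\mathcal{C}$ onto $Y$ realizes every labeling of $Y$ except the one contributed by $c$); so it suffices to show no such $Y$ exists.

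To prove the lemma I would argue that, for a complement-closed $\mathcal{C}$, \emph{every} set $Y$ with $|Y|=\VCdim(\mathcal{C})+1$ fails to realize at least \emph{two} labelings. Since $Y$ is not shattered, at least one labeling $\ell$ of $Y$ is missing from $\mathcal{C}|_Y$. If $\ell$ were the only missing labeling, then its pointwise complement $\overline{\ell}$ (which differs from $\ell$ on every element of the nonempty set $Y$) would be realized by some $c'\in\mathcal{C}$; but then $c'^{co}\in\mathcal{C}$ by complementation closure, and $c'^{co}|_Y=\overline{c'|_Y}=\ell$, so $\ell\in\mathcal{C}|_Y$, a contradiction. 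Hence at least two labelings of $Y$ are missing, so adding a single concept fills at most one of them and cannot shatter $Y$. As $\mathcal{C}$ is a proper subclass of $2^{\mathcal{X}}$, there is a non-member available to add, which establishes non-maximality.

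With the lemma in hand, the remaining step is purely bookkeeping to confirm properness ($\VCdim<|\mathcal{X}|$) in each covered case, and to see that the unique exception coincides with $(m,k)=(2,0)$. For $1\le k\le n-1$ over $\mathcal{X}_{swap}$: $\mathcal{C}^k_{ac}$ consists of consistent CP-nets, so Lemma~\ref{lem:booth} gives $\VCdim(\mathcal{C}^k_{ac})\le m^n-1$, while $|\mathcal{X}_{swap}|=nm^{n-1}\binom{m}{2}=\tfrac12 n(m-1)m^n\ge m^n$ because $k\ge 1$ forces $n\ge 2$ and hence $n(m-1)\ge 2$; thus $\VCdim<|\mathcal{X}_{swap}|$ and the class is proper. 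For $k=0$ with $m>2$ I would work over $\mathcal{X}_{sep}$ (where the proof of Lemma~\ref{lem:complement} goes through verbatim): here $\VCdim(\mathcal{C}^0_{ac})=(m-1)n$ by Theorem~\ref{thm:VCDac}, whereas $|\mathcal{X}_{sep}|=\binom{m}{2}n$, and $m-1<\binom{m}{2}$ precisely when $m>2$. In both regimes $\mathcal{C}^k_{ac}$ is complement-closed and proper, so the lemma yields non-maximality; and since every maximum class is maximal, non-maximality immediately gives non-maximum. The only situation in which properness fails is $(m,k)=(2,0)$ over $\mathcal{X}_{sep}$, where $\mathcal{C}^0_{ac}$ is literally all of $2^{\mathcal{X}_{sep}}$ and is maximal — exactly the excluded case.

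The main obstacle, and the one step I would write most carefully, is the ``at least two missing labelings'' argument, since it is the sole place where complementation closure does real work and where an off-by-one (the case of a single missing labeling) must be ruled out via the $c'^{co}$ construction. The secondary subtlety is the instance-space bookkeeping: I must be explicit that for $k=0$ the relevant space is $\mathcal{X}_{sep}$ (so that the excluded $(2,0)$ case genuinely becomes the full power set), and that Lemma~\ref{lem:booth} is legitimately applicable because acyclic CP-nets are consistent. Everything else reduces to the two elementary size comparisons above.
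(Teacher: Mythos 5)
Your proposal is correct and follows essentially the same route as the paper: the published proof likewise exploits the complement-closure of $\mathcal{C}^k_{ac}$ (Lemma~\ref{lem:complement}) to show that on any set of size $\VCdim(\mathcal{C}^k_{ac})+1$ a missing labeling and its pointwise complement are both unrealized, so at least two concepts would have to be added to shatter it; your extraction of this into an abstract lemma about complement-closed classes, plus the explicit properness checks, is just a cleaner packaging of what the paper asserts in a single line (``such a set $X$ exists, since $\mathcal{X}_{swap}$ is not shattered''). One small adjustment: since the proposition is stated over $\mathcal{X}_{swap}$, your $k=0$, $m>2$ case should be verified over $\mathcal{X}_{swap}$ rather than $\mathcal{X}_{sep}$ --- which is immediate, as $(m-1)n<\binom{m}{2}nm^{n-1}$ --- so the detour through $\mathcal{X}_{sep}$ is unnecessary for the statement itself, although it does correctly explain why the excluded case $(m,k)=(2,0)$ is genuinely maximal there.
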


\begin{proof} We need to prove that there exists some concept $c$ over $\mathcal{X}_{swap}$ (not necessarily corresponding to a consistent CP-net) such that $\VCdim(\mathcal{C}_{ac}^k\cup\{c\})=\VCdim(\mathcal{C}_{ac}^k)$. We will prove an even stronger statement, namely: for every subset $X\subseteq\mathcal{X}_{swap}$ with $|X|=\VCdim(\mathcal{C}_{ac}^k)+1$ and every set $\mathcal{C}$ of concepts such that $\mathcal{C}_{ac}^k\cup\mathcal{C}$ shatters $X$, we have $|\mathcal{C}|\ge 2$.

Let $X\subseteq\mathcal{X}_{swap}$ with $|X|=\VCdim(\mathcal{C}_{ac}^k)+1$. Such a set $X$ exists, since $\mathcal{X}_{swap}$ is not shattered by $\mathcal{C}^k_{ac}$. Moreover, let $\vec{x}=(x^1,\ldots,x^{|X|})$ be any fixed sequence of all and only the elements in $X$, without repetitions. Since $X$ is not shattered by $\mathcal{C}_{ac}^k$, there is an assignment $(l_1,\ldots,l_{|X|})\in\{0,1\}^{|X|}$ of binary values to $\vec{x}$ that is not realized by $\mathcal{C}^k_{ac}$, i.e., there is no concept $c\in\mathcal{C}^k_{ac}$ such that $c(x^i)=l_i$ for all $i$. Since no concept in $\mathcal{C}_{ac}^k$ realizes the assignment $(l_1,\ldots,l_{|X|})$ on $\vec{x}$, by Lemma~\ref{lem:complement}, no concept in $\mathcal{C}_{ac}^k$ realizes the assignment $(1-l_1,\ldots,1-l_{|X|})$ on $\vec{x}$. Thus, to shatter $X$, one would need to add at least two concepts to $\mathcal{C}_{ac}^k$.
\end{proof}

Second, the following proposition establishes that, under the same conditions as in Proposition~\ref{prop:nonmaximal}, the class $\mathcal{C}_{ac}^k$ is not extremal.

\begin{proposition}
Let $n\ge 1$, $m\ge 2$, and $0\le k\le{n-1}$, where $(m,k)\ne(2,0)$. Then the concept class $\mathcal{C}_{ac}^k$ of all complete $k$-bounded acyclic CP-nets over $n$ variables of domain size $m$, over the instance space $\mathcal{X}_{swap}$, is not extremal.
\end{proposition}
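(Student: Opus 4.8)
The plan is to exhibit, in each relevant case, a concrete set $S\subseteq\mathcal{X}_{swap}$ that $\mathcal{C}_{ac}^k$ shatters but does not strongly shatter; by Definition~\ref{def:struct}(3) this immediately shows that $\mathcal{C}_{ac}^k$ is not extremal. Note that non-maximality does not by itself give non-extremality (a maximum class is always extremal, but extremal classes need not be maximum), so Proposition~\ref{prop:nonmaximal} does not suffice and a direct witness is required. The recurring idea is to find a small set $S$ together with one extra instance $x^\ast\notin S$ whose label is a fixed function of the labels on $S$ \emph{whenever} a certain pattern occurs on $S$, yet in such a way that $S$ can still be shattered. Since strong shattering demands a subfamily realizing all labelings of $S$ while agreeing on $x^\ast$, two required labelings of $S$ that force $x^\ast$ to opposite values will rule out strong shattering.

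Because $(m,k)\ne(2,0)$, I would split into two exhaustive cases. For $m\ge 3$ (and any $k$), I would use a transitivity obstruction inside a single variable, the mechanism behind Lemma~\ref{lem:VCDfixedcontext}: fix a variable $v$ and a single full context $\gamma^\ast$ on $V\setminus\{v\}$, choose three domain values $a_1,a_2,a_3$ of $v$, and take the three swaps over $v$ with context $\gamma^\ast$ comparing $a_1$ with $a_2$, $a_2$ with $a_3$, and $a_1$ with $a_3$; call them $x_1,x_2,x_3$. Setting $S=\{x_1,x_2\}$ and $x^\ast=x_3$, all four labelings of $S$ are realizable by suitable total orders on $\{a_1,a_2,a_3\}$ (with all other variables made unconditional, so the witnessing CP-nets lie in $\mathcal{C}_{ac}^k$), so $S$ is shattered; but the labeling forcing $a_1\succ a_2$ and $a_2\succ a_3$ forces $a_1\succ a_3$, while the opposite labeling forces $a_3\succ a_1$, so these two labelings pin $x^\ast$ to opposite values, and no subfamily with constant $x^\ast$ shatters $S$.

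For $m=2$ (so that $(m,k)\ne(2,0)$ forces $k\ge 1$), the single-variable transitivity trick is unavailable, and I would instead exploit acyclicity \emph{between} two variables. Pick distinct variables $v_a,v_b$, fix a context on all remaining variables, take $x_1,x_2$ to be swaps over $v_a$ that differ only in the value of $v_b$, and $x_3,x_4$ to be swaps over $v_b$ that differ only in the value of $v_a$; set $S=\{x_1,x_2,x_3\}$ and $x^\ast=x_4$. The key point is that $x_1\ne x_2$ means the preference over $v_a$ depends on $v_b$, i.e.\ $v_b\in Pa(v_a)$; acyclicity then forbids $v_a\in Pa(v_b)$, so the preference over $v_b$ cannot depend on $v_a$, whence $x_3$ and $x_4$ are tied by a fixed relation. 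Using $k\ge 1$ I would verify that all eight labelings of $S$ are realizable (when $x_1=x_2$ one routes the dependence the other way, $v_a$ being unconditional and the order on $v_b$ free; when $x_1\ne x_2$ the order on $v_b$ is still free, so $x_3$ is unconstrained), so $S$ is shattered. However, the two labelings of $S$ with $x_1\ne x_2$ and opposite values of $x_3$ force opposite values of $x^\ast$; since both must appear, $S$ is not strongly shattered.

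The main obstacle is conceptual rather than computational: unlike a fixed Boolean cube, a concept in $\mathcal{C}_{ac}^k$ may assign different parent sets to the same variable, so the label of an instance is not tied to a fixed ``cell'' and $x^\ast$ cannot be pinned down by a naive counting argument. The acyclicity constraint between the two chosen variables is precisely what restores a rigid functional link between $x_3$ and $x^\ast$, and making that link explicit is the crux. A minor technical point, handled exactly as the two-case analysis in Proposition~\ref{prop:intersectionsep}, is that $\mathcal{X}_{swap}$ retains only one of each pair $(o,o'),(o',o)$; the stored orientation merely relabels the patterns and never destroys the pair of conflicting constraints on $x^\ast$, so a short case distinction on orientation completes the argument. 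Lemma~\ref{lem:complement} may additionally be invoked to streamline the realizability checks, since complementing a witnessing CP-net stays within $\mathcal{C}_{ac}^k$.
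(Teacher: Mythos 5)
Your proposal is correct, but it proves the statement by a genuinely different route than the paper. The paper's proof is short and abstract: it takes \emph{any} shattered set $X$ of maximum size $|X|=\VCdim(\mathcal{C}_{ac}^k)$, fixes some $\hat{x}\in\mathcal{X}_{swap}\setminus X$ (which exists since $\mathcal{X}_{swap}$ itself is not shattered), and argues that strong shattering of $X$ would pin $\hat{x}$ to a constant label $\hat{l}$ across a subfamily realizing all labelings of $X$; applying Lemma~\ref{lem:complement} (closure of $\mathcal{C}_{ac}^k$ under complementation) to that subfamily yields a second subfamily realizing all labelings of $X$ with label $1-\hat{l}$ on $\hat{x}$, so $X\cup\{\hat{x}\}$ would be shattered, contradicting $|X|=\VCdim(\mathcal{C}_{ac}^k)$. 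You instead construct explicit local witnesses: for $m\ge 3$ a two-element set of swaps over one variable whose third ``closing'' swap is forced by transitivity of the total order on $D_v$ (the same cycle/transitivity mechanism as in Lemma~\ref{lem:VCDfixedcontext}), and for $m=2$, $k\ge 1$ a three-element set across two variables where the dependence $v_b\in Pa(v_a)$, combined with acyclicity forbidding $v_a\in Pa(v_b)$, rigidly links $x_3$ to $x_4$ (echoing the two-variable cycle obstruction in Proposition~\ref{prop:intersection}); your orientation-of-$\mathcal{X}_{swap}$ case analysis is handled correctly, since the link between the forced instance and the shattered set is always the identity or negation, either of which kills strong shattering. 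What each approach buys: the paper's argument is uniform in $(m,k)$, needs no case split, and in fact applies to any complement-closed class that does not shatter its full instance space, so it isolates complement closure as the structural culprit; your argument is constructive, exhibits concrete non-strongly-shattered sets of size $2$ and $3$, makes visible exactly where the excluded case $(m,k)=(2,0)$ enters (you need $k\ge 1$ to realize the conflicting labelings in the binary branch, while the $m\ge 3$ branch works even for $k=0$), and in the $m\ge 3$ branch does not use complement closure at all. Both proofs are sound; yours is longer but more informative about the combinatorial geometry of the class.
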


\begin{proof}
Let $X\subseteq\mathcal{X}_{swap}$ be a set of instances that is shattered by $\mathcal{C}^k_{ac}$, such that $|X|=\VCdim(\mathcal{C}^k_{ac})$. Since $\mathcal{X}_{swap}$ is not shattered by $\mathcal{C}^k_{ac}$, we can fix some $\hat{x}\in\mathcal{X}_{swap}\setminus X$. Moreover, let $\vec{x}=(x^1,\ldots,x^{|X|})$ be any fixed sequence of all and only the elements in $X$, without repetitions. 

Suppose $\mathcal{C}_{ac}^k$ were extremal. Then $X$ is strongly shattered, so that, in particular, there exists some $\hat{l}\in\{0,1\}$ such that, for each choice of $(l_1,\ldots,l_{|X|})\in\{0,1\}^{|X|}$, the labeling $(l_1,\ldots,l_{|X|},\hat{l})$ of the instance vector $(x^1,\ldots,x^{|X|},\hat{x})$ is realized by $\mathcal{C}^k_{ac}$. Lemma~\ref{lem:complement} then implies that, for each choice of $(l_1,\ldots,l_{|X|})\in\{0,1\}^{|X|}$, the labeling $(1-l_1,\ldots,1-l_{|X|},1-\hat{l})$ of the instance vector $(x^1,\ldots,x^{|X|},\hat{x})$ is realized by $\mathcal{C}^k_{ac}$. This is equivalent to saying that, for each choice of $(l_1,\ldots,l_{|X|})\in\{0,1\}^{|X|}$, the labeling $(l_1,\ldots,l_{|X|},1-\hat{l})$ of the instance vector $(x^1,\ldots,x^{|X|},\hat{x})$ is realized by $\mathcal{C}^k_{ac}$. To sum up, for each choice of $(l_1,\ldots,l_{|X|})\in\{0,1\}^{|X|}$, both $(l_1,\ldots,l_{|X|},\hat{l})$  and $(l_1,\ldots,l_{|X|},1-\hat{l})$ as labelings of the instance vector $(x^1,\ldots,x^{|X|},\hat{x})$ are realized by $\mathcal{C}^k_{ac}$. This means that $X\cup\{\hat{x}\}$ is shattered by $\mathcal{C}^k_{ac}$, in contradiction to $|X|=\VCdim(\mathcal{C}^k_{ac})$.
\end{proof}

As a last result of our study of structural properties of CP-net classes, we show that $\mathcal{C}_{ac}^k$ is not intersection-closed, when $k\ge 1$ or when $m\ge 3$. 

\begin{proposition}\label{prop:intersection}
Let $n\ge 1$, $m\ge 2$, and $0\le k\le{n-1}$, where $(m,k)\ne(2,0)$. Then the concept class $\mathcal{C}_{ac}^k$ of all complete $k$-bounded acyclic CP-nets over $n$ variables of domain size $m$, over the instance space $\mathcal{X}_{swap}$, is not intersection-closed.
\end{proposition}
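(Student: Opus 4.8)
The plan is to handle the two regimes $m\ge 3$ and $m=2$ separately, because the obstruction to intersection-closedness has a different nature in each.

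First I would treat the case $m\ge 3$ (with $k$ arbitrary) by recycling the construction in the proof of Proposition~\ref{prop:intersectionsep}, but with one extra care. Since separable CP-nets are $k$-bounded for every $k\ge 0$, the two witnessing CP-nets $c_1,c_2$ built there already lie in $\mathcal{C}_{ac}^k$; what remains is to argue that their intersection is excluded from $\mathcal{C}_{ac}^k$ and not merely from the separable subclass. To this end I would fix a single variable $v$ and a single context $\gamma\in\mathcal{O}_{V\setminus\{v\}}$, and select the three swaps $x_1,x_2,x_3\in\mathcal{X}_{swap}$ over $v$ realizing the three unordered pairs $\{a_1,a_2\},\{a_1,a_3\},\{a_2,a_3\}$ of distinct domain values, all carrying the same context $\gamma$; this is possible because, for fixed $v$ and $\gamma$, the space $\mathcal{X}_{swap}$ contains exactly one oriented swap per unordered pair of domain values. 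Up to renaming $a_1,a_2,a_3$, the orientation imposed by $\mathcal{X}_{swap}$ on these pairs falls into the two cases of Proposition~\ref{prop:intersectionsep}, and in each case the separable CP-nets constructed there force $c_1\cap c_2$ to entail the cyclic chain $a_2\succ a_1\succ a_3\succ a_2$ \emph{within the single conditional relation} $\succ^{v}_{\gamma}$. Since a complete CP-net must impose a total order on $D_v$ for every fixed context, no complete CP-net (of any indegree) can realize this cycle, so $c_1\cap c_2\notin\mathcal{C}_{ac}^k$.

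Next I would treat the remaining case $m=2$, where the hypothesis $(m,k)\ne(2,0)$ forces $k\ge 1$ and hence $n\ge 2$. Here no cycle can arise inside a binary domain, so the obstruction must come from the dependency graph instead. I would pick two variables $A,B$ and define $c_1$ by making $B$ the only parent of $A$, with $\CPT(A)=\{b:a\succ\bar a,\ \bar b:\bar a\succ a\}$, while $B$ and all other variables are unconditional; symmetrically, $c_2$ makes $A$ the only parent of $B$, with $\CPT(B)=\{a:b\succ\bar b,\ \bar a:\bar b\succ b\}$, and $A$ and the rest unconditional, the unconditional tables being chosen identically in $c_1$ and $c_2$. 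Both are acyclic and $1$-bounded, hence in $\mathcal{C}_{ac}^k$. Computing the intersection swap-by-swap shows that in $c_1\cap c_2$ the preference over $A$ still flips with the value of $B$ (it is $a\succ\bar a$ under $B=b$ but $\bar a\succ a$ under $B=\bar b$), and, symmetrically, the preference over $B$ flips with the value of $A$; the other variables stay unconditional. Consequently any CP-net representing $c_1\cap c_2$ must list $B\in Pa(A)$ and $A\in Pa(B)$, i.e.\ contain the $2$-cycle $A\leftrightarrow B$, so it cannot be acyclic and $c_1\cap c_2\notin\mathcal{C}_{ac}^k$.

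The routine part is the swap-by-swap bookkeeping of labels under the fixed orientation of $\mathcal{X}_{swap}$, which I would replace by displaying the two explicit conditional tables. I expect the only genuinely delicate step to be the non-binary case: one must ensure the three chosen swaps all share the \emph{same} context $\gamma$, for otherwise a non-separable target of larger indegree could satisfy the three forced orientations using three \emph{different} rows of a CPT and thereby avoid any cycle—so the argument of Proposition~\ref{prop:intersectionsep}, which ignores contexts because it only rules out separable CP-nets, does not by itself settle the claim for $\mathcal{C}_{ac}^k$. Pinning the common context is exactly what upgrades the domain-order cycle into a cycle inside one conditional preference relation, which is what no complete CP-net of any indegree can realize.
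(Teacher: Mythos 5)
Your $m\ge 3$ branch is correct, and it is a genuinely different route from the paper's. The paper proves the proposition via a two-variable construction whose intersection forces the dependency cycle $A\leftrightarrow B$, handling the arbitrary orientation of the relevant swap pairs in $\mathcal{X}_{swap}$ by an eight-case analysis (Cases 1 and 7 are spelled out in Table~\ref{tab:intersection}, the rest follow by inverting labels); that construction needs $k\ge 1$, with the case $k=0$, $m>2$ resting on Proposition~\ref{prop:intersectionsep}. Your idea of pinning one common context $\gamma$, so that the forced cyclic triple $a_2\succ a_1\succ a_3\succ a_2$ must sit inside a \emph{single} $\CPT$ row --- necessarily a total order in any complete CP-net, of any indegree --- settles all $k$ at once (including $k=0$ and even $n=1$) for $m\ge 3$, and it correctly handles the orientation convention by reducing to the two tournament types on a triangle. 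Your diagnosis of why Proposition~\ref{prop:intersectionsep} by itself does not settle $k>0$ is also exactly right.

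The binary branch, however, has a genuine gap: you fix the two witnesses' $\CPT$s once and for all and assert that the conditional flip survives intersection, but that assertion depends on the arbitrary-but-fixed orientation of $\mathcal{X}_{swap}$, which you are not free to choose. Over $\mathcal{X}_{swap}$ a label $0$ does not mean ``no preference'' but the \emph{reversed} preference, so intersecting concepts is not intersecting entailed relations. Concretely, take $n=2$ and suppose $\mathcal{X}_{swap}$ contains the instances $(ab,\bar{a}b)$, $(\bar{a}\bar{b},a\bar{b})$, $(ab,a\bar{b})$, $(\bar{a}\bar{b},\bar{a}b)$, in this orientation. With your $c_1$ (where $\CPT(A)=\{b:a\succ\bar{a};\ \bar{b}:\bar{a}\succ a\}$ and $B$ is unconditional, say $b\succ\bar{b}$) and your $c_2$ (where $\CPT(B)=\{a:b\succ\bar{b};\ \bar{a}:\bar{b}\succ b\}$ and $A$ is unconditional, say $a\succ\bar{a}$), the label vectors on these four instances are $c_1=(1,1,1,0)$ and $c_2=(1,0,1,1)$, so $c_1\cap c_2=(1,0,1,0)$ --- which decodes to $a\succ\bar{a}$ under both values of $B$ and $b\succ\bar{b}$ under both values of $A$, i.e., to the separable CP-net $\{a\succ\bar{a},\,b\succ\bar{b}\}$, a member of $\mathcal{C}_{ac}^k$. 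Flipping your unspecified unconditional tables does not rescue this orientation; one must instead reverse the \emph{conditional} table, i.e., choose the witnesses as a function of the orientation. Repairing this forces precisely the case analysis over orientations that the paper carries out --- the very point your $m\ge 3$ branch respects but your $m=2$ branch silently skips.
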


\begin{proof}
Let $A$ and $B$ be two distinct variables. Let $a_1$ and $a_2$ be two distinct values in $D_A$, and $b_1$ and $b_2$ two distinct values in $D_B$. Further, let $\gamma$ be any fixed context over the remaining variables, i.e., those in $V\setminus\{A,B\}$. We will argue over the possible preferences over outcomes of the form $ab\gamma$, where $a\in\{a_1,a_2\}$ is an assignment to $A$ and $b\in\{b_1,b_2\}$ is an assignment to $B$. 

Without loss of generality, suppose that $\mathcal{X}_{swap}$ contains the instance $(a_1b_1\gamma,a_1b_2\gamma)$ (instead of $(a_1b_2\gamma,a_1b_1\gamma)$.) If that were not the case, one could rename variables and values accordingly to make $\mathcal{X}_{swap}$ contain the instance $(a_1b_1\gamma,a_1b_2\gamma)$.

There are then various cases to consider for the swap pairs representing the comparisons between outcomes of the form $ab\gamma$, where $a\in\{a_1,a_2\}$ and $b\in\{b_1,b_2\}$. For each case, we provide two concepts $c_1,c_2\in\mathcal{C}^1_{ac}$ for which $c_1\cap c_2$ is not acyclic and thus is not in $\mathcal{C}^k_{ac}$ for any $k$.

\begin{itemize}
\item[]
Case 1. $\mathcal{X}_{swap}$ contains $(a_1b_1\gamma,a_2b_1\gamma)$, $(a_2b_1\gamma,a_2b_2\gamma)$, and $(a_1b_2\gamma,a_2b_2\gamma)$.\\
Case 2. $\mathcal{X}_{swap}$ contains $(a_1b_1\gamma,a_2b_1\gamma)$, $(a_2b_2\gamma,a_2b_1\gamma)$, and $(a_1b_2\gamma,a_2b_2\gamma)$.\\
Case 3. $\mathcal{X}_{swap}$ contains $(a_1b_1\gamma,a_2b_1\gamma)$, $(a_2b_1\gamma,a_2b_2\gamma)$, and $(a_2b_2\gamma,a_1b_2\gamma)$.\\
Case 4. $\mathcal{X}_{swap}$ contains $(a_1b_1\gamma,a_2b_1\gamma)$, $(a_2b_2\gamma,a_2b_1\gamma)$, and $(a_2b_2\gamma,a_1b_2\gamma)$.\\
Case 5. $\mathcal{X}_{swap}$ contains $(a_2b_1\gamma,a_1b_1\gamma)$, $(a_2b_1\gamma,a_2b_2\gamma)$, and $(a_1b_2\gamma,a_2b_2\gamma)$.\\
Case 6. $\mathcal{X}_{swap}$ contains $(a_2b_1\gamma,a_1b_1\gamma)$, $(a_2b_2\gamma,a_2b_1\gamma)$, and $(a_1b_2\gamma,a_2b_2\gamma)$.\\
Case 7. $\mathcal{X}_{swap}$ contains $(a_2b_1\gamma,a_1b_1\gamma)$, $(a_2b_1\gamma,a_2b_2\gamma)$, and $(a_2b_2\gamma,a_1b_2\gamma)$.\\
Case 8. $\mathcal{X}_{swap}$ contains $(a_2b_1\gamma,a_1b_1\gamma)$, $(a_2b_2\gamma,a_2b_1\gamma)$, and $(a_2b_2\gamma,a_1b_2\gamma)$.\\
\end{itemize}

Cases 1 and 7 are discussed in Table~\ref{tab:intersection}. A violation of the property of intersection-closedness in Cases 2, 3, and 4 can then be immediately deduced from Case 1 by inverting the binary values in the table for column 3, column 4, both columns 3 and 4, respectively. In the same way, Cases 8, 5, and 6 can be handled following Case 7.
\end{proof}

\begin{table}\centering
\footnotesize{
\begin{tabular}{|c||c|c|c|c||c|c|}
\hline
Case 1&1&2&3&4&5&6\\
&$(a_1b_1\gamma,a_1b_2\gamma)$&$(a_1b_1\gamma,a_2b_1\gamma)$&$(a_2b_1\gamma,a_2b_2\gamma)$&$(a_1b_2\gamma,a_2b_2\gamma)$&$\CPT(A)$&$\CPT(B)$\\\hline\hline
$c_1$&0&1&1&1&$a_1\succ a_2$&$a_1:b_2\succ b_1$\\
&&&&&&$a_2:b_1\succ b_2$\\\hline
$c_2$&1&0&1&1&$b_1:a_2\succ a_1$&$b_1\succ b_2$\\
&&&&&$b_2:a_1\succ a_2$&\\\hline
$c_1\cap c_2$&0&0&1&1&$b_1:a_2\succ a_1$&$a_1:b_2\succ b_1$\\
&&&&&$b_2:a_1\succ a_2$&$a_2:b_1\succ b_2$\\\hline\hline
Case 7&1&2&3&4&5&6\\
&$(a_1b_1\gamma,a_1b_2\gamma)$&$(a_2b_1\gamma,a_1b_1\gamma)$&$(a_2b_1\gamma,a_2b_2\gamma)$&$(a_2b_2\gamma,a_1b_2\gamma)$&$\CPT(A)$&$\CPT(B)$\\\hline\hline
$c_1$&0&1&1&1&$a_2\succ a_1$&$a_1:b_2\succ b_1$\\
&&&&&&$a_2:b_1\succ b_2$\\\hline
$c_2$&1&0&1&1&$b_1:a_1\succ a_2$&$b_1\succ b_2$\\
&&&&&$b_2:a_2\succ a_1$&\\\hline
$c_1\cap c_2$&0&0&1&1&$b_1:a_1\succ a_2$&$a_1:b_2\succ b_1$\\
&&&&&$b_2:a_2\succ a_1$&$a_2:b_1\succ b_2$\\\hline
\end{tabular}
}
\caption{Cases 1 and 7 in the proof of Proposition~\ref{prop:intersection}. Columns 1 through 4 provide binary labels stating which of the four swap instances considered are contained in a concept. Column 5 provides the statements in the corresponding CPT for $A$, while column 6 does the same for $B$. Concepts $c_1$ and $c_2$ belong to $\mathcal{C}^1_{ac}$, but $c_1\cap c_2$ has a cycle, in which $A$ is a parent of $B$ and vice versa.}\label{tab:intersection}
\end{table}

To conclude, there are no known structure-related theorems in the literature that would imply $\VCdim(\mathcal{C}^{n-1}_{ac})=\RTD(\mathcal{C}^{n-1}_{ac})$. Hence, the latter equation, which we have proven in Section~\ref{sec:complete}, is of interest, as it makes the class of complete unbounded acyclic CP-nets the first ``natural'' class known in the literature for which $\VCdim$ and $\RTD$ coincide. A deeper study of its structural properties might lead to new insights into the relationship between $\VCdim$ and $\RTD$ and might thus address open problems in the field of computational learning theory~\cite{SimonZ15}.

\end{document}